\DeclareRobustCommand{\mb}[1]{\boldsymbol{#1}}
\DeclareRobustCommand{\KL}[2]{\ensuremath{\textrm{D}_{\textrm{KL}}\left(#1\;\|\;#2\right)}}
\DeclareMathOperator*{\argmax}{arg\,max}
\DeclareMathOperator*{\argmin}{arg\,min}
\renewcommand{\mid}{~\vert~}
\newcommand{\mbb}{\mb{b}}
\newcommand{\mbc}{\mb{c}}
\newcommand{\mbe}{\mb{e}}
\newcommand{\mbh}{\mb{h}}
\newcommand{\mbr}{\mb{r}}
\newcommand{\mbu}{\mb{u}}
\newcommand{\mbw}{\mb{w}}
\newcommand{\mbx}{\mb{x}}
\newcommand{\mby}{\mb{y}}
\newcommand{\mbz}{\mb{z}}
\newcommand{\mbalpha}{\mb{\alpha}}
\newcommand{\mbbeta}{\mb{\beta}}
\newcommand{\mbepsilon}{\mb{\epsilon}}
\newcommand{\mblambda}{\mb{\lambda}}
\newcommand{\mbmu}{\mb{\mu}}
\newcommand{\mbSigma}{\mb{\Sigma}}
\newcommand{\bbE}{\mathbb{E}}
\newcommand{\bbI}{\mathbb{I}}
\newcommand{\bbN}{\mathbb{N}}
\newcommand{\bbP}{\mathbb{P}}
\newcommand{\bbQ}{\mathbb{Q}}
\newcommand{\bbR}{\mathbb{R}}
\newcommand{\cA}{\mathcal{A}}
\newcommand{\cL}{\mathcal{L}}
\newcommand{\cM}{\mathcal{M}}
\newcommand{\cN}{\mathcal{N}}
\newcommand{\cO}{\mathcal{O}}
\newcommand{\cP}{\mathcal{P}}
\newcommand{\cX}{\mathcal{X}}
\newcommand{\thetabase}{\theta_{\mathrm{base}}}
\newcommand{\gemmaName}{Gemma-2-9B-IT\xspace}
\theoremstyle{plain}
\newtheorem{theorem}{Theorem}[section]
\newaliascnt{proposition}{theorem}
\newtheorem{proposition}[proposition]{Proposition}
\newaliascnt{lemma}{theorem}
\newtheorem{lemma}[lemma]{Lemma}
\newaliascnt{corollary}{theorem}
\theoremstyle{definition}
\newaliascnt{definition}{theorem}
\newaliascnt{assumption}{theorem}
\newtheorem{assumption}[assumption]{Assumption}
\theoremstyle{remark}
\newaliascnt{remark}{theorem}
\crefname{theorem}{Theorem}{Theorems}
\Crefname{theorem}{Theorem}{Theorems}
\crefname{proposition}{Proposition}{Propositions}
\Crefname{proposition}{Proposition}{Propositions}
\crefname{lemma}{Lemma}{Lemmas}
\Crefname{lemma}{Lemma}{Lemmas}
\crefname{corollary}{Corollary}{Corollaries}
\Crefname{corollary}{Corollary}{Corollaries}
\crefname{definition}{Definition}{Definitions}
\Crefname{definition}{Definition}{Definitions}
\crefname{assumption}{Assumption}{Assumptions}
\Crefname{assumption}{Assumption}{Assumptions}
\crefname{remark}{Remark}{Remarks}
\Crefname{remark}{Remark}{Remarks}
\newenvironment{boxedtheorem}
  {\begin{mdframed}[style=thmframe]\begin{theorem}}
  {\end{theorem}\end{mdframed}}
\newcommand{\LeftComment}[1]{\STATE  {$\triangleright$ #1}}
\icmltitlerunning{Calibrating Generative Models to Distributional Constraints}
\begin{document}

\twocolumn[
  \icmltitle{Calibrating Generative Models to Distributional Constraints}



  \icmlsetsymbol{equal}{*}

  \begin{icmlauthorlist}
    \icmlauthor{Henry D.~Smith}{stan}
    \icmlauthor{Nathaniel L.~Diamant}{stan}
    \icmlauthor{Brian L.~Trippe}{stan}
  \end{icmlauthorlist}

  \icmlaffiliation{stan}{Stanford University, Palo Alto, CA USA}

  \icmlcorrespondingauthor{Henry Smith}{smithhd@stanford.edu}
  \icmlcorrespondingauthor{Nate Diamant}{diamant@stanford.edu}
  \icmlcorrespondingauthor{Brian Trippe}{btrippe@stanford.edu}

  \icmlkeywords{fine-tuning, diffusion, language model, LLM, normalizing flow, reward fine-tuning, protein structure generation, generative model}
  \vskip 0.3in
]



\printAffiliationsAndNotice{}  

\begin{abstract}
Generative models frequently suffer miscalibration, wherein statistics of the sampling distribution---such as the fraction of generations in a given class---deviate from desired values.
We frame calibration as a constrained optimization problem and seek the closest model in Kullback-Leibler divergence satisfying a calibration constraint. To address the intractability of imposing these constraints exactly, we introduce two surrogate objectives for  fine-tuning: (1) the relax loss, which replaces the constraint with a miscalibration penalty, and (2) the reward loss, which converts calibration into a reward fine-tuning problem.
We demonstrate that these approaches substantially reduce calibration error across hundreds of simultaneous constraints and models with up to nine billion parameters, spanning applications in protein design, image generation, and language modeling.
Code is available at \url{https://github.com/smithhenryd/cgm}.
\end{abstract}
\section{Introduction}\label{sec:introduction}
Generative models commonly produce samples whose statistics deviate systematically from desired values.
Such \emph{miscalibration} occurs across many domains.
Image models, such as GANs and diffusion models, exhibit mode collapse, producing images that cover only a subset of the training distribution \citep{arora2017gans, qin2023class}.
Language models represent gender, race, religion, and age in ways that reinforce societal biases \citep{gallegos2024bias}.
In synthetic biology applications, protein structure models produce samples that have alpha-helical and beta-strand substructures at frequencies atypical of proteins found in nature \citep{lu2025assessing}, and DNA models generate samples that contain subsequences at frequencies that differ from those in human DNA \citep{sarkar2024designing}.
These calibration errors arise from many sources including dataset imbalances, suboptimal training dynamics, and post-hoc adjustments such as low-temperature sampling or preference fine-tuning.

We frame calibration as a constrained optimization problem: find the distribution closest in Kullback-Leibler (KL) divergence to the base model that satisfies a set of expectation constraints.
We introduce two fine-tuning algorithms, \textbf{CGM-relax} and \textbf{CGM-reward} (``calibrating generative models''), that approximately solve the calibration problem by stochastic optimization. 
We demonstrate across three applications that CGM effectively calibrates high-dimensional generative models to meet hundreds of simultaneous constraints.

\textbf{Problem statement.}
Consider a trained ``base'' generative model $p_{\thetabase}(\mbx)$ with parameters $\thetabase$, a statistic $\mbh(\mbx)$, and an expectation value desired for the statistic $\mbh^*$.
We say $p_{\thetabase}$ is \emph{calibrated} if $\mathbb{E}_{p_{\thetabase}}[\mbh(\mbx)]=\mbh^\ast$ and \emph{miscalibrated} if $\mathbb{E}_{p_{\thetabase}}[\mbh(\mbx)]\neq \mbh^\ast$.
In the case that $p_{\thetabase}$ is miscalibrated, our goal is to fine-tune its parameters $\thetabase$ to some $\theta$ such that $p_{\theta}$ is calibrated.

For example, if $\mbh(\mbx) = \mathbbm{1}\{\mbx \in C\}$ is the 0-1 function indicating whether $\mbx$ belongs to class $C$, then $\mathbb{E}_{p_{\thetabase}}[\mbh(\mbx)] = p_{\thetabase}(\mbx \in C)$ is the probability that $p_{\thetabase}$ generates a member of class $C$.
When $\mbh^\ast > \mathbb{E}_{p_{\thetabase}}[\mbh(\mbx)]$, calibration corresponds to increasing the probability of class $C$.

For a given $\mbh(\cdot)$ and $\mbh^\ast$, many calibrated models may exist.
Provided a calibrated model exists, we seek the one that is closest to the base model in KL divergence,
\begin{equation}\label{eq:prob-calibration}
p_{\theta^\ast} = \underset{p_{\theta} }{\argmin} \  \KL{p_{\theta}}{ p_{\thetabase}} \ \text{s.t.} \ \bbE_{p_{\theta}}[\mbh(\mbx)]=\mbh^\ast,
\end{equation}
where $\KL{p^\prime}{p} =\bbE_{p^\prime}[\log p^\prime(\mbx)/p(\mbx)]$ for $p'$ with a probability density with respect to $p$.
Out of many possible notions of distance we choose $\mathrm{D}_{\mathrm{KL}}$ because it is tractable for several classes of generative models.

\textbf{Related work.}
Beyond the context of generative modeling, calibration is a major topic in supervised machine-learning \citep[e.g.,][]{lichtenstein1977calibration, dawid1982well, naeini2015obtaining, guo2017calibration, vaicenavicius2019evaluating}.
In this literature, a model is calibrated if predicted probabilities match empirical frequencies on future data; this definition can be expressed as an expectation constraint.
A variety of post-hoc methods enforce this property on the outputs of a predictive model, including Platt scaling \citep{platt1999probabilistic} and conformal prediction \citep{shafer2008tutorial}. These approaches do not apply our setting, where the goal is to alter the distribution of the generative model.

Within the generative modeling community, there are a wealth of fine-tuning methods that incorporate preferences at the level of individual samples through a user-specified reward 
\citep[][]{christiano2017deep, rafailov2023direct, uehara2024fine, domingo2025adjoint}.
These methods do not address calibration, for which the goal is to impose a constraint on the distribution $p_\theta$ rather than its samples $\mbx$.

Two prior works \citep{khalifa2020distributional,shen2024finetuning} and one concurrent work \cite{khalafi2026composition} 
propose fine-tuning procedures for distribution-level constraints.
\citet{khalifa2020distributional} fine-tunes autoregressive language models to match distributional constraints with an algorithm similar to CGM-reward.
\citet{shen2024finetuning} propose a method for balancing class proportions in diffusion models 
that relies upon optimal transport.
\citet{khalafi2026composition} fine-tunes diffusion models with a primal-dual algorithm that is closely related to CGM-relax. 
However, all methods apply only to a single model class and are validated only on low-dimensional constraints.
We show that CGM applies broadly to generative models with tractable likelihoods and reduces a majority of miscalibration for high-dimensional constraints.

\Cref{app:sec:additional_related_work} provides extended discussion of related work.
\section{Calibrating Generative Models with CGM-relax and CGM-reward}
The calibration problem is challenging in general because both the objective and calibration constraint in \cref{eq:prob-calibration} are defined by intractable expectations.
To address this problem, we propose two alternative objectives whose \textit{unconstrained} optima approximate the solution to  \eqref{eq:prob-calibration}.
These objectives still involve expectations under $p_{\theta}$, but we show how to compute unbiased estimates of their gradients, which permits their minimization by stochastic optimization.

We call our algorithms optimizing the two surrogate loss functions CGM-relax and CGM-reward (\Cref{alg:cgm-relax,alg:cgm-reward}, respectively). 
These algorithms require only that one can draw samples $\mbx \sim p_\theta$ and compute $p_\theta(\mbx)$ and $\nabla_\theta \log p_\theta(\mbx)$; importantly, the constraint $\mbh$ need not be differentiable.

\subsection{The Relax Loss
}\label{subsec:loss-relax}
The relax loss avoids the intractability of imposing the calibration constraint exactly by replacing it with a constraint violation penalty
{\small
\begin{equation}\label{eq:loss-relaxed}
  \mathcal{L}^{\mathrm{relax}}(\theta) := 
      \underbrace{\left\|\,
      \mathbb{E}_{p_\theta}[\mbh(\mbx)]-\mbh^\ast
      \right\|^2}_{\cL^{\mathrm{viol}}} +  \lambda \underbrace{\KL{p_{\theta}}{p_{\thetabase}}}_{\cL^{\mathrm{KL}}},
\end{equation}
}
where $\lambda > 0$ is a hyperparameter.
$\cL^{\mathrm{viol}}$ is the constraint violation of the fine-tuned model, and $\cL^{\mathrm{KL}}$ is the KL divergence of the fine-tuned model to the base model.
In the limit as $\lambda \to 0$, $\cL^{\mathrm{viol}}$ is the dominant term in the relax loss, and we expect the minimizer of \eqref{eq:loss-relaxed} to approach the solution of the calibration problem \eqref{eq:prob-calibration}.
In practice, finite $\lambda$ must be chosen, and in our experiments we find $\lambda$ trades off between satisfying the calibration constraint and remaining close to $p_{\thetabase}$.
\Cref{sec:synthetic-exp} provides a heuristic for the choice of $\lambda$.

\begin{figure}[t]\label{fig:overview}
  \vspace{0mm}
  \begin{center}
  \includegraphics[width=\columnwidth]{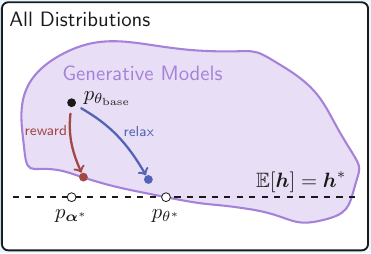}
  \caption{CGM aims to identify the generative model $p_{\theta^*}$, among those that satisfy the moment constraint, that is closest to $p_{\thetabase}$. 
  $p_{\theta^*}$ is related to the maximum entropy distribution $p_{\mbalpha^*}$.
  }
  \end{center}
  \vspace{-5mm}
\end{figure}

We choose the constraint penalty $\cL^{\mathrm{viol}}$ to be a squared $\ell_2$ norm because it is amenable to unbiased estimation.
Suppose we have $M$ independent samples $\{\mbx_m\}_{m=1}^M$ from $p_{\theta}$.
An unbiased estimate of $\cL^{\mathrm{viol}}$ is then
\begin{equation}\label{eqn:L-viol-est}
    \begin{aligned}
    &\widehat{\cL}^{\mathrm{viol}} := \bigg\| \frac{1}{M} \sum_{m=1}^M \mbh(\mbx_m)   - \mbh^* \bigg\|^2\\
    &- \frac{1}{M(M-1)} \sum_{m=1}^M \bigg\|\mbh(\mbx_m) - \frac{1}{M}\sum_{m'=1}^M \mbh(\mbx_{m'})\bigg\|^2,
    \end{aligned}
\end{equation}
where the second term is a bias correction.
An unbiased estimate $\widehat{\cL}^{\mathrm{KL}}$ to the KL divergence can likewise be obtained as a Monte Carlo average from the same $M$ samples.

Combining these estimators yields our overall estimator for the relax objective, $\widehat{\cL}^{\mathrm{relax}} = \widehat{\cL}^{\mathrm{viol}} +  \lambda \widehat{\cL}^{\mathrm{KL}}$.
\Cref{app:sec:unbiased-loss} shows $\widehat{\cL}^{\mathrm{relax}}$ is unbiased for $\cL^{\mathrm{relax}}$.

\subsection{The Reward Loss
}\label{subsec:loss-reward}
The reward loss avoids the intractability of imposing the calibration constraint exactly through a connection between the calibration problem and the \textit{maximum entropy problem} \citep{jaynes1957information, kullback1959information, csiszar1975divergence}.
We first introduce the maximum entropy problem and show how to approximate its solution with samples from $p_{\thetabase}$. 
We then propose the reward loss as a divergence to this approximate solution and describe connections to reward fine-tuning.

\textbf{Maximum entropy problem.} The maximum entropy problem solves
\begin{equation}\label{eq:prob-max-entropy}
    \underset{p \in \mathcal{P}(p_{\thetabase})}{\argmin}  \KL{p}{p_{\thetabase}} \ \text{s.t.} \ \bbE_{p}[\mbh(\mbx)]=\mbh^\ast,
\end{equation}
where $\mathcal{P}(p)$ is the collection of probability distributions that have a density with respect to $p$. 
The calibration problem and the maximum entropy problem differ only in their domains: the domain of the calibration problem is generative models $p_\theta$ in the same parametric class as $p_{\thetabase}$, rather than the nonparametric set $\mathcal{P}(p_{\thetabase})$.
Despite this difference, we obtain an alternative objective by studying the solution to \eqref{eq:prob-max-entropy}.
\Cref{thm:max-entropy} characterizes this solution.
\begin{boxedtheorem}\label{thm:max-entropy}
    Suppose $\boldsymbol{h}^\ast$ lies in the relative interior of the set of moments of $\boldsymbol{h}$ attainable by distributions in $\mathcal{P}(p_{\theta_{\mathrm{base}}})$. Then there exists a vector $\boldsymbol{\alpha}^\ast$ for which    \begin{equation}\label{eq:max-ent-solution}
        p_{\mbalpha^\ast}(\mbx) \propto p_{\theta_{\mathrm{base}}}(\mbx) \exp \left\{ r_{\mbalpha^\ast}(\mbx) \right\}
    \end{equation}  
    with $r_{\mbalpha}(\mbx) = \mbalpha^\top \mbh(\mbx)$ satisfies $\mathbb{E}_{p_{\mbalpha^\ast}}[\boldsymbol{h}(\boldsymbol{x})] = \boldsymbol{h}^\ast$. $p_{\boldsymbol{\alpha}^\ast}$ is the solution to the maximum entropy problem.
\end{boxedtheorem}
\vspace{-3mm}
\Cref{app:sec:max-entropy} provides additional background on the maximum entropy problem and a proof of \Cref{thm:max-entropy}.

\begin{algorithm}[tb]
  \caption{CGM-relax fine-tuning}
  \label{alg:cgm-relax}
  \begin{algorithmic}
    \STATE \textbf{Input}: $p_{\theta_{\mathrm{base}}}, \mbh(\cdot), \mbh^\ast, M,$  and $\lambda$ 
    \vspace{0.2cm}
    \LeftComment{Initialize and optimize}
    \STATE $p_\theta \leftarrow p_{\theta_{\mathrm{base}}}$ 
    \WHILE{not converged}
        
        \LeftComment{Sample and compute weights}
        \STATE $\mbx_{1}, \dots, \mbx_{M}\overset{i.i.d.}{\sim} p_{\texttt{stop-grad}(\theta)}$ 
        \STATE $w_m \leftarrow p_\theta(\mbx_{m})/p_{\texttt{stop-grad}(\theta)}(\mbx_{m})$
        
        \vspace{0.2cm}
        \LeftComment{KL loss with LOO baseline}
        \STATE {\small $l_{m} \leftarrow \log p_{\texttt{stop-grad}(\theta)}(\mbx_{m}) / p_{\theta_{\mathrm{base}}}(\mbx_{m})$}  
        \STATE $l_m^{\mathrm{LOO}} \leftarrow l_m - \frac{1}{M-1} \sum_{m^\prime\neq m} l_{m^\prime}$
        \STATE $\widehat{\cL}^{\mathrm{KL}} \leftarrow \frac{1}{M}\sum w_m\,l_m^{\mathrm{LOO}}$
      
        \vspace{0.2cm}
        \LeftComment{Constraint violation loss}
        \STATE $\mbh_m \leftarrow w_m(\mbh(\mbx_m) - \mbh^\ast)$

        \STATE {\small $\widehat{\cL}^{\mathrm{viol}} \leftarrow \| \frac{1}{M}\sum \mbh_m \|^2 - \frac{1}{M} \widehat{\mathrm{Var}}[\mbh_{1:M}],$}
        \STATE \quad {\small$\widehat{\mathrm{Var}}[\mbh_{1:M}]=\frac{1}{M-1}\sum \|\mbh_m - \frac{1}{M} \sum \mbh_{m^\prime}\|^2$}
      
        \vspace{0.2cm}
        \LeftComment{Total loss and update}
        \STATE $\widehat{\cL}^{\mathrm{relax}} = \lambda \widehat{\cL}^{\mathrm{KL}}  + \widehat{\cL}^{\mathrm{viol}} $
        \STATE $\theta \leftarrow \textrm{gradient-step}(\theta, \nabla_\theta \widehat{\cL}^\mathrm{relax})$
    \ENDWHILE
  \end{algorithmic}
\end{algorithm}

The domain of the calibration problem may not contain $p_{\mbalpha^\ast}$.
However, if the class of generative models is 
expressive, its optimum $p_{\theta^\ast}$ will be close to $p_{\mbalpha^\ast}$.
This observation suggests a second way to remove the constraint in \cref{eq:prob-calibration}: fine-tune $p_\theta$ to minimize a divergence to $p_{\mbalpha^\ast}$.

\textbf{Estimating $p_{\mbalpha^\ast}$.}
Although \Cref{thm:max-entropy} establishes the existence of some $\mbalpha^\ast$ for which $p_{\mbalpha^\ast}$ solves the maximum entropy problem, it does not tell us how to compute it.
To accomplish this, we leverage \citet[][Theorem 3.4]{wainwright2008graphical}, which states that when the assumption of \Cref{thm:max-entropy} holds and there are no redundancies among the constraints $\mbh$,
{\small
\begin{equation}\label{eq:dual-exact}
    \mbalpha^\ast = 
    \argmax_{\mbalpha} \mbalpha^\top \mbh^\ast - \log\left( \bbE_{p_{\thetabase}}[\exp\{ r_{\mbalpha}(\mbx) \}] \right).
\end{equation}
}
In other words, by solving \eqref{eq:dual-exact} one obtains the parameters $\mbalpha^\ast$ of $r_{\mbalpha}(\mbx)$, which then determine the solution $p_{\mbalpha^\ast}$ to the maximum entropy problem up to a normalizing constant. 

\begin{algorithm}[tb]
  \caption{CGM-reward fine-tuning}
  \label{alg:cgm-reward}
  \begin{algorithmic}
    \STATE \textbf{Input}: $p_{\theta_{\mathrm{base}}}, \mbh(\cdot), \mbh^\ast, M, N$
    \LeftComment{Estimate $\mbalpha^\ast$ for reward}
    \STATE $\mbx_{1}, \dots, \mbx_{N}\overset{i.i.d.}{\sim} p_{\theta_{\mathrm{base}}}$ 
    \STATE {\small $\widehat{\mbalpha}_N\leftarrow \argmax \mbalpha^\top \mbh^\ast  - \log \sum \exp \{ r_{\mbalpha}(\mbx_n)\}$}
    
    \vspace{0.2cm}
    \LeftComment{Initialize and optimize}
    \STATE $p_\theta \leftarrow p_{\theta_{\mathrm{base}}}$ 
    \WHILE{not converged}
        \LeftComment{Sample and compute weights}
        \STATE $\mbx_{1}, \dots, \mbx_{M}\overset{i.i.d.}{\sim} p_{\texttt{stop-grad}(\theta)}$
        \STATE $w_m \leftarrow p_\theta(\mbx_{m})/p_{\texttt{stop-grad}(\theta)}(\mbx_{m})$
        
      \vspace{0.2cm}
      \LeftComment{KL loss with LOO baseline}
      \STATE {\small $l_{m} \leftarrow \log p_{\texttt{stop-grad}(\theta)}(\mbx_{m}) / p_{\theta_{\mathrm{base}}}(\mbx_{m})$}
      \STATE $l_m^{\mathrm{LOO}} \leftarrow l_m - \frac{1}{M-1} \sum_{m^\prime\neq m} l_{m^\prime}$ 
      \STATE $\widehat{\cL}^{\mathrm{KL}} \leftarrow \frac{1}{M}\sum w_m\,l_m^{\mathrm{LOO}}$
      
      \vspace{0.2cm}
      \LeftComment{Negative reward with LOO baseline}
      \STATE {\small $r^{\mathrm{LOO}}_m \leftarrow r_{\widehat{\mbalpha}}(\mbx_m) - \frac{1}{M-1} \sum_{m^\prime\neq m} r_{\widehat{\mbalpha}}(\mbx_{m^\prime})$}
      \STATE $\widehat{\cL}^{\mathrm{r}} \leftarrow  -\frac{1}{M}\sum w_mr_m^{\mathrm{LOO}}$
      
      \vspace{0.2cm}
      \LeftComment{Total loss and update}
      \STATE $\widehat{\cL}^{\mathrm{reward}} = \widehat{\cL}^{\mathrm{KL}}  +  \widehat{\cL}^{\mathrm{r}} $
      \STATE $\theta \leftarrow \textrm{gradient-step}(\theta, \nabla_\theta \widehat{\cL}^{\mathrm{reward}})$

    \ENDWHILE
  \end{algorithmic}
\end{algorithm}
A difficulty of solving \eqref{eq:dual-exact} is that the expectation appearing in the second term is intractable for most generative models. 
We propose drawing $N$ independent samples $\{\mbx_n\}_{n=1}^N$ from $p_{\thetabase}$ and replacing the expectation with respect to $p_{\thetabase}$ by the expectation with respect to the empirical distribution that places probability $N^{-1}$ on each of the samples $\mbx_n$,
{\small
\begin{equation}\label{eq:dual-opt}
    \widehat{\mbalpha}_N  = \argmax_{\mbalpha} \mbalpha^\top \mbh^\ast   -  \log\left(\frac{1}{N}\sum_{n=1}^N \exp\{ r_{\mbalpha}(\mbx_n) \} \right).
\end{equation}
}

Problem \eqref{eq:dual-opt} is concave, and when $\widehat{\mbalpha}_N$ is well-defined (see \Cref{app:subsec:max-entropy-estimator}), it can be found by convex solvers.
Moreover, $\widehat{\mbalpha}_N$ converges to $\mbalpha^\ast$ in the limit of many samples $N$ at the parametric rate (\Cref{app:subsec:consistency-normality}).

\begin{figure*}[!ht]
  \centering
\includegraphics[width=0.9\textwidth]{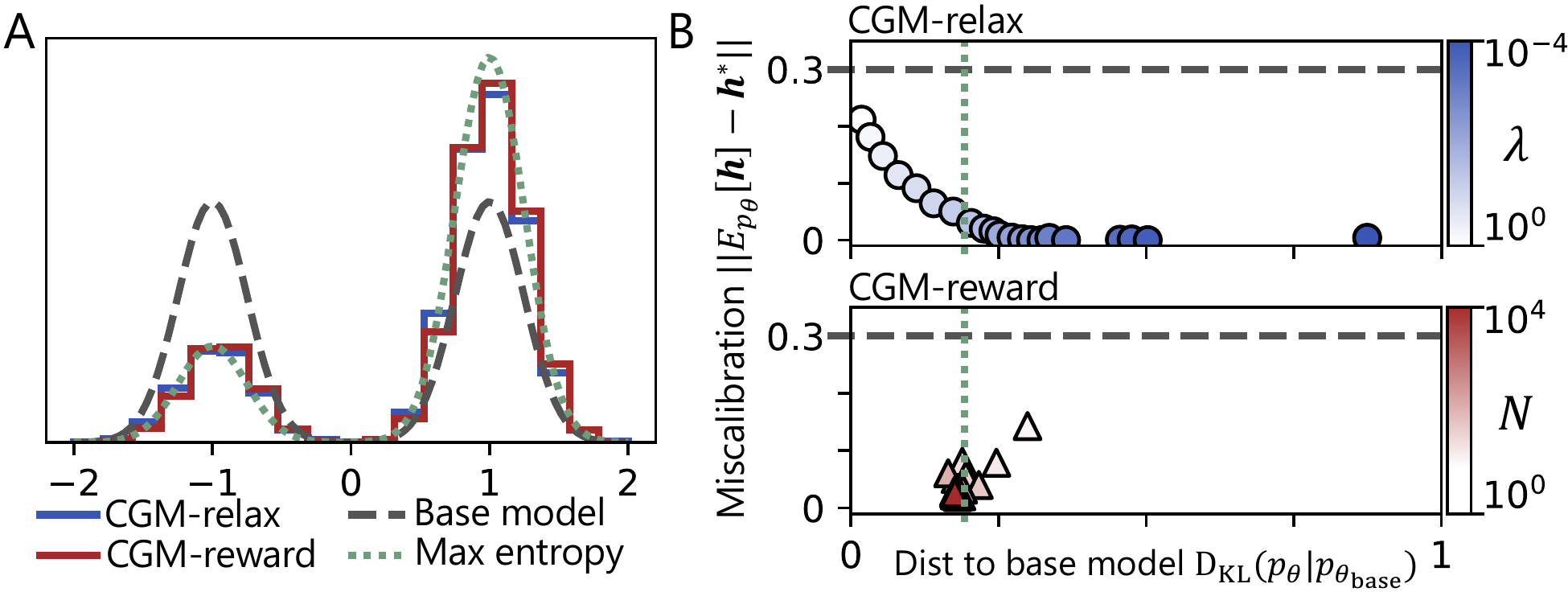}
  \caption{Calibrating mode proportions in a diffusion model targeting a 1D Gaussian mixture. \textbf{A}: Histograms of the CGM-relax and CGM-reward solutions, which closely approximate the maximum entropy solution. \textbf{B}: (top) The CGM-relax regularization parameter $\lambda$ trades off between constraint satisfaction and closeness to the base model (bottom) CGM-reward is accurate when enough samples $N$ are used to estimate $\mbalpha^\ast$.} 
  \label{fig:synthetic-exp-1}
\end{figure*}

\textbf{$\cL^{\textnormal{reward}}$ and its estimation.}
With $\widehat{\mbalpha}_N$ in hand, we formulate our second loss as a divergence to $p_{\widehat{\mbalpha}_N}$.
For simplicity and because it avoids the requirement to compute the normalizing constant of $p_{\widehat{\mbalpha}_N}$,
we again choose the KL divergence
and define the reward loss as
\begin{equation}\label{eq:reward-loss}
    \begin{aligned}
    &\cL^{\mathrm{reward}}(\theta) = \KL{p_\theta}{p_{\widehat{\mbalpha}_N}}\\ &=\underbrace{\bbE_{p_{\theta}}[\log p_\theta(\mbx) /p_{\thetabase}(\mbx)]}_{
     \cL^{\mathrm{KL}}
    }
    + \underbrace{\bbE_{p_{\theta}}[-r_{\widehat{\mbalpha}_N}(\mbx)]}_{\cL^{\mathrm{r}}} + C,
    \end{aligned}
\end{equation}
$\cL^{\mathrm{KL}}$ is the KL divergence to the base model that appears in the relax loss \eqref{eq:loss-relaxed}, and $\cL^{\mathrm{r}}$ is the expected negative reward under $p_{\theta}$.
$C =\log \bbE_{p_{\thetabase}}[\exp\{r_{\widehat{\mbalpha}_N}(\mbx)\}]$ is a log-normalizing constant that does not depend on $
\theta$.

We call $r_{\mbalpha}(\mbx)$ the \textit{reward} and $\cL^{\mathrm{reward}}$ the reward loss because $\cL^{\mathrm{reward}}$ coincides with the objective of reward fine-tuning algorithms. The goal of reward fine-tuning is to fine-tune the base generative model $p_{\thetabase}$ to a tilted version of itself, where the tilt is determined by a reward $r(\mbx)$.

Just as for $\cL^{\mathrm{KL}}$, Monte Carlo sampling provides an unbiased estimate of $\cL^{\mathrm{r}}$. 
This, in turn, gives us an unbiased estimate of the reward loss $\cL^{\mathrm{reward}}$. 

\subsection{Gradient Estimation}\label{sec:unbiased-gradient}
We next describe our approach to computing unbiased estimates for the gradients of $\mathcal{L}^{\mathrm{relax}}(\theta)$ and $\mathcal{L}^{\mathrm{reward}}(\theta)$. 
This enables optimization of the relax and reward losses by stochastic optimization. 
We leverage the score function gradient estimator \citep{williams1992simple, ranganath2014black} for the reward loss and a similar importance sampling-based gradient estimator for the relax loss. 

\textbf{Score function gradient estimation.} The primary challenge to computing gradients is the inability to directly exchange the order of the gradients and expectations taken with respect to $\theta$. 
That is, because $\nabla_\theta \mathbb{E}_{p_\theta}[f(\mbx, \theta)]\neq  \mathbb{E}_{p_\theta}[\nabla_\theta f(\mbx, \theta)]$, $\nabla_\theta \cL(\theta)$ cannot in general be usefully approximated by $M^{-1}\sum \nabla_\theta f(\mbx_m, \theta)$ from samples $\mbx_m$ of $p_\theta$.
To address this challenge, we observe 
\begin{equation}\label{eqn:lprime}
\cL(\theta) = \cL(\theta, \theta^\prime) := \mathbb{E}_{p_{\theta^\prime}}\left[(p_\theta(\mbx) / p_{\theta^\prime}(\mbx)) f(\mbx, \theta)\right],
\end{equation}
for any set of model parameters $\theta^\prime$.
Since the expectation in \cref{eqn:lprime} does not depend on $\theta,$
we \emph{can} approximate its gradient with Monte Carlo samples from $p_{\theta^\prime}$.
The density ratio $p_\theta(\mbx_m)/p_{\theta^\prime}(\mbx_m)$ in \cref{eqn:lprime} can be understood as the weights of an importance sampling estimate against target $p_\theta$ with proposal $p_{\theta^\prime}$.

To estimate the gradients of the relax and reward losses, we choose proposal equal to the current model, i.e., $\theta^\prime{=}\theta$. In this case, the importance weight is equal to $1$ while its gradient is the ``score'' function $(\nabla_\theta p_\theta(\mbx_m))/p_{\theta}(\mbx_m)=\nabla \log p_\theta(\mbx)$. 
Although the term $\cL^{\mathrm{viol}}$ that appears in the relax loss is not of the form $\mathbb{E}_{p_\theta}[f(\mbx, \theta)]$, we can still construct an unbiased estimate of its gradient using importance sampling; see \Cref{app:sec:unbiased-gradients}. 
Combined with the variance-reduction strategies described in \Cref{app:sec:unbiased-gradients}, these gradient estimators perform well 
even in settings with high-dimensional latent variables, such as diffusion models and masked language models (\Cref{subsec:protein-design}).

\begin{figure*}[!ht]
  \centering
\includegraphics[width=1.0\textwidth]{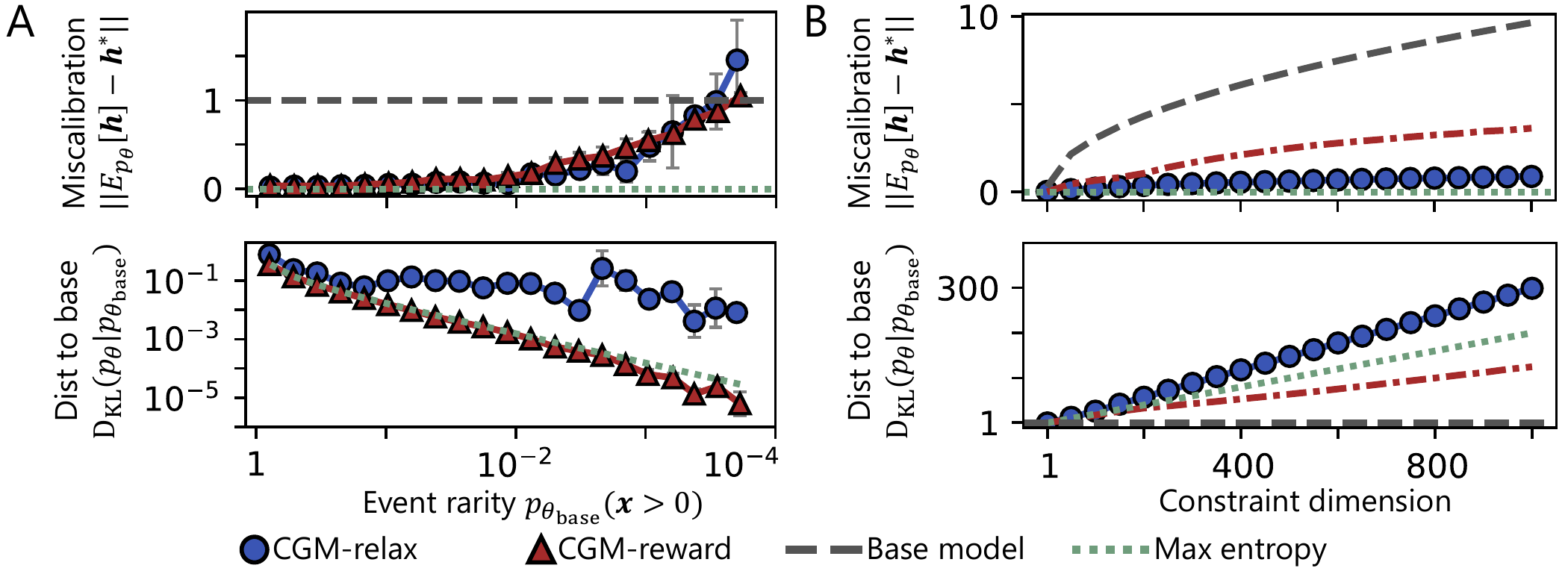}
  \caption{\textbf{A}: CGM successfully reweights the probability of a rare mode in a 1D Gaussian mixture. \textbf{B}: CGM-relax calibrates the base model to up to $10^3$ constraints, whereas CGM-reward is not well-defined for ${>}30$ constraints. CGM-relax outperforms CGM-reward even when $\widehat{\mbalpha}_N$ is fixed to $\mbalpha^\ast$ (red dashed line).} 
  \label{fig:synthetic-exp-2}
\end{figure*}

\subsection{Relationship Between Relax and Reward Losses}\label{subsec:relax-reward-relation}

We showed in \Cref{subsec:loss-reward} that when the generative model $p_{\theta}$ is replaced by a probability distribution $p \in \cP(p_{\thetabase})$, the minimizer of the reward loss is the base model $p_{\thetabase}$ tilted by the exponentiated reward with parameters $\mbalpha = \mbalpha^\ast$. 

Less obviously, a similar statement holds for the relax loss: when $p_{\theta}$ is replaced by $p \in \cP(p_{\thetabase})$, the minimizer of the relax loss is equal to $p_{\lambda}(\mbx) \propto p_{\thetabase}(\mbx) \exp \left\{ r_{\mbalpha_{\lambda}}(\mbx) \right\}$, where the vector $\mbalpha_{\lambda}$ depends on the regularization strength $\lambda$ and is not generally equal to $\mbalpha^\ast$. 
However, both $\|\mbalpha_{\lambda} - \mbalpha^\ast\|$ and  $\|\bbE_{q_{\lambda}}[\mbh(\mbx)] - \mbh^\ast\|$ approach zero for small $\lambda$ at rate $\cO(\lambda)$ \citep{hestenes1969multiplier}. 
\Cref{app:subsec:relationship-relaxed-reward} provides details.

Although the minimizer of the relax loss is an exponential tilt of the base model, minimizing the relax loss does \emph{not} require estimating the parameters of the reward $\mbalpha_{\lambda}$.
In contrast, CGM-reward requires estimating $\mbalpha^\ast$ through \cref{eq:dual-opt}.
As we will see in \Cref{sec:synthetic-exp}, this distinction is important in settings with high-dimensional constraints, where \cref{eq:dual-opt} can become infeasible and CGM-reward may therefore be inapplicable.
\section{Simulations: Determining when CGM Thrives and Struggles}\label{sec:synthetic-exp}
To understand the success and failure cases of CGM, we perform evaluations in a tractable setting.
This setting allows us to understand the role of the CGM hyperparameters $\lambda$ and $N$, and to test CGM in challenging cases, including rare events and high-dimensional constraints.

We provide additional details on our simulation experiments in
\Cref{app:sec:diffusion,app:sec:augmented-lagrangian,app:sec:experimental-details}. 
\Cref{app:sec:diffusion} provides background on continuous-time diffusion models, \Cref{app:sec:augmented-lagrangian} compares CGM to the augmented Lagrangian algorithm \cite{hestenes1969multiplier,khalafi2026composition} for these simulation experiments, and \Cref{app:sec:experimental-details} discusses implementation.

\textbf{Simulation setup and evaluation.}
We consider fine-tuning a diffusion generative model targeting a Gaussian mixture model (GMM) to reweight the mixture proportions of each mode.
Here, $p_{\theta}(\mbx)$ is a generative model of continuous paths $\mbx =(\mbx(t))_{t\in[0,1]}$, whose evolution is described by a stochastic differential equation (SDE). 

Evaluating CGM on a diffusion model whose terminal distribution is a GMM has several advantages.
First, we may choose the base diffusion model so that the final marginal $p_\theta(\mbx(1))$ exactly matches the target GMM \citep{anderson1982reverse, song2020score}; this enables us to focus solely on calibration rather than fitting the base model.
And since the calibration constraint depends on the path only at time $t{=}1$, we can compute the KL divergence of the maximum entropy solution to the base model, and thereby measure the suboptimality of the solutions produced by CGM.

\textbf{Selecting hyperparameters for CGM-relax and CGM-reward.}
We first initialize our base model $p_{\thetabase}$ such that $p_{\thetabase}(\mbx(1))$ is a one-dimensional Gaussian mixture with two equally-weighted, well-separated modes (\Cref{fig:synthetic-exp-1}A).
We define the calibration problem
\begin{equation*}
    \mbh(\mbx) = \mathbbm{1}\{\mbx(1)>0\}, \,\ \mbh^\ast = 0.8
\end{equation*}
to upweight the right mode from $\mathbb{E}_{p_{\thetabase}}[\mbh(\mbx(1))]=0.5$.

For CGM-relax, the regularization parameter $\lambda$ trades off between satisfying the constraint (small $\lambda$) and deviating from the base model (large $\lambda$) (\Cref{fig:synthetic-exp-1}B).
For CGM-reward, increasing $N$ results in more accurate recovery of $\mbalpha^\ast$ and thereby a better approximation to the maximum entropy solution.
For appropriate hyperparameters, both solve the calibration problem to high accuracy.

In the remaining experiments, we select hyperparameters following a heuristic.
For CGM-relax, we choose $\log(\lambda)$ on a linear grid and perform calibration for each value.
For CGM-reward, we use $N=10^5$ samples to estimate $\mbalpha^\ast$.

\textbf{Calibrating rare events.} 
Calibrating the proportion of generations belonging to rare classes is central to applications including protein ensemble modeling \citep{lewis2024scalable} and reinforcement learning \citep{okelly2018scalable}.
To assess the performance of CGM in this setting, we consider a variation of the GMM reweighting problem where the goal is to reduce an increasingly small probability of the right mode $p_{\thetabase}(\mbx(1) > 0)$ by a factor of $2{\times}$.
So that the scale of the initial calibration error remains constant even the probabilities become very small, we choose
\begin{equation*}
\mbh(\mbx)= 2 \mathbbm{1}\{\mbx(1)>0\} / p_{\thetabase}(\mbx(1) > 0), \,\ \mbh^\ast = 1.
\end{equation*}
With this choice of $\mbh$ and $\mbh^\ast$ the violation loss measures the error in $p_{\theta}(\mbx(1) > 0)$ \emph{relative to} the target proportion
and is equal to $1$ at initialization regardless of $p_{\thetabase}(\mbx(1) > 0)$.
We vary $p_{\thetabase}(\mbx(1) > 0)$ from $0.8$ to approximately $10^{-4}$ and use batch size $M=10^2$.
For CGM-relax, we select the value of $\lambda$ for which the calibration error is smallest.

Both algorithms reduce the calibration error by approximately 80-90\% at base proportion $10^{-2}$ and by approximately 50\% at $10^{-3}$ (\Cref{fig:synthetic-exp-2}A). 
The nontrivial reduction in calibration error at $10^{-3}$ is surprising since, on average, fewer than one sample per batch is drawn from the rare mode throughout fine-tuning.
Constraint satisfaction degrades past this threshold, but we suspect larger batch sizes would enable calibrating increasingly rare events.
For events rarer than $10^{-1}$, CGM-relax deviates farther from the base model compared to CGM-reward and the maximum entropy solution.

In \Cref{app:subsec:synthetic-experiments-details}, we consider a second rare event setting in which, rather than reweighting the rare mode by a constant factor, it is upweighted to the fixed proportion $0.8$.
Similarly, for $p_{\thetabase}(\mbx(1) > 0)$ as small as $10^{-3}$, CGM satisfies the calibration constraint to high accuracy.

\textbf{Scalability to high-dimensional models and constraints.}
We next evaluate how performance depends on the dimensionality, $k$, of the GMM and the constraint.
We take the base model to be a product of one-dimensional GMMs with marginals as in \Cref{fig:synthetic-exp-1}A.
For the constraint, we choose 
\begin{equation*}
\begin{aligned}
&\mbh(\mbx) = [\mathbbm{1}\{\mbx(1)[1] > 0\}, \dots, \mathbbm{1}\{\mbx(1)[k] > 0\}],\\
&\mbh^\ast=[0.8,\dots, 0.8],
\end{aligned}
\end{equation*}
where $\mbx(1)[i]$ is the $i$th dimension of $\mbx(1)$.
Since both the base model $p_{\thetabase}$ and maximum entropy solution $p_{\mbalpha^\ast}$ are independent across dimension, the KL distance between these two distributions grows linearly in dimension.
The multimodality of this model, with $2^k$ modes, mimics the multimodality of practical generative models. 
We perform CGM-relax and CGM-reward with batch size $M=10^4$.
For CGM-relax, we select the largest $\lambda$ for which miscalibration is reduced by 90\%.

In this high-dimensional regime, significant discrepancies emerge between CGM-relax and CGM-reward (\Cref{fig:synthetic-exp-2}B).
CGM-relax consistently eliminates the majority of constraint violation up to $k{=}10^3$, albeit with a non-trivial excess KL divergence to $p_{\thetabase}$ compared to the maximum entropy solution $p_{\mbalpha^*}$ that increases linearly with dimension.
Although CGM-reward performs well for low-dimensional constraints (${<}10$), we find that the empirical maximum entropy problem \eqref{eq:dual-opt} is infeasible with high probability for $k{>}30$. 
Even when $\widehat{\mbalpha}_N$ is fixed to its oracle value $\mbalpha^\ast$ (\Cref{fig:synthetic-exp-2}B), CGM-relax still outperforms CGM-reward.

\section{Case-studies with Diverse Models, Data, and Constraints}\label{sec:case-studies}
We evaluate the capacity of CGM-reward and CGM-relax to solve practical calibration problems through three applications involving diverse models, data, and constraint types. \Cref{subsec:protein-design} calibrates a diffusion model \citep{lin2024out} and a masked language model \citep{hayes2025simulating} of protein structure to more closely match statistics of natural proteins.
\Cref{subsec:tarflow} calibrates a normalizing flow model \citep{zhainormalizing} of images to reduce class imbalances on the basis of LLM image-to-text annotations. 
Lastly, \Cref{subsec:autoregressive} calibrates a large language model to eliminate gender bias in generated children’s stories \citep{team2024gemma}.

\textbf{Baselines.} Only two prior works have proposed algorithms that intend to solve the calibration problem.
\citet{khalifa2020distributional} propose a method for LLMs that we compare to in \Cref{subsec:autoregressive}.
Second, \citet{shen2024finetuning} propose a method for class-balancing in diffusion models.
However, their method assumes an existing probabilistic classifier and so is not applicable in our setting.

\textbf{Compute cost.} Each experiment is run on a single H100 GPU. \Cref{app:sec:experimental-details} provides details on experimental setup.

\paragraph{CGM-relax v.~CGM-reward in practice.} Altogether, our results demonstrate that CGM-relax with optimally tuned $\lambda$ more robustly reduces miscalibration than CGM-reward and offers a navigable trade-off between calibration and fidelity to the base-model.
As a practical matter, CGM-reward has the benefit of avoiding a hyperparameter search and often also performs well. 
Under compute constraints, we recommend first trying CGM-reward, and if results are not satisfactory then trying CGM-relax with a full grid search.

\begin{figure*}[!ht]
  \centering
\includegraphics[width=0.9\textwidth]{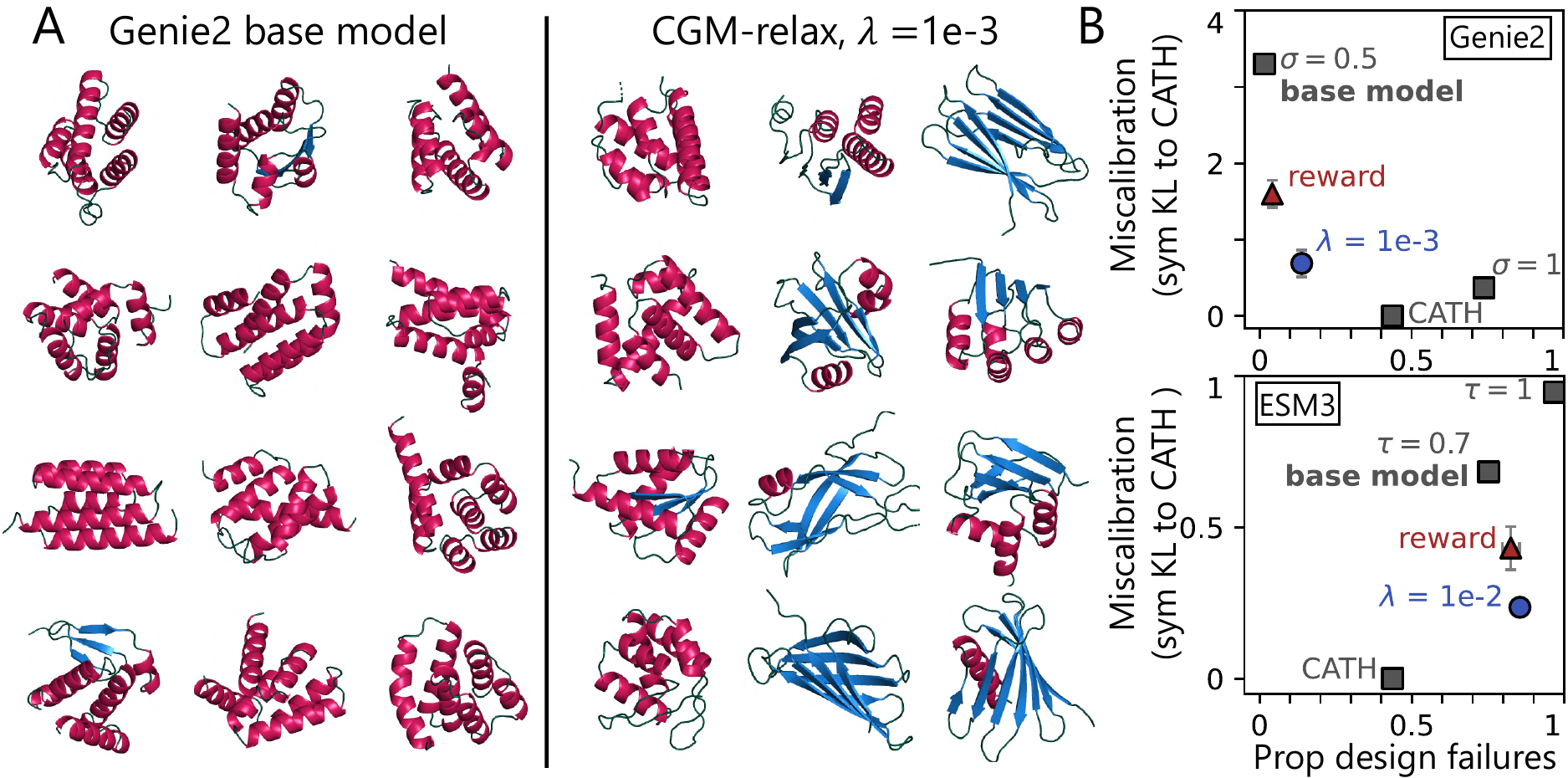}
  \caption{
  \textbf{A}: Samples from the Genie2 before and after calibration with CGM-relax $(\lambda{=}10^{-3})$. 
  \textbf{B}: CGM-relax reduces the distance of secondary structure content to natural proteins by ${>}4$ times for Genie2 and ${>}2$ times for ESM3 while maintaining biophysical plausibility.} 
  \label{fig:protein-design}
\end{figure*}

\subsection{Calibrating Protein Design Models to Match Statistics of Natural Proteins}\label{subsec:protein-design}
Diffusion generative models have become a central tool in protein design \citep{trippe2022diffusion, watson2023novo}. However, heuristics such as reduced noise during sampling \citep[e.g.,][]{yim2023se} have been necessary to ensure a high proportion of the sampled structures are biophysically plausible.
These heuristics substantially reduce the diversity of samples compared to proteins found in nature and thereby pose a trade-off between reliability and diversity.
For two such protein design models, we investigate whether this trade-off can be mitigated by calibrating the models to match the secondary structure composition of natural proteins.

\textbf{Protein models Genie2 and ESM3 and their miscalibration.}
The two protein design models we consider are (1) Genie2 \citep{lin2024out}, a 15M parameter equivariant diffusion model, and (2) ESM3-open \citep{hayes2025simulating}, a 1.4B parameter masked language model on tokenized representations of protein backbones. For each model, we generate protein backbones consisting of $100$ amino acids i.e., residues.
Both Genie2 and ESM3-open suffer low diversity compared to natural protein domains in the CATH dataset \citep{sillitoe2021cath}; specifically, they produce few generations with high beta-strand content (\Cref{fig:protein-design}A).
Beta strands, along with alpha helices and loops, constitute what is known as a protein's secondary structure. 

\textbf{Calibration constraints on secondary structure diversity.}
To represent protein secondary structure as a calibration constraint, we use the empirical bivariate cumulative density function (CDF) of the fraction of residues in alpha-helical and beta-strand segments.
We place up to $d=99$ cutoff pairs $(\tau_{\alpha,i}, \tau_{\beta,i}) \in [0,1]^2$ and define a $d$-dimensional indicator vector $\mbh(\mbx)$ with components
\begin{equation*}
\mbh(\mbx)[i] = \mathbbm{1}\{f_\alpha(\mbx) \leq \tau_{\alpha,i}, \ f_\beta(\mbx) \leq \tau_{\beta,i}\},
\ i=1,\ldots,d,
\end{equation*}
where $f_\alpha(\mbx)$ and $f_\beta(\mbx)$ are the secondary-structure fractions of protein structure $\mbx$.
We set the calibration target $\mbh^\ast$ to the corresponding values of the CATH empirical bivariate CDF at these cutoffs.

\textbf{Results.}
Calibration with CGM-relax yields a nearly fivefold improvement in the diversity of sampled protein structures for Genie2 and a twofold improvement for ESM3-open, as quantified by the symmetrized KL distance between the secondary structure distributions of the generative models and CATH domains (\Cref{fig:protein-design}B). This improvement comes at the cost of an increased proportion of `design failures', as defined in \Cref{app:subsec:genie2}.
The ESM3-open base model generates a high proportion of design failures compared to Genie2 (consistent with e.g., \citet{xiong2025guide}) and this fraction increases slightly upon calibration with CGM.

CGM-reward achieves more modest  improvements in secondary structure diversity, which may in part be due to difficulty in computing $\widehat{\mbalpha}_N$.
In order for \cref{eq:dual-opt} to be feasible with $N = 2.5 \times 10^3$ samples, we need to reduce the number of cutoff pairs from 99 to 15.
CGM-reward fine-tuning reduces the symmetrized KL distance to CATH by two times for Genie2 and $1.6$ times for ESM3-open. 
However, for Genie2, CGM-reward also produces fewer design failures than CGM-relax.

The gains in secondary structure diversity achieved by CGM cannot be obtained by simply increasing the sampling noise of Genie2 or the sampling temperature of ESM3. 
In \Cref{fig:protein-design}B, we show that increasing the sampling noise of Genie2 from the base model value of $\sigma=0.5$ to $\sigma = 1$ improves structure diversity, but at the cost of a much higher failure rate than CGM (74\% vs 14\%). 
The same is true for ESM3 with increased sampling temperature from $\tau=0.7$ to $\tau = 1$, which yields a failure rate of $97\%$.  

\begin{figure*}[!ht]
  \centering
\includegraphics[width=\textwidth]{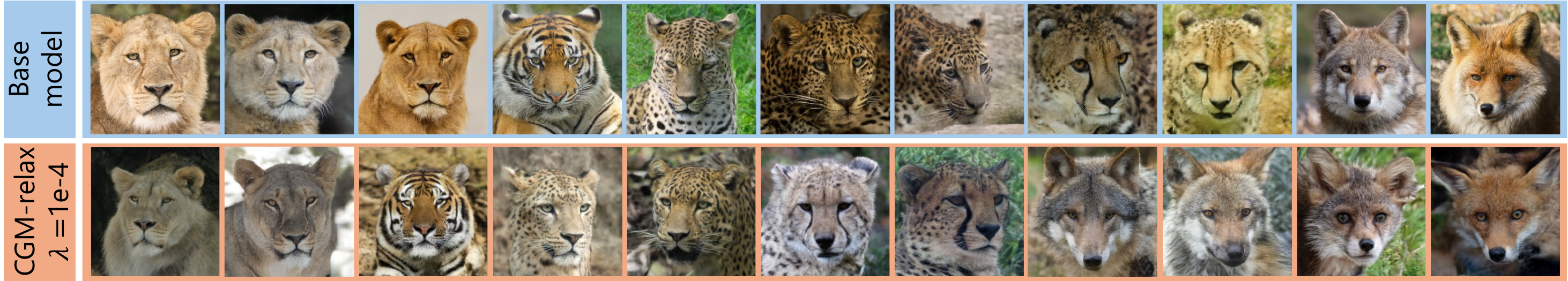}
  \caption{Generations from the conditional TarFlow model \citep{zhainormalizing} before and after calibration with CGM-relax $(\lambda = 10^{-4})$. CGM reweights the proportions of animals generated and produces realistic images. Some visual artifacts exist after calibration.}
  \label{fig:tarflow}
\end{figure*}
\subsection{Calibrating a Conditional Flow Model to Balance Class Proportions}\label{subsec:tarflow}
We next demonstrate that CGM is capable of effectively calibrating state-of-the-art normalizing flow models. 
Normalizing flows generate samples $\mbx = f_{\theta}^{-1}(\mbepsilon)$, where $\mbepsilon \sim p_{\mbepsilon}$ is a distribution from which sampling is tractable and $f_{\theta}(\mbx)$ is a map that is invertible in $\mbx$ for each $\theta$ \citep{tabak2010density, rezende2015variational}. 
By the change-of-variable formula, the density of $\mbx$ is $p_{\mbepsilon}(f_{\theta}(\mbx)) |\mathrm{det}(df_{\theta}(\mbx) / d\mbx)|$; this enables computation of exact likelihoods for training and calibration. 

For our calibration problem, we consider the 463M-parameter TarFlow model \citep{zhainormalizing}, which parameterizes $f_{\theta}$ as an autoregressive vision transformer \citep{dosovitskiyimage} that performs attention over a sequence of image patches. 
We examine the model trained conditionally on the $256 \times 256$ AFHQ dataset \citep{choi2020starganv2}, which consists of images of animal faces belonging to one of three classes: $\{\texttt{cat, dog, wildlife}\}$. 
The wildlife class is further comprised of $\{\texttt{lion, tiger, fox, wolf, }\\ \texttt{cheetah, leopard}\}$. 
We observe that, conditional on the wildlife class, approximately $36\%$ of generations from the TarFlow model are lions and very few ($< 7\%$ total) are foxes or wolves. 
We apply CGM to calibrate the conditional TarFlow model to generate samples containing animals from the wildlife class with equal proportions.
For $\mbh$, we query GPT o5-mini to classify each image as containing one of the six animals or \texttt{None}.

\textbf{Results.} We find CGM-relax reduces miscalibration to the base TarFlow model with little visible degradation of sample realism (\Cref{fig:tarflow}).
CGM-relax $(\lambda{=}10^{-4})$ reduces the total variation distance of animal proportions, as classified by an image-to-text model, to the uniform distribution from $0.306$ to $0.101$.
However, the Fréchet inception distance (FID) to real images in the AFHQ wildlife class is larger for the calibrated model than for the base model ($21.0$ vs{.} $15.9$).
Since this metric is sensitive to class proportions, we evaluate the calibrated model on the training dataset after balancing classes. The discrepancy in FID can be explained by two types of visual artifacts introduced by calibration: some images depict animals outside the wildlife class ($\sim 8\%$) and some ``blend'' multiple animals. Appendix \Cref{app:fig:tarflow-samples} shows random samples from both models.
The model fine-tuned with CGM-reward remains close to the base model but fails to reduce constraint violation.

\begin{figure*}[!ht]
  \centering
    \includegraphics[width=1.0\textwidth]{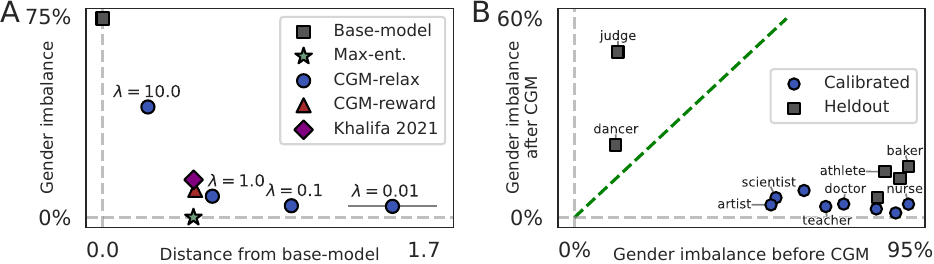}
  \caption{
  \textbf{A}: Gender imbalance and distance from base-model (symmetrized KL from pre-trained  \gemmaName).
  \textbf{B}: Gender imbalance for professions included and heldout from calibration before and after CGM-relax ($\lambda = 0.01$). Points below the diagonal were improved.
  } 
  \label{fig:stories}
\end{figure*}
\subsection{Calibrating an LLM to Reduce Gender-Profession Imbalance in Children's Stories}\label{subsec:autoregressive}
As a third example, we calibrate a large language model to generate short children's stories with reduced gender bias.
\gemmaName is a nine-billion-parameter autoregressive transformer post-trained for instruction following \citet{team2024gemma}.
We find significant imbalances in prompt-conditional generations that introduce a character's profession.
For example, only 18\% of stories beginning ``Once upon a time there was a lawyer'' feature a female lawyer, whereas 41\% of U.S. attorneys were women in 2024 \citep{ABA_2024}.

\textbf{Gender parity as a conditional calibration constraint.}
We begin with a prompt from \citet{eldan2023tinystories} to generate a short children's story and adapt it to specify the protagonist's profession.
To quantify and correct gender imbalance, we define a statistic $\mbh(\mbx)\in\{-1,0,1\}$ encoding (\texttt{male}, \texttt{ambiguous}, \texttt{female}) and impose a profession-conditional parity constraint: for each profession $i$, we target $\mathbb{E}_{p_\theta}[\mbh(\mbx)\mid \text{prompt}_i]=0$.

We first attempt to modify the prompt to request male and female characters with equal probability.
This reduces imbalance but leaves substantial miscalibration (Figure~\ref{app:fig:prompt_compare}), which we address with CGM.
Rather than fine-tuning a separate model for each profession, we fine-tune a single LoRA adapter \citep{hu2022lora} by minimizing the sum of prompt-conditional CGM losses across all calibrated professions.

\textbf{Results on explicitly calibrated professions.}
Both CGM-reward and CGM-relax reduce gender imbalance, measured as the average absolute per-profession frequency difference (\Cref{fig:stories}A) over five replicates per model.
As expected, decreasing the regularization strength $\lambda$ improves constraint satisfaction at the cost of greater distance to the base model, as measured by symmetric KL.
Notably, even the least-regularized model attains symmetric KL ${<}1.7$, corresponding to an average token log-probability difference of ${<}0.01$ nats/token.
Appendix~\ref{app:subsec:gemma-examples} provides example generations before and after fine-tuning, showing no degradation in story quality.

Compared to \citet{khalifa2020distributional}, CGM-reward yields a small but statistically significant (two-sided $t$-test) improvement in miscalibration at the same distance to the base model.
CGM-relax reduces gender imbalance by more than threefold relative to the baseline but deviates further from the base model.

\textbf{Transference of calibration to held-out professions.}
We evaluate how conditional calibration affects the calibration of ``held-out'' professions not considered during fine-tuning.
This generalization is particularly valuable in applications where it is impractical to foresee and explicitly calibrate every possible prompt.
To evaluate this, we consider six held-out professions; four are significantly improved, whereas two become more imbalanced (\Cref{fig:stories}B).

\vspace{-1em}

\section{Conclusion}
CGM-relax and CGM-reward provide practical approaches for calibrating generative models to satisfy distribution-level constraints.
In applications to protein design, conditional image generation, and language modeling, CGM consistently reduces calibration error under hundreds of simultaneous constraints and in models with up to nine billion parameters while preserving generation quality.

Still, our results highlight that the calibration problem is not yet solved.
Current objectives leave residual error, especially in the rare-event setting that is especially relevant to protein structure modeling, for example.
More broadly, the CGM framework is tied to models with tractable likelihoods, raising the challenge of extending calibration to GANs and other implicit models.
These open questions point to calibration as a fruitful research direction.
\section*{Acknowledgments}
We thank Julius Berner for his numerous helpful discussions about the connection between the calibration and reward fine-tuning problems as well as the inclusion of a baseline in the CGM-relax and CGM-reward gradient estimates;
Tianyu Lu for sharing with us the dataset of secondary structure composition for CATH domains \citep{lu2025assessing};
Zhaoyang Li for his assistance with setting up the Genie2 and ESM3 evaluations; and
Arthur Deng, Cole Citrenbaum, Nicholas Beltran, Rob Tibshirani, and Zhaoyang Li for providing feedback on our manuscript.

HDS is supported by the NSF Graduate Research Fellowship (DGE-2146755) and the Knight-Hennessy Graduate Fellowship.
NLD is supported by the Stanford Graduate Fellowship (SGF).

\section*{Author Contributions}
BLT initiated the project, contributed algorithmic components, and provided supervision.
HDS contributed the reward loss and its connections to maximum entropy and the relax loss, the diffusion and normalizing flow methodology and their implementations.
NLD developed the discrete diffusion and LLM methods and implementation.
All authors contributed to the codebase and writing.

\bibliography{refs}
\bibliographystyle{icml2026}
\newpage
\appendix
\onecolumn

\crefname{appendix}{Appendix}{Appendices}
\Crefname{appendix}{Appendix}{Appendices}
\crefalias{section}{appendix}
\crefalias{subsection}{appendix}

\providecommand{\sectionautorefname}{Appendix}
\providecommand{\subsectionautorefname}{Appendix}

\section*{Appendix Contents}
\Cref{app:sec:additional_related_work}: Extended Discussion of Related Work

\Cref{app:sec:cgm}: CGM-relax and CGM-reward Algorithms
    \begin{itemize}
        \item[~] \ref{app:sec:unbiased-loss}: Loss Estimates
        \item[~] \ref{app:sec:unbiased-gradients}: Unbiased Gradient Estimates
    \end{itemize}

\Cref{app:sec:max-entropy}: Maximum Entropy Principle
    \begin{itemize}
        \item[~] \ref{app:subsec:max-entropy-statement}: Precise Statement
        \item[~] \ref{app:subsec:max-entropy-estimator}: Estimating the Maximum Entropy Solution
        \item[~] \ref{app:subsec:relationship-relaxed-reward}: Connection Between the Relax and Reward Losses
        \item[~] \ref{app:subsec:consistency-normality}: Consistency and Asymptotic Normality
    \end{itemize}

\Cref{app:sec:diffusion}: Calibrating Continuous-time Diffusion Models
\begin{itemize}
    \item[~ ] \ref{app:subsec:neural-sde}: Continuous-time Diffusion Models
    \item[~] \ref{app:subsec:solution-diffusion}: Solution to the Maximum Entropy Problem

    \item[~] \ref{app:subsec:initialize-nsde}: Exact Sampling from a Gaussian Mixture Model
\end{itemize}

\Cref{app:sec:augmented-lagrangian}: Comparison to Augmented Lagrangian Method

\Cref{app:sec:experimental-details}: Additional Experimental Details
\begin{itemize}
        \item[~] \ref{app:subsec:synthetic-experiments-details}: Synthetic Data Experiments
        \item[~] \ref{app:subsec:genie2}: Calibrating Genie2
        \item[~] \ref{app:subsec:ESM3}: Calibrating ESM3-open
        \item[~] \ref{app:subsec:tarflow}: Calibrating TarFlow
        \item[~] \ref{app:subsec:stories}: Calibrating \gemmaName
    \end{itemize}
    
\newpage
\section{Extended Discussion of Related Work}\label{app:sec:additional_related_work}

In this section, we first discuss related works that address the generative model calibration problem for particular model classes.
We then discuss a related class of algorithms that enforce constraints on a pre-trained model at sampling time, rather than by fine-tuning the model.
Finally, we connect our work to reward fine-tuning and conditional generation.

\textbf{The calibration problem.}
Several previous works have proposed algorithms whose goal it is to impose distributional constraints on generative models. 
However, each of these methods applies only to specific model classes and either suffers from poor empirical performance or imposes constraint satisfaction during training time (rather than fine-tuning).

Most closely related to the present work, \citet{khalifa2020distributional} fine-tune autoregressive language models to match distributional constraints.
Like CGM-reward, their approach also targets the maximum entropy solution \eqref{eq:max-ent-solution}, but through a different divergence;
they choose the KL divergence in the ``forward'' direction, $\KL{p_{\mbalpha^*}}{p_{\theta}}$, rather than in the ``reverse'' direction, $\KL{p_\theta}{p_{\mbalpha^*}}$, as in CGM-reward.

Empirically, the approximate solutions to the calibration problem \eqref{eq:prob-calibration} found by \citet{khalifa2020distributional} fall short of constraint satisfaction compared to CGM, particularly CGM-relax.  \citet{khalifa2020distributional} achieves comparable, albeit slightly worse, performance to CGM-reward in the Gemma 2 gender rebalancing experiment (\Cref{subsec:autoregressive}), reducing miscalibration by roughly 81\%. 
CGM-relax, on the other hand, reduces constraint violation up to 94\%.

In follow-up work, \citet{go2023aligning} propose an algorithm for aligning language models to a specified target distribution by minimizing an arbitrary $f$-divergence (including the forward and reverse KL divergence).
One example they consider is when the target distribution is the maximum entropy distribution corresponding to some constraint functions; the choice of forward KL then reduces to \citet{khalifa2020distributional}. 
However, they obtain $< 50\%$ reduction in constraint violation.

\citet{shen2024finetuning} propose a method for balancing class proportions in text-to-image diffusion models.
They rely on an optimal transport objective that applies narrowly to diffusion models and find empirically their approach falls short of meeting desired class proportions.

\citet{lewis2024scalable} consider a fine-tuning diffusion generative model approximation of protein structure ensembles to protein stability measurements. They introduce an auxiliary fine-tuning loss that resembles the violation loss, but they do not demonstrate whether this approach leads to a significant reduction in calibration error.

A number of concurrent works also tackle the problem of  generative model calibration problem for particular model classes.
\citet{khalafi2026composition} propose a primal-dual algorithm for fine-tuning diffusion models to a collection of expectation constraints.
Their algorithm applies only to diffusion models and is validated on low-dimensional (${\leq}4$) constraints.
Also concurrent to our work, \citet{gutjahr2025constrained} fine-tunes a diffusion generative model subject to inequality constraints on the expected value of a statistic to maximize an expected reward with a KL penalty to the base model.
Their approach applies only to diffusion models and continuous normalizing flows. 

\textbf{Incorporating distributional constraints during training.}
Several other works have sought to impose distributional constraints during training time but differ from CGM in that they are not fine-tuning procedures and apply only to a specific model classes.
\citet{pitera2012use} leverage the maximum entropy principle to derive a biasing potential that is applied to molecular simulations. 
This potential ensures that the ensemble average of an observable of interest matches a corresponding experimental measurement.
\citet{wu2020enforcing} propose a method for training generative adversarial networks (GANs) that includes a penalty term similar to $\mathcal{L}^{\mathrm{viol}}$ that encourages agreement with statistics of the training data.
\citet{zhu2024optimal} solve for the maximum entropy model of short (length 7) protein sequences with expected ``fitness'' surpassing a fixed threshold. 
\citet{khalafi2024constrained} propose a primal-dual algorithm to enforce distributional constraints on diffusion models; their constraints, however, are specified at the level of entire distributions, rather than their moments. 
\citet{friedrich2023fair} develop a training procedure for diffusion models that balances the conditional distributions of samples, given some attribute e.g., gender.    

\textbf{Distributional constraints beyond fine-tuning: density estimation, ensemble re-weighting, and inference time methods.}
Beyond supervised learning, distributional constraints have appeared throughout computational statistics and generative modeling.
In density estimation, \citet{efron1996using} suggest calibrating kernel density estimates by introducing an exponential tilt analogous to the maximum entropy tilt targeted by CGM-reward.
In molecular simulation, similar exponential tilts are used to enforce agreement of ensembles with experimental measurements of ensemble observables \citep{pitera2012use,rozycki2011saxs,kofinger2019efficient,bottaro2020integrating}.
And in the context of generative modeling \emph{inference-time} methods have been introduced to bias sampling to produce an ensemble without altering model weights.
Inference-time methods have been applied to molecular modeling \citep{cardei2025constrained,kolloff2025minimum} and image generation \citep{liu2021sampling,yao2024proud}.

However these approaches do not inform how to fine-tune a generative model.
Compared to sample reweighting and inference-time strategies, fine-tuning pays the compute cost up-front, and sampling from the fine-tuned model incurs the same cost as sampling from the base model.
This tradeoff is advantageous when a model is to be used for downstream tasks such as conditional generation.

\textbf{Reward fine-tuning and conditional generation.}
As we point out in \Cref{subsec:loss-reward}, the idea of minimizing the KL divergence of the generative model to an exponential tilt of the base model \eqref{eq:max-ent-solution} connects CGM to the rich research topic of reward fine-tuning. 
Reward fine-tuning algorithms, used in the contexts of reinforcement learning \citep{rafailov2023direct, fan2023dpok, blacktraining, wallace2024diffusion} and preference optimization \citep{tang2024fine, uehara2024fine, domingo2025adjoint}, minimize the same loss \eqref{eq:reward-loss} as CGM-reward, but with $r_{\mbalpha}(\mbx)$ replaced by a user-specified ``reward''. 
Unlike reward fine-tuning algorithms, though, CGM does not require a reward; rather, the constraints themselves act as the reward.

Conditional generation
\citep{dhariwal2021diffusion, ho2022classifier, denker2024deft} can also be viewed through the lens of model calibration, where the calibration constraint is the indicator function of the set $C$ from which one would like to sample $\mbh(\mbx) = \mathbbm{1}\{ \mbx \in C\}$ and $\mbh^\ast$, the target proportion of samples that belong to $C$, approaches $1$. 
In this case the optimal variational parameter $\mbalpha^\ast$ approach infinity, and the maximum entropy solution approaches $p_{\thetabase}(\mbx)\mathbbm{1}\{\mbx \in C\}$.

\section{CGM-relax and CGM-reward Algorithms}\label{app:sec:cgm}

In this section, we provide further detail on the CGM-relax and CGM-reward algorithms. 
First, we show in \Cref{app:sec:unbiased-loss} that our estimates for the relax and reward losses are unbiased. 
In \Cref{app:sec:unbiased-gradients} we then discuss how to compute our gradient estimates for the relax and reward losses, and we show they are unbiased.

Throughout this section we will make the following regularity assumptions on the generative model $p_{\theta}$ and the constraint functions $\mbh$. 
\begin{assumption}[Regularity of $p_{\theta}$]\label{app:assumption:dominated}
    The functions $p_{\tilde{\theta}}(\mbx) /   p_{\theta}(\mbx)$, $\nabla_{\tilde{\theta}} p_{\tilde{\theta}}(\mbx) /   p_{\theta}(\mbx)$, $\log p_{\tilde{\theta}}(\mbx)$, $\nabla_{\tilde{\theta}} \log p_{\tilde{\theta}}(\mbx)$ are uniformly dominated by a function that is square integrable with respect to $p_{\theta}(\mbx)$, for all $\tilde{\theta}$ belonging to some neighborhood of $\theta$. 
    Also, $\mbh(\mbx), \ \log p_{\thetabase}(\mbx)$ have finite second moment under $p_{\theta}(\mbx)$.
\end{assumption}
These assumptions are sufficient to exchange integration and differentiation in \Cref{app:sec:unbiased-gradients} with dominated convergence. 

\subsection{Loss Estimates}\label{app:sec:unbiased-loss}

We begin by proving that our estimates $\widehat{\cL}^{\mathrm{relax}}$ and $\widehat{\cL}^{\mathrm{reward}}$ for $\cL^{\mathrm{relax}}$ and $\cL^{\mathrm{reward}}$, respectively, are, on average, correct. 

\begin{proposition}\label{app:prop:unbiased-relax}
$\widehat{\cL}^{\mathrm{relax}}$ is unbiased for the relax loss $\cL^{\mathrm{relax}}$.
\end{proposition}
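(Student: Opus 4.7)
The plan is to split $\widehat{\cL}^{\text{relax}} = \widehat{\cL}^{\text{viol}} + \lambda \widehat{\cL}^{\text{KL}}$ into its two components and show each is individually unbiased for the corresponding term in $\cL^{\text{relax}}$, whence linearity of expectation closes the argument. The KL component is immediate: $\widehat{\cL}^{\text{KL}}$ is a straight sample average of $\log(p_\theta(\mbx_m)/p_{\thetabase}(\mbx_m))$ over i.i.d.\ draws $\mbx_m \sim p_\theta$, and under \Cref{app:assumption:dominated} this log-ratio has finite first moment, so $\mathbb{E}[\widehat{\cL}^{\text{KL}}] = \mathbb{E}_{p_\theta}[\log p_\theta(\mbx)/p_{\thetabase}(\mbx)] = \cL^{\text{KL}}$.

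The substantive step is the constraint-violation piece, where the naive estimator $\|\bar{\mbh} - \mbh^\ast\|^2$, for $\bar{\mbh} = M^{-1}\sum_m \mbh(\mbx_m)$, is biased upward by the variance of the sample mean. My plan is to write $\mu := \mathbb{E}_{p_\theta}[\mbh(\mbx)]$, expand $\bar{\mbh} - \mbh^\ast = (\bar{\mbh}-\mu) + (\mu - \mbh^\ast)$, and take expectations of the squared norm. The cross term vanishes because $\mathbb{E}[\bar{\mbh} - \mu] = 0$, leaving the identity
\begin{equation*}
\mathbb{E}\bigl[\|\bar{\mbh} - \mbh^\ast\|^2\bigr] = \|\mu - \mbh^\ast\|^2 + \mathbb{E}\bigl[\|\bar{\mbh} - \mu\|^2\bigr] = \cL^{\text{viol}} + \tfrac{1}{M}\,\mathrm{tr}\,\mathrm{Cov}_{p_\theta}[\mbh(\mbx)].
\end{equation*}
Thus it suffices to show that the correction term $\frac{1}{M(M-1)}\sum_m \|\mbh(\mbx_m) - \bar{\mbh}\|^2$ is an unbiased estimator of $\frac{1}{M}\mathrm{tr}\,\mathrm{Cov}_{p_\theta}[\mbh(\mbx)]$. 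This is the standard fact that the sample variance with denominator $M-1$ is unbiased for the population variance, applied coordinate-wise and summed; the factor $1/M$ then enters through the outer normalization. Subtracting this unbiased estimate from $\|\bar{\mbh}-\mbh^\ast\|^2$ cancels the bias exactly, yielding $\mathbb{E}[\widehat{\cL}^{\text{viol}}] = \cL^{\text{viol}}$.

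The main (mild) obstacle is the bookkeeping: the constraint $\mbh$ is vector-valued, so the ``variance'' that enters is actually the trace of a covariance matrix, and I want to make sure the squared-norm expansions line up before and after the bias correction. Finite second moments of $\mbh$ under $p_\theta$, guaranteed by \Cref{app:assumption:dominated}, justify all the expectations involved and in particular Fubini in the cross-term argument. Combining the two unbiased components by linearity gives $\mathbb{E}[\widehat{\cL}^{\text{relax}}] = \cL^{\text{viol}} + \lambda\,\cL^{\text{KL}} = \cL^{\text{relax}}$, as claimed.
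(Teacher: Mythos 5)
Your proof is correct and follows essentially the same route as the paper's: decompose $\widehat{\cL}^{\text{relax}}$ into its KL and constraint-violation parts, handle the KL term by linearity, and for $\widehat{\cL}^{\text{viol}}$ use the bias--variance decomposition $\bbE\|\bar{\mbh}-\mbh^\ast\|^2 = \|\mu - \mbh^\ast\|^2 + \tfrac{1}{M}\mathrm{tr}\,\mathrm{Cov}_{p_\theta}[\mbh]$ together with the unbiasedness of the sample covariance trace. The only cosmetic difference is that you derive the decomposition by expanding the cross term, while the paper invokes $\bbE[Z^2]=\bbE[Z]^2+\mathrm{Var}(Z)$ coordinatewise; these are the same argument.
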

\begin{proof}
    We prove unbiasedness of $\widehat{\cL}^{\mathrm{relax}}$ by showing that $\widehat{\cL}^{\mathrm{KL}}$ is unbiased for $\cL^{\mathrm{KL}} = \KL{p_{\theta}}{p_{\thetabase}}$ and that $\widehat{\cL}^{\mathrm{viol}}$ is unbiased for $\cL^{\mathrm{viol}} = \|\bbE_{p_{\theta}}[\mbh(\mbx)] - \mbh^\ast\|^2$. 

    As for  $\widehat{\cL}^{\mathrm{KL}}$, its expectation is
    \begin{align*}
        \bbE_{p_{\theta}}\left[ \widehat{\cL}^{\mathrm{KL}}\right] =\frac{1}{M}\sum_{m=1}^M\bbE_{p_{\theta}}\left[ \log \frac{p_{\theta}(\mbx_m)}{p_{\thetabase}(\mbx_m)}\right] =  \frac{1}{M}\sum_{m=1}^M \KL{p_{\theta}}{p_{\thetabase}} = \KL{p_{\theta}}{p_{\thetabase}}.
    \end{align*}
    In the first equality we invoke the linearity of expectation and in the second we use our assumption that $\{\mbx_m\}_{m=1}^M$ are sampled from $p_{\theta}$.
    
    And for $\widehat{\cL}^{\mathrm{viol}}$, we recall that for a real-valued random variable $Z$, $\bbE[Z^2] = \bbE[Z]^2 + \mathrm{Var}(Z)$.
    Applying this to each dimension of $M^{-1} \sum_{m=1}^M \tilde{\mbh}_m  = M^{-1} \sum_{m=1}^M (\mbh(\mbx_m) - \mbh^\ast)$, we obtain
    \begin{align}\label{app:eq:MSE-decomp}
    \bbE_{p_{\theta}}\left\|\frac{1}{M}\sum_{m=1}^M \tilde{\mbh}_m \right\|^2 =  \|\bbE_{p_{\theta}}[\tilde{\mbh}(\mbx)] \|^2 + \frac{1}{M} \bbE_{p_{\theta}} \| \tilde{\mbh}(\mbx) - \bbE_{p_{\theta}}[\tilde{\mbh}(\mbx)]\|^2, 
\end{align}
where $\tilde{\mbh}(\mbx) = \mbh(\mbx) - \mbh^\ast$.
Next, we replace the final term in \eqref{app:eq:MSE-decomp} with $\bbE_{p_{\theta}}[M^{-1}(M-1)^{-1} \sum_m \| \tilde{\mbh}_m - M^{-1} \sum_{m'}\tilde{\mbh}_{m'}\|^2]$.
The quantity $M^{-1}(M-1)^{-1} \sum_m \| \tilde{\mbh}_m - M^{-1} \sum_{m'}\tilde{\mbh}_{m'}\|^2$ is simply the trace of the sample covariance matrix of $\{\tilde{\mbh}_m\}_{m=1}^M$,  scaled by $M^{-1}$.
The sample covariance of $\{\tilde{\mbh}_m\}_{m=1}^M$ is unbiased for $\mathrm{Cov}[\tilde{\mbh}]$.
Rearranging the above expression yields
\begin{align*}
    \|\bbE_{p_{\theta}}[\tilde{\mbh}(\mbx)] \|^2 &= \bbE_{p_{\theta}}\left\|\frac{1}{M}\sum_{m=1}^M \tilde{\mbh}_m \right\|^2 - \frac{1}{M} \bbE_{p_{\theta}}\left[\frac{1}{(M-1)} \sum_{m=1}^M \left\| \tilde{\mbh}_m - \frac{1}{M} \sum_{m'=1}^M\tilde{\mbh}_{m'}\right\|^2 \right]\\
    &= \bbE_{p_{\theta}}[\widehat{\cL}^{\mathrm{viol}}]
\end{align*}
This proves $\widehat{\cL}^{\mathrm{viol}}$ is unbiased for $\|\bbE_{p_{\theta}}[\mbh(\mbx)] - \mbh^\ast\|^2$. 
\end{proof}

Likewise, we demonstrate that our estimate for the reward loss is unbiased.
\begin{proposition}\label{app:prop:unbiased-reward}
$\widehat{\cL}^{\mathrm{reward}}$ is unbiased for the reward loss $\cL^{\mathrm{reward}}$.
\end{proposition}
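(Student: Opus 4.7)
The plan is to exploit the additive decomposition $\cL^{\text{reward}} = \cL^{\text{KL}} + \cL^{\text{r}} + C$ from \eqref{eq:reward-loss} together with the analogous decomposition $\widehat{\cL}^{\text{reward}} = \widehat{\cL}^{\text{KL}} + \widehat{\cL}^{\text{r}}$ of the estimator. By linearity of expectation, it suffices to show each summand has the correct mean, which reduces the claim to two short Monte Carlo calculations.

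The estimator $\widehat{\cL}^{\text{KL}}$ is identical in form to its counterpart in the relax-loss case, so I would simply invoke the first half of the argument in \Cref{app:prop:unbiased-relax} to conclude $\bbE_{p_\theta}[\widehat{\cL}^{\text{KL}}] = \KL{p_\theta}{p_{\thetabase}} = \cL^{\text{KL}}$. For the reward term $\widehat{\cL}^{\text{r}} = -M^{-1}\sum_{m=1}^{M} r_{\widehat{\mbalpha}_N}(\mbx_m)$, I would condition on the pre-computed samples $\mbx_1,\dots,\mbx_N \iid{\sim} p_{\thetabase}$ that define $\widehat{\mbalpha}_N$; these are drawn once before fine-tuning and are independent of the batch $\mbx_m \iid{\sim} p_\theta$ used in the current iterate. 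Treating $\widehat{\mbalpha}_N$ as fixed, linearity of expectation together with the i.i.d.\ assumption on $\mbx_1,\dots,\mbx_M$ immediately yields $\bbE_{p_\theta}[\widehat{\cL}^{\text{r}} \mid \widehat{\mbalpha}_N] = -\bbE_{p_\theta}[r_{\widehat{\mbalpha}_N}(\mbx)] = \cL^{\text{r}}$. The exchange of expectation and finite sum is legitimate because $r_{\widehat{\mbalpha}_N}(\mbx) = \widehat{\mbalpha}_N^\top \mbh(\mbx)$ and \Cref{app:assumption:dominated} guarantees that $\mbh(\mbx)$ has finite second moment under $p_\theta$.

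Combining these two facts and applying the tower property gives $\bbE_{p_\theta}[\widehat{\cL}^{\text{reward}}] = \cL^{\text{KL}} + \cL^{\text{r}}$, which equals $\cL^{\text{reward}}$ up to the $\theta$-independent constant $C$. I anticipate no real obstacle here; the proof is essentially bookkeeping. The only subtlety worth flagging is the status of $C$, which either needs to be absorbed into the definition of $\widehat{\cL}^{\text{reward}}$ for the claim to hold literally, or acknowledged as a harmless additive offset that has no effect on gradient-based optimization. Either convention suffices to close the argument.
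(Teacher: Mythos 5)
Your argument matches the paper's proof essentially verbatim: both reuse the Monte Carlo calculation from \Cref{app:prop:unbiased-relax} for $\widehat{\cL}^{\text{KL}}$ and then observe that $\widehat{\cL}^{\text{r}} = -M^{-1}\sum_m r_{\widehat{\mbalpha}_N}(\mbx_m)$ is a plain Monte Carlo estimate of $\cL^{\text{r}}$, with $\widehat{\mbalpha}_N$ treated as fixed since it is computed once up front from samples independent of the fine-tuning batch. Your remark about the $\theta$-independent constant $C$ in \eqref{eq:reward-loss} is a fair piece of extra care that the paper's proof silently elides---strictly speaking $\widehat{\cL}^{\text{reward}}$ as defined in \Cref{alg:cgm-reward} is unbiased for $\cL^{\text{reward}} - C$, which is what matters for gradient estimation.
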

\begin{proof}
    In the proof of \Cref{app:prop:unbiased-relax} we already demonstrated $M^{-1}\sum_{m=1}^M \log \frac{p_{\theta}(\mbx_m)}{p_{\thetabase}(\mbx_m)}$ is unbiased for $\cL^{\mathrm{KL}}$.
    By an identical argument, $-M^{-1}\sum_{m=1}^M r_{\widehat{\mbalpha}_N}(\mbx_m)$ is unbiased for $\cL^{\mathrm{r}} = \bbE_{p_{\theta}}[-r_{\widehat{\mbalpha}_N}(\mbx)]$ (again, it is a Monte Carlo estimate).
\end{proof}

\subsection{Unbiased Gradient Estimates}\label{app:sec:unbiased-gradients}

As we detailed in \Cref{sec:unbiased-gradient}, the naïve idea of taking the unbiased loss estimators $\widehat{\cL}^{\mathrm{relax}}$, $\widehat{\cL}^{\mathrm{reward}}$ and differentiating them with respect to $\theta$ will not yield unbiased estimates for the gradients of $\cL^{\mathrm{relax}}$ and $\cL^{\mathrm{reward}}$.
This is because the probability distribution with respect to which the expectation is taken also depends on $\theta$, which needs to be taken into account in the gradient estimate.

For CGM-reward, we propose the gradient estimate
\begin{equation}
\begin{aligned}\label{app:eq:gr-reward}
    &\widehat{G}^{\mathrm{reward}} = \frac{1}{M} \sum_{m=1}^M (\nabla_{\theta} w_m(\theta, \theta')) \left( l_m^{\mathrm{LOO}}  - r_m^{\mathrm{LOO}} \right), \quad w_m(\theta, \theta') = \frac{p_{\theta}(\mbx_m)}{p_{\theta'}(\mbx_m)}\\
    &l_m^{\mathrm{LOO}} = l_m - \frac{1}{M-1}\sum_{m' \neq m} l_{m'}, \quad l_m = \log \frac{p_{\theta}(\mbx_m)}{p_{\thetabase}(\mbx_m)}\\
    &r_m^{\mathrm{LOO}} = r_m - \frac{1}{M-1}\sum_{m' \neq m} r_{m'}, \quad r_m = r_{\widehat{\mbalpha}_N}(\mbx_m) 
\end{aligned}
\end{equation}
As we explained in \Cref{sec:unbiased-gradient}, $w_m(\theta, \theta')$ can be viewed as the weights of an \textit{importance sampling} scheme, where $p_{\theta'}$ is the proposal distribution and $p_{\theta}$ is the target distribution. 
We choose $\theta' = \theta$ so that the proposal distribution is equal to the target distribution. 
For this choice of proposal, the weights $w_m$ are all equal to $1$. 
However, their gradient with respect to $\theta$ is equal to the score of the calibrated model at $\mbx_m$, $\nabla_{\theta} \log p_{\theta}(\mbx_m)$.
The expression \eqref{app:eq:gr-reward}, excluding the terms $(M-1)^{-1} \sum_{m' \neq m} l_{m'}$ and $(M-1)^{-1} \sum_{m' \neq m} r_{m'}$ is known as the score function gradient estimate or, in the terminology of reinforcement learning, the REINFORCE gradient estimate \citep{williams1992simple}.

The terms $(M-1)^{-1} \sum_{m' \neq m} l_{m'}$ and $(M-1)^{-1} \sum_{m' \neq m} r_{m'}$ in \eqref{app:eq:gr-reward} are known as leave-one-out baselines \citep{kool2019buy} corresponding to sample $\mbx_m$.
Including these terms adds to the score function gradient estimate a \textit{control variate}, which is a term that has expectation zero under $p_{\theta}$ but is correlated with each individual term in the estimate \citep{lavenberg1981perspective, ranganath2014black,mohamed2020monte}. 
Indeed, we observe that by independence of the samples $\{\mbx_m\}_{m=1}^M$, it holds that for each $m \neq m^\prime$,
\begin{align*}
\bbE_{p_\theta}\left[(\nabla_\theta \log p_{\theta}(\mbx_m)) (l_{m'}  - r_{m'}) \right] = \bbE_{p_\theta}\left[\nabla_\theta \log p_{\theta}(\mbx_m)\right]\bbE_{p_{\theta}} \left[l_{m'}  - r_{m'} \right] = 0.
\end{align*}
Consequently, while the inclusion of the leave-one-out averages does not affect the unbiasedness of our gradient estimate, they can reduce its variance. 

\begin{proposition}\label{app:prop:unbiased-grad-reward}
    $\widehat{G}^{\mathrm{reward}}$ is unbiased for the gradient of the reward loss, $\nabla_{\theta} \cL^{\mathrm{reward}}$.
\end{proposition}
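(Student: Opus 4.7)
The plan is to reuse the importance-sampling identity of Eq.~\ref{eqn:lprime}, which makes the sampling distribution independent of $\theta$, then differentiate under the integral, specialize to $\theta'=\theta$, and finally check that the leave-one-out baselines contribute zero in expectation.

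First I would write $\cL^{\text{reward}}(\theta) = \E_{p_\theta}[f(\mbx,\theta)] + C$ with $f(\mbx,\theta) := \log p_\theta(\mbx)/p_{\thetabase}(\mbx) - r_{\widehat{\mbalpha}_N}(\mbx)$ and $\nabla_\theta C = 0$, and apply the identity to obtain $\cL^{\text{reward}}(\theta) = \E_{p_{\theta'}}[w(\theta,\theta')\,f(\mbx,\theta)] + C$ for arbitrary $\theta'$, where $w(\theta,\theta') := p_\theta(\mbx)/p_{\theta'}(\mbx)$. By \Cref{app:assumption:dominated}, both the integrand and its $\theta$-derivative are uniformly dominated by a $p_{\theta'}$-integrable envelope, so dominated convergence lets me pull $\nabla_\theta$ inside:
$$\nabla_\theta \cL^{\text{reward}}(\theta,\theta') = \E_{p_{\theta'}}\!\bigl[(\nabla_\theta w(\theta,\theta'))\,f(\mbx,\theta) + w(\theta,\theta')\,\nabla_\theta \log p_\theta(\mbx)\bigr],$$
using $\nabla_\theta f(\mbx,\theta) = \nabla_\theta \log p_\theta(\mbx)$ since only $\log p_\theta(\mbx)$ depends on $\theta$ in $f$.

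Setting $\theta'=\theta$ kills the weights ($w(\theta,\theta)\equiv 1$), and the second term reduces to $\E_{p_\theta}[\nabla_\theta \log p_\theta(\mbx)] = 0$ by the standard score identity. Moreover $\nabla_\theta w_m|_{\theta'=\theta} = \nabla_\theta \log p_\theta(\mbx_m)$, which is exactly what the \texttt{stop-grad} implementation of \Cref{alg:cgm-reward} computes by autodiff. This yields the single-sample unbiased score-function estimator and, by linearity, the $M$-sample form $M^{-1}\sum_m \nabla_\theta \log p_\theta(\mbx_m)\,(l_m - r_m)$, which coincides with $\widehat{G}^{\text{reward}}$ up to the LOO subtractions.

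Finally, I would verify that the leave-one-out terms are mean-zero control variates. For any $m\neq m'$, by independence of $\mbx_m$ and $\mbx_{m'}$ and the score identity,
$$\E_{p_\theta}\!\bigl[\nabla_\theta \log p_\theta(\mbx_m)\, l_{m'}\bigr] = \E_{p_\theta}[\nabla_\theta \log p_\theta(\mbx_m)]\,\E_{p_\theta}[l_{m'}] = 0,$$
and identically with $r_{m'}$ replacing $l_{m'}$; summing over $m'\neq m$ therefore does not change the expectation of $\widehat{G}^{\text{reward}}$. The principal technical step is the interchange of differentiation and integration, which is precisely the role of \Cref{app:assumption:dominated}; the remainder mirrors the classical REINFORCE-with-baseline derivation.
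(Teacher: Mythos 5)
Your proposal is correct and takes essentially the same route as the paper: rewrite the loss as an expectation under a fixed $p_{\theta'}$ via importance weights, differentiate under the integral using \Cref{app:assumption:dominated}, evaluate at $\theta'=\theta$ where the weights become $1$ and the extra score term vanishes in expectation, and check that the leave-one-out baselines are mean-zero by independence. The only cosmetic difference is that you fold the verification of the LOO control variates into the proof itself, whereas the paper establishes that fact in the surrounding exposition and simply invokes it.
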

\begin{proof}
    We start by writing out the gradient of $\cL^{\mathrm{reward}}$ directly:
    \begin{align}
        \nabla_{\theta} \cL^{\mathrm{reward}}(\theta) &= \nabla_{\theta} \bbE_{p_{\theta}}\left[ \log \frac{p_{\theta}(\mbx)}{p_{\thetabase}(\mbx)} - r_{\widehat{\mbalpha}_N}(\mbx) \right]\nonumber\\
        &= \nabla_{\theta} \int   \left\{ \log \frac{p_{\theta}(\mbx)}{p_{\thetabase}(\mbx)} - r_{\widehat{\mbalpha}_N}(\mbx) \right\} p_{\theta}(d\mbx)\nonumber\\
        &= \nabla_{\theta} \int   
        \frac{p_{\theta}(\mbx)}{p_{\theta'}(\mbx)}
        \left\{ \log \frac{p_{\theta}(\mbx)}{p_{\thetabase}(\mbx)} - r_{\widehat{\mbalpha}_N}(\mbx) \right\} p_{\theta'}(d\mbx)\nonumber\\
        &\overset{(\star)}{=} \bbE_{p_{\theta}}\left[\left(\nabla_{\theta}  \frac{p_{\theta}(\mbx)}{p_{\theta'}(\mbx)} \right) \left\{ \log \frac{p_{\theta}(\mbx)}{p_{\thetabase}(\mbx)} - r_{\widehat{\mbalpha}_N}(\mbx)\right\} \right] + \bbE_{p_{\theta}}\left[\nabla_{\theta}  \frac{p_{\theta}(\mbx)}{p_{\theta'}(\mbx)}\right]\label{app:eq:score} 
    \end{align}
    where $\theta' = \texttt{stop-grad}(\theta)$.
    In equality $(\star)$, exchange of the gradient and expectation is permissible as a consequence of dominated convergence and \Cref{app:assumption:dominated}. 
    The second term is the expected score, which is zero. 
    And so the gradient of the reward loss is
    \begin{align}\label{app:eq:grad-reward-loss}
    \nabla_{\theta} \cL^{\mathrm{reward}}(\theta) = \bbE_{p_{\theta}}\left[(\nabla_{\theta} \log p_{\theta}(\mbx)) \left\{ \log \frac{p_{\theta}(\mbx)}{p_{\thetabase}(\mbx)} - r_{\widehat{\mbalpha}_N}(\mbx)\right\} \right].
    \end{align}
    Looking at our gradient estimator $\widehat{G}^{\mathrm{reward}}$ in \eqref{app:eq:gr-reward} and ignoring the leave-one-out averages, we see that it is exactly the Monte Carlo estimate of the gradient of $\cL^{\mathrm{reward}}$ \eqref{app:eq:grad-reward-loss}. 
\end{proof}
Dropping the potentially noisy expected score term in \eqref{app:eq:score}, as is done by \citet{ranganath2014black}, also reduces variance of our gradient estimate.

Deriving an unbiased gradient estimate for the relax loss is more challenging, since the loss cannot be written as the expectation of some objective under $p_{\theta}$. 
Just as we did for the reward loss, we can compute an unbiased estimate for the gradient of $\cL^{\mathrm{KL}}$ in the relax loss by drawing independent samples $\mbx_m \sim p_{\theta'}$ and then differentiating the importance sampling weights $w_m(\theta, \theta')$ 
\begin{align*}
\widehat{G}_{\mathrm{KL}} = \frac{1}{M} \sum_{m=1}^M (\nabla_{\theta} w_m(\theta, \theta')) l_m^{\mathrm{LOO}}.
\end{align*}

And so it remains to compute an unbiased gradient estimate for $\cL^{\mathrm{viol}}$. 
To do so, we first recall the unbiased estimate $\widehat{\cL}^{\mathrm{viol}}$ for $\cL^{\mathrm{viol}}$ that we introduced in \Cref{subsec:loss-relax}
\begin{align*}
\widehat{\cL}^{\mathrm{viol}}(\{\tilde{\mbh}_m\}_{m=1}^M) := \bigg\| \frac{1}{M} \sum_{m=1}^M \tilde{\mbh}_m  \bigg\|^2 - \frac{1}{M(M-1)} \sum_{m=1}^M \bigg\|\tilde{\mbh}_m - \frac{1}{M}\sum_{m'=1}^M \tilde{\mbh}_{m'}\bigg\|^2,
\end{align*}
where $\mbx_m$ are independent samples from $p_{\theta}$ and $\tilde{\mbh}_m = \mbh(\mbx_m)-\mbh^\ast$.
We propose a modification to this estimate wherein we draw independent samples $\mbx_m \sim p_{\theta'}$ and replace $\{\tilde{\mbh}_m\}_{m=1}^M$ by $\{w_m(\theta, \theta')\tilde{\mbh}_m\}_{m=1}^M$. 
To estimate the gradient of $\|\bbE_{p_{\theta}}[\mbh] - \mbh^\ast\|^2 = \|\bbE_{p_{\theta}}[\tilde{\mbh}]\|^2$, we compute the gradient of  $\widehat{\cL}^{\mathrm{viol}}(\{w_m(\theta, \theta')
\tilde{\mbh}_m\}_{m=1}^M)$
with respect to $\theta$ and then evaluate the result at $\theta' = \theta$. In Algorithms \ref{alg:cgm-relax} and \ref{alg:cgm-reward}, we implement our gradient estimate for $\cL^{\mathrm{viol}}$ by sampling $\mbx_m$ independently from $p_{\texttt{stop-grad}(\theta)}$ and differentiating $\widehat{\cL}^{\mathrm{viol}}(\{w_m(\theta, \theta')
\tilde{\mbh}_m\}_{m=1}^M)$ with $\theta' = \texttt{stop-grad}(\theta)$.

This yields the overall gradient estimator for the relax loss
\begin{align*}
    \widehat{G}^{\mathrm{relax}} = \nabla_{\theta} \widehat{\cL}^{\mathrm{viol}}\left(\left\{w_m(\theta, \theta')\tilde{\mbh}_m \right\}_{m=1}^M\right) + \lambda \widehat{G}_{\mathrm{KL}}, \quad \mbx_m \overset{i.i.d.}{\sim} p_{\theta'}, \quad \theta' = \texttt{stop-grad}(\theta).
\end{align*}

In order to prove that $ \widehat{G}^{\mathrm{relax}}$ is unbiased for $\nabla_{\theta} \cL^{\mathrm{relax}}$, we need to show  $\widehat{\cL}^{\mathrm{viol}}(\{w_m(\theta, \theta')
\tilde{\mbh}_m\}_{m=1}^M)$ remains unbiased for $\cL^{\mathrm{viol}}$ when $\mbx_m$ are sampled independently from $p_{\theta'}$. 
Then, since the distribution from which $\mbx_m$ are sampled does not depend on $\theta$, it is allowable to exchange the gradient with the expectation.
\begin{proposition}\label{app:prop:unbiased-gradient-relax}
    $\widehat{G}^{\mathrm{relax}}$ is unbiased for the gradient of the relax loss, $\nabla_{\theta} \cL^{\mathrm{relax}}$.
\end{proposition}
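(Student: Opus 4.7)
The plan is to split $\widehat{G}^{\text{relax}} = \widehat{G}^{\text{viol}} + \lambda \widehat{G}_{\text{KL}}$, where $\widehat{G}^{\text{viol}} := \nabla_\theta \widehat{\cL}^{\text{viol}}(\{w_m(\theta,\theta')\tilde{\mbh}_m\}_{m=1}^M)|_{\theta'=\theta}$, and argue that $\widehat{G}_{\text{KL}}$ is unbiased for $\nabla_\theta \cL^{\text{KL}}$ while $\widehat{G}^{\text{viol}}$ is unbiased for $\nabla_\theta \cL^{\text{viol}}$. The first is essentially a special case of the argument in \Cref{app:prop:unbiased-grad-reward} applied to the reward loss: starting from the identity $\cL^{\text{KL}}(\theta) = \bbE_{p_{\theta'}}[w_m(\theta,\theta') \log(p_\theta(\mbx)/p_{\thetabase}(\mbx))]$, one exchanges gradient and expectation (justified by \Cref{app:assumption:dominated}) and observes that subtracting the leave-one-out baselines $l_m^{\text{LOO}}$ leaves the expectation unchanged because the cross-terms factor as $\bbE[\nabla_\theta w_m]\bbE[l_{m'}] = 0$ for $m \neq m'$ by independence of the samples.

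The main work is showing $\bbE_{p_\theta}[\widehat{G}^{\text{viol}}] = \nabla_\theta \cL^{\text{viol}}(\theta)$. The key step is to establish that, for every $\theta'$ in a neighborhood of $\theta$,
\begin{equation*}
  F(\theta,\theta') := \bbE_{\mbx_{1:M} \iid\sim p_{\theta'}}\!\left[\widehat{\cL}^{\text{viol}}(\{w_m(\theta,\theta')\tilde{\mbh}_m\}_{m=1}^M)\right] = \cL^{\text{viol}}(\theta),
\end{equation*}
so in particular $F(\theta,\theta')$ does not depend on $\theta'$. To verify this, let $Y_m := w_m(\theta,\theta')\tilde{\mbh}_m$. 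The $Y_m$ are i.i.d.\ under $p_{\theta'}$ with $\bbE[Y_m] = \bbE_{p_\theta}[\tilde{\mbh}(\mbx)]$ by the usual importance-sampling identity, so the decomposition $\bbE\|M^{-1}\sum Y_m\|^2 = \|\bbE[Y_1]\|^2 + M^{-1}\operatorname{Tr}\operatorname{Cov}(Y_1)$ applies verbatim, and the trace of the sample covariance of $\{Y_m\}$ is unbiased for $\operatorname{Tr}\operatorname{Cov}(Y_1)$. Subtracting these two quantities as in the proof of \Cref{app:prop:unbiased-relax} yields $\|\bbE[Y_1]\|^2 = \|\bbE_{p_\theta}[\tilde{\mbh}]\|^2 = \cL^{\text{viol}}(\theta)$.

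Given this identity, the unbiasedness of $\widehat{G}^{\text{viol}}$ is immediate. Because $F(\theta,\theta') = \cL^{\text{viol}}(\theta)$, differentiating both sides in the first argument and evaluating at $\theta'=\theta$ gives $\partial_\theta F(\theta,\theta')|_{\theta'=\theta} = \nabla_\theta \cL^{\text{viol}}(\theta)$. Under \Cref{app:assumption:dominated}, dominated convergence justifies interchanging $\partial_\theta$ with the $p_{\theta'}$-expectation defining $F$, so
\begin{equation*}
  \nabla_\theta \cL^{\text{viol}}(\theta) = \bbE_{p_{\theta'}}\!\left[\partial_\theta \widehat{\cL}^{\text{viol}}(\{w_m(\theta,\theta')\tilde{\mbh}_m\})\right]\Big|_{\theta'=\theta} = \bbE_{p_\theta}[\widehat{G}^{\text{viol}}].
\end{equation*}
Combining with the KL piece gives $\bbE_{p_\theta}[\widehat{G}^{\text{relax}}] = \nabla_\theta \cL^{\text{viol}}(\theta) + \lambda \nabla_\theta \cL^{\text{KL}}(\theta) = \nabla_\theta \cL^{\text{relax}}(\theta)$.

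The main obstacle is the second step: one might worry that inserting importance weights $w_m(\theta,\theta')$ into the bias-correction term of $\widehat{\cL}^{\text{viol}}$ spoils unbiasedness, since the bias correction was originally designed to cancel the sampling variance of unweighted $\tilde{\mbh}_m$. The resolution is that unbiasedness of $\widehat{\cL}^{\text{viol}}$ only required the $M$ input vectors to be i.i.d.\ with the desired mean, a property preserved by the weighted vectors $Y_m$. All remaining dominated-convergence exchanges are routine consequences of \Cref{app:assumption:dominated}.
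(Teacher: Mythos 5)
Your proof is correct and takes essentially the same approach as the paper: establish that the importance-weighted estimator $\widehat{\cL}^{\text{viol}}(\{w_m(\theta,\theta')\tilde{\mbh}_m\})$ is unbiased for $\cL^{\text{viol}}(\theta)$ for any fixed $\theta'$ (via the variance decomposition on the i.i.d.\ weighted vectors), then interchange $\nabla_\theta$ with the $p_{\theta'}$-expectation by dominated convergence and evaluate at $\theta'=\theta$. Your explicit introduction of the two-argument function $F(\theta,\theta')$ and the note that it is constant in $\theta'$ makes the role of \texttt{stop-grad} slightly more transparent than the paper's phrasing, but the substance is identical.
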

\begin{proof}
    From \Cref{app:prop:unbiased-relax}, we know that $\widehat{G}_{\mathrm{KL}}$ is unbiased for $\nabla_{\theta} \cL^{\mathrm{KL}}$, and so it only remains to verify that the second term is unbiased for $\nabla_{\theta} \cL^{\mathrm{viol}} = \nabla_{\theta}\|\bbE_{p_{\theta}}[\mbh] - \mbh^\ast\|^2$. 
    To this end, by repeating the proof of \Cref{app:prop:unbiased-relax} (i.e., using the definition of the variance), it is straightforward to show 
    \begin{align*}
        \bbE_{p_{\theta^\prime}} \left[\widehat{\cL}^{\mathrm{viol}}\left(\left\{\frac{p_{\theta}(\mbx_m)}{p_{\theta^\prime}(\mbx_m)}\tilde{\mbh}_m\right\}_{m=1}^M\right)\right] = \left\| \bbE_{p_{\theta^\prime}} \left[ \frac{p_{\theta}(\mbx_m)}{p_{\theta^\prime}(\mbx_m)}\tilde{\mbh}_m\right]\right\|^2 = \|\bbE_{p_{\theta}}[\tilde{\mbh}]\|^2.
    \end{align*}
    In other words, $\widehat{\cL}^{\mathrm{viol}}(\{w_m(\theta, \theta')\tilde{\mbh}_m\}_{m=1}^M)$ is unbiased for $\cL^{\mathrm{viol}}$.
    However, since the samples $\{\mbx_m\}_{m=1}^M$ are drawn from $p_{\theta^\prime}$, a probability distribution that does not depend on $\theta$, then we can exchange the gradient and expectation by appealing to dominated convergence and \Cref{app:assumption:dominated}. 
    In particular, we have
    \begin{align*}
         \bbE_{p_{\theta^\prime}} \left[ \nabla_{\theta} \widehat{\cL}^{\mathrm{viol}}\left(\left\{\frac{p_{\theta}(\mbx_m)}{p_{\theta^\prime}(\mbx_m)}\tilde{\mbh}_m\right\}_{m=1}^M\right)\right] &= \nabla_{\theta} \bbE_{p_{\theta^\prime}} \left[  \widehat{\cL}^{\mathrm{viol}}\left(\left\{\frac{p_{\theta}(\mbx_m)}{p_{\theta^\prime}(\mbx_m)}\tilde{\mbh}_m\right\}_{m=1}^M\right)\right]\\ &= \nabla_{\theta} \cL^{\mathrm{viol}}, 
    \end{align*}
    where the final line follows from the unbiasedness of $\widehat{\cL}^{\mathrm{viol}}(\{w_m\tilde{\mbh}_m\}_{m=1}^M)$ for $\cL^{\mathrm{viol}}$. 
\end{proof}
As we discussed, the key insight from the proof of \Cref{app:prop:unbiased-gradient-relax} is that, by introducing importance weights, we can compute an unbiased estimate to $\|\bbE_{p_{\theta}}[\mbh] - \mbh^\ast\|^2 = \|\bbE_{p_{\theta}}[\tilde{\mbh}]\|$ without sampling directly from $p_{\theta}$. 

\section{Maximum Entropy Principle}\label{app:sec:max-entropy}

In this section, we provide an overview of the maximum entropy principle, which we use in \Cref{subsec:loss-reward} to define the reward loss $\cL^{\mathrm{reward}}$. 
First, in \Cref{app:subsec:max-entropy-statement} we formally state and prove the maximum entropy principle. 
In \Cref{app:subsec:max-entropy-estimator}, we provide greater detail on our estimate $\widehat{\mbalpha}_N$ for the parameters $\mbalpha^\ast$ of the maximum entropy solution.
In \Cref{app:subsec:relationship-relaxed-reward}, we characterize the relationship between the relax and reward losses by considering a problem whose solution is close to the optimum of the relax loss, and which resembles the maximum entropy problem. 
Lastly, in \Cref{app:subsec:consistency-normality}, we study the behavior of the estimate $\widehat{\mbalpha}_N$ in the limit as the number of samples $N$ becomes large. 

Prior to jumping into the details of the maximum entropy principle, we work through an illustrative example that we discuss throughout this section.

\textbf{Example.}
Suppose $\mbx \in \bbR$, $\mbh(\mbx) = \mathbbm{1}\{\mbx > 0\}$, and $\mbh^\ast \in \bbR$. 
Also define $h_b = \bbP_{p_{\thetabase}}(\mbx > 0)$, and assume $0 < h_b < 1$. 
In this example, the calibration problem amounts to either upweighting or downweighting the amount of probability mass $h_b$ that lies above $0$ under the base model $p_{\thetabase}$. 
By \Cref{thm:max-entropy}, the maximum entropy solution has the form $p_{\mbalpha^\ast} \propto p_{\thetabase}(\mbx) \exp\{\mbalpha^\ast \mbh(\mbx)\}$ for some $\mbalpha^\ast \in \bbR$ that we need to determine. 
From this expression for $p_{\mbalpha^\ast}$, we obtain  
\begin{align*}
     1 - \mbh^\ast &= \bbE_{p_{\mbalpha^\ast}}[1 - \mbh(\mbx)] =  \frac{1}{h_b \exp( \mbalpha^*) + (1-h_b)}(1-h_b),\\
     \mbh^\ast &= \bbE_{p_{\mbalpha^\ast}}[\mbh(\mbx)] = \frac{1}{h_b \exp( \mbalpha^*) + (1-h_b)} h_b\exp( \mbalpha^*). 
\end{align*}
Dividing the first equation by the second and rearranging yields $\mbalpha^* = \log(\frac{\mbh^\ast(1-h_b)}{(1-\mbh^\ast)h_b})$. 
Following the same argument for the empirical distribution of $\{\mbx_n\}_{n=1}^N$, our estimator for $\mbalpha^\ast$ is $\widehat{\mbalpha}_N = \log(\frac{\mbh^\ast(1-\bar{\mby}_N)}{(1-\mbh^\ast)\bar{\mby}_N})$, where $\bar{\mby}_N = \frac{1}{N} \sum_{n=1}^N \mby_n$, $\mby_n = \mathbbm{1}\{\mbx_n > 0\} \overset{d}{=} \mathrm{Bernoulli}(h_b)$ for $\mbx_n \overset{i.i.d.}{\sim} p_{\thetabase}$.

We point out that $\mbalpha^\ast$ and $\widehat{\mbalpha}_N$ can equivalently be derived by differentiating the objectives \eqref{eq:dual-exact} and \eqref{eq:dual-opt}, respectively, and setting them equal to $0$.

\subsection{Precise Statement}\label{app:subsec:max-entropy-statement}

Since the maximum entropy problem is not specific to generative model calibration, we present it in a more general setting. 
Our presentation builds on standard results from exponential families and convex analysis. 
We recommend \citet{wainwright2008graphical} for relevant background. 

In particular, we consider $X := (X, \cX)$ a measurable space,  $P$ a probability measure defined on $X$, $\mbh: X \to \bbR^d$ an $X$-measurable constraint function, and $\mbh^\ast$ a target value for the moment of $\mbh$. 
The maximum entropy problem corresponding to probability measure $P$, constraint $\mbh$, and target moment $\mbh^\ast$ is
\begin{align}\label{app:eq:prob-max-ent}
        \inf_{Q \in \cP(P)} \KL{Q}{P}, \ \text{s.t.} \  \bbE_{Q}[\mbh(\mbx)] = \mbh^\ast.
\end{align}

$\cP(P)$ is the collection of all probability measures having a density with respect to $P$, which, by the Radon-Nikodym theorem, is equal to the collection of all absolutely continuous probability measures with respect to $P$.
Choosing $P = p_{\thetabase}$ yields the maximum entropy problem corresponding to the calibration problem.

As we mentioned in \Cref{subsec:loss-reward}, we impose a condition on the target moment $\mbh^\ast$  to ensure (i) there exists a solution to the maximum entropy problem (ii) and this solution is an exponential tilt of $P$.
\begin{assumption}[Interior moment condition]\label{app:assumption:strong-duality}
    Define the subset $\cM$ of $\bbR^d$ comprised of all possible moments of $\mbh$ attainable by probability distributions $Q$ having a density with respect to $P$
    \begin{align*}
        \cM = \left\{ \bbE_{Q}[\mbh(\mbx)]  \  \bigg| \ Q \in \cP(P), \ \bbE_{Q}[\|\mbh(\mbx)\|] = \int \|\mbh(\mbx)\|  Q(d\mbx) < \infty \right\}.
    \end{align*}
    $\mbh^*$ lies in the \textit{relative interior} of $\cM$, written $\mathrm{relint}(\cM)$.
\end{assumption}
Since $\cM$ is a convex set, the condition $\mbh^* \in \mathrm{relint}(\cM)$ can equivalently be stated as for every $\mby \neq \mbh^\ast$ in $\cM$, there exists some $\mbz$ in $\cM$ and $\kappa \in (0, 1)$ for which $\mbh^\ast = \kappa \mbz + (1-\kappa) \mby$.

To see why \Cref{app:assumption:strong-duality} is necessary for the solution to be an exponential tilt of $p_{\thetabase}$, recall the example discussed at the beginning of \Cref{app:sec:max-entropy}. 
In this case, $\mathrm{relint}(\cM) = (0, 1)$. 
If $\mbh^\ast \notin [0, 1]$, then  there does not exist any probability distribution $p$ having density with respect to $p_{\thetabase}$ for which $\bbE_p[\mbh(\mbx)] = \mbh^\ast$. 
And if $\mbh^\ast$ is either $0$ or $1$, then the solution to the maximum entropy problem is proportional to $p_{\thetabase}(\mbx) \mathbbm{1}\{\mbx \leq 0 \}$ or $p_{\thetabase}(\mbx) \mathbbm{1}\{\mbx > 0 \}$, respectively. 
Neither of these solutions is an exponential tilt of $p_{\thetabase}$, \cref{eq:max-ent-solution}.

Our proof of the maximum entropy principle leverages classical convex duality \citep{rockafellar1997convex} by showing that \eqref{app:eq:prob-max-ent} is a convex problem, defined on the infinite-dimensional space of all probability densities for which $\mbh$ has a finite moment. 
The corresponding \textit{dual problem} is 
\begin{align}\label{app:eq:dual-objective}
    \sup_{\mbalpha \in \bbR^d} \mbalpha^\top \mbh^\ast - A_P(\mbalpha), \quad A_P(\mbalpha) := \log\bigg( \bbE_P[\exp\{ r_{\mbalpha}(\mbx) \}] \bigg), \quad r_{\mbalpha}(\mbx) = \mbalpha^\top \mbh(\mbx),
\end{align}
which is concave. $A_P: \bbR^d \to \bbR \cup \{+ \infty\}$ is known as the \textit{log-normalizer} or \textit{cumulant generating function} corresponding to the exponential family
\begin{align}\label{app:eq:exp-family}
    \exp\{r_{\mbalpha}(\mbx) - A_P(\mbx)\}P(d\mbx).
\end{align}
We will make the standard assumption that the domain of $A_P$ is open
\begin{assumption}[Domain of log-normalizer]\label{app:assumption:log-normalizer}    
    The subset $\Xi = \{\mbalpha \in \bbR^d \ | \ A_P(\mbalpha) < \infty \}$ is open. 
\end{assumption}
Whenever $A_P$ is finite, \eqref{app:eq:exp-family} is a well-defined probability measure on $X$. 
$\Xi$ is known as the \textit{natural parameter space} of the exponential family \eqref{app:eq:exp-family}. 
When \Cref{app:assumption:log-normalizer} holds, the exponential family is said to be \textit{regular}. 

The log-normalizer $A_P$ possesses many nice properties: for instance, it is convex and infinitely differentiable on $\Xi$. 
Convexity can be seen by computing the Hessian of $A_P(\mbalpha)$
\begin{align}\label{app:eq:var-exp-family}
    &\nabla_{\mbalpha}^2 A_P(\mbalpha) = \frac{\bbE_P[(\mbh(\mbx) - \nabla_{\mbalpha}A_P(\mbalpha)) (\mbh(\mbx) - \nabla_{\mbalpha}A_P(\mbalpha))^\top \exp\{r_{\mbalpha}(\mbx) \}]}{\bbE_P[\exp\{r_{\mbalpha}(\mbx) \}]}
\end{align}
and recognizing that it is positive semi-definite. 
Differentiability is addressed in the remark following \Cref{app:lm:mgf-dominate}.

Now that we have introduced the dual of the maximum entropy problem, we are prepared to give a precise statement and proof of the maximum entropy principle
\begin{theorem}[\citet{kullback1959information}]\label{app:thm:max-ent}
    Suppose Assumptions \ref{app:assumption:strong-duality} and \ref{app:assumption:log-normalizer} hold. 
    Then there exists a probability measure $Q^\ast \in \cP(P)$ with density $d Q^\ast / dP \propto \exp\left( r_{\mbalpha^\ast}(\mbx) \right)$. 
    Moreover, $Q^\ast$ is the solution to the maximum entropy problem \eqref{app:eq:prob-max-ent} and is unique up to $P$-null sets.
\end{theorem}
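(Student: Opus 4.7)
The plan is to exploit the convex-duality structure of the problem by first constructing a candidate of the exponential-tilt form and then verifying optimality via a Pythagorean identity for the KL divergence. Concretely, I propose the candidate $Q^\ast$ defined by $dQ^\ast/dP = \exp(r_{\mbalpha^\ast}(\mbx) - A_P(\mbalpha^\ast))$ for a parameter $\mbalpha^\ast \in \Xi$ to be chosen so that $\bbE_{Q^\ast}[\mbh(\mbx)] = \mbh^\ast$; once such an $\mbalpha^\ast$ is shown to exist, optimality and uniqueness will follow from a short calculation.

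Finding $\mbalpha^\ast$ is equivalent to solving a critical-point equation for the concave dual $\mbalpha \mapsto \mbalpha^\top \mbh^\ast - A_P(\mbalpha)$. Differentiating $A_P$ under the integral sign, which is permissible because \Cref{app:assumption:log-normalizer} makes $\Xi$ open and $A_P$ infinitely differentiable there, yields $\nabla A_P(\mbalpha) = \bbE_{Q_{\mbalpha}}[\mbh(\mbx)]$ where $Q_{\mbalpha}$ has density $\exp(r_{\mbalpha} - A_P(\mbalpha))$ against $P$, so the constraint reduces to $\nabla A_P(\mbalpha^\ast) = \mbh^\ast$. Existence of such a parameter then follows from the classical mean-parametrization result for regular exponential families, $\nabla A_P(\Xi) = \text{relint}(\cM)$, combined with \Cref{app:assumption:strong-duality} which places $\mbh^\ast$ in $\text{relint}(\cM)$. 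The hard part is the nontrivial inclusion $\nabla A_P(\Xi) \supseteq \text{relint}(\cM)$: it requires a coercivity argument on the dual along recession directions of $\Xi$ to rule out escape to the boundary, and I would cite \citet{wainwright2008graphical} for this rather than reprove it.

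With $Q^\ast$ in hand, for any competing $Q \in \cP(P)$ satisfying $\bbE_Q[\mbh] = \mbh^\ast$ and $\KL{Q}{P} < \infty$, absolute continuity of $Q$ with respect to $Q^\ast$ is automatic since $dQ^\ast/dP > 0$ everywhere, and
\begin{equation*}
    \KL{Q}{P} \;=\; \bbE_Q\bigl[\log(dQ/dQ^\ast)\bigr] + \bbE_Q\bigl[\log(dQ^\ast/dP)\bigr] \;=\; \KL{Q}{Q^\ast} + (\mbalpha^\ast)^\top \mbh^\ast - A_P(\mbalpha^\ast),
\end{equation*}
where the moment constraint is used to evaluate $\bbE_Q[r_{\mbalpha^\ast}]$ and the last two terms are constants independent of $Q$. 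Since $\KL{Q}{Q^\ast} \geq 0$ with equality iff $Q = Q^\ast$ $P$-a.e., the infimum in \eqref{app:eq:prob-max-ent} is attained uniquely by $Q^\ast$, establishing both optimality and uniqueness in a single stroke.
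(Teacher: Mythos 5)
Your proof is correct, and it takes a genuinely different route from the paper's. Both proofs lean on the same external fact to produce $\mbalpha^\ast$---the surjectivity of $\nabla A_P$ from $\Xi$ onto $\mathrm{relint}(\cM)$ from \citet{wainwright2008graphical}, combined with \Cref{app:assumption:strong-duality}---so neither argument is fully self-contained at that step. After that, however, the approaches diverge. The paper sets up a Fenchel--Rockafellar primal--dual pair, invokes weak duality, and closes the gap by showing the primal value at $Q^\ast$ equals the dual value at $\mbalpha^\ast$; uniqueness is then asserted separately from strict convexity of the KL. You instead write the Pythagorean decomposition directly: for any feasible $Q$,
\begin{equation*}
\KL{Q}{P} = \KL{Q}{Q^\ast} + (\mbalpha^\ast)^\top \mbh^\ast - A_P(\mbalpha^\ast) = \KL{Q}{Q^\ast} + \KL{Q^\ast}{P},
\end{equation*}
which is the classical information-geometric argument (essentially Csisz\'ar's). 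This is more elementary---it avoids the duality machinery entirely---and it delivers optimality and uniqueness in one stroke, since $\KL{Q}{Q^\ast} = 0$ iff $Q = Q^\ast$ $P$-a.e. The chain-rule step is justified exactly as you say: $dQ^\ast/dP > 0$ everywhere, so $Q^\ast$ and $P$ share null sets and $Q \ll Q^\ast$ is automatic. The only thing the paper's approach buys in exchange for its heavier setup is reuse: the Fenchel--Rockafellar framework carries over nearly verbatim to the penalized problem in \Cref{app:prop:relax-solution}, where the target is no longer a fixed tilt of $P$ and there is no direct Pythagorean analogue, so the paper keeps the two proofs structurally parallel. If you were only proving \Cref{app:thm:max-ent}, your route would be the preferable one.
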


Unlike the primal problem \eqref{app:eq:prob-max-ent}, the dual problem \eqref{app:eq:dual-objective} is defined on finite-dimensional Euclidean space, which makes it simpler to analyze. 
We first argue by weak duality that the value of \eqref{app:eq:prob-max-ent} is at least as large as \eqref{app:eq:dual-objective}. 
We then identify a vector $\mbalpha^\ast$ and a distribution $Q^\ast$ for which the primal and dual objectives are equal. 
By weak duality, this implies that $Q^\ast$ is optimal for the primal problem.

\begin{proof}[Proof of \Cref{app:thm:max-ent}]
    We first rewrite the primal problem \eqref{app:eq:prob-max-ent} in the form
    \begin{align*}
        &\inf_q \psi(q) + g(\cA q)\\
        &\psi(q) = \begin{cases}
            \int q(\mbx) \log(q(\mbx)) P(d\mbx) & \text{if $q \geq 0$}\\
            +\infty & \text{else}
        \end{cases}, \quad g(\mby_0, \mby_1) = \begin{cases}
             0 & \text{if $\mby_0 = 1$ and $\mby_1 = \mbh^\ast$}\\
            + \infty & \text{else}
        \end{cases},\\ &\cA(q) = \left( \int q(\mbx)P(d\mbx),\int \mbh(\mbx) q(\mbx)P(d\mbx)  \right)
    \end{align*}
   defined on the space of $X$-measurable functions $q$ for which $\bbE_P[|q(\mbx)|] = \int |q(\mbx)| P(d\mbx) < \infty$ and $\bbE_P[\| \mbh(\mbx) \||q(\mbx)|] = \int \| \mbh(\mbx) \| |q(\mbx)| P(d\mbx) < \infty$. 
   Here, $q$ represents the density of measure $Q$ with respect to $P$.
   $g(\cA q)$ imposes the constraint that $Q$ is a probability measure and that the expectation of $\mbh$ under $Q$ is $\mbh^\ast$. 
   And $\psi(q)$ is equal to the KL divergence between $Q$ and $P$. 
   
   Observe $\cA$ is a bounded, linear map defined on this space. And $\psi$ and $g$ are convex. 
   By Fenchel-Rockafellar duality \citep[][Theorem 4.4.2]{borwein2005techniques}, weak duality holds for the maximum entropy problem and its dual \eqref{app:eq:dual-objective}.
    
    \citet[][Theorem 3.3]{wainwright2008graphical} states that $\nabla_{\mbalpha} A_P$ is a surjective mapping from $\Xi$ onto $\mathrm{relint}(\cM)$. 
    Hence, there exists $\mbalpha^\ast \in \Xi$ for which $\nabla_{\mbalpha} A_P(\mbalpha^\ast) = \mbh^\ast$. 
    The value of the dual at $\mbalpha^\ast$ is
    \begin{align*}
        (\mbalpha^\ast)^\top \mbh^\ast - A_P(\mbalpha^\ast).
    \end{align*}
    
    By differentiating the dual objective at  $\mbalpha^\ast$, we obtain, 
    \begin{align*}
        0 = \nabla_{\mbalpha}(\mbalpha^\top \mbh^\ast - A_P(\mbalpha)) \implies \mbh^\ast = \frac{\int \mbh(\mbx)\exp\{r_{\mbalpha^\ast}(\mbx)\} P(d\mbx)}{\int \exp\{r_{\mbalpha^\ast}(\mbx)\} P(d\mbx)}.
    \end{align*}
    In other words, the distribution $Q^\ast \in \cP(P)$ defined such that $dQ^\ast / dP \propto \exp\{r_{\mbalpha^\ast}(\mbx)\}$ satisfies the moment constraint $\bbE_{Q^\ast}[\mbh(\mbx)] = \mbh^\ast$. 
    Moreover, the value of the primal objective at $Q^\ast$ is 
    \begin{align*}
        \KL{Q^\ast}{P} = (\mbalpha^\ast)^\top \mbh^\ast - A_P(\mbalpha^\ast),
    \end{align*}
    which is equal to the value of the dual objective at $\mbalpha^\ast$. 
    By weak duality, we conclude $Q^\ast$ is the solution to the maximum entropy problem.

    Uniqueness follows from the fact that the KL divergence $\psi$ is strictly convex.
\end{proof}

\subsection{Estimating the Maximum Entropy Solution}\label{app:subsec:max-entropy-estimator}

Next, we discuss our estimator $\widehat{\mbalpha}_N$ for the parameters $\mbalpha^\ast$ of the maximum entropy solution. 
In particular, we provide verifiable conditions under which $\widehat{\mbalpha}_N$ is well-defined, and we show that this estimator can be interpreted as the solution to a finite-sample version of the maximum entropy problem \eqref{eq:prob-max-entropy}.

So far, the only assumptions we have made on the maximum entropy problem \eqref{app:eq:prob-max-ent} are the relative interior condition on $\mbh^\ast$ (\Cref{app:assumption:strong-duality}) and the openness condition for the domain of $A_P$ (\Cref{app:assumption:log-normalizer}). 
As we demonstrated in \Cref{app:subsec:max-entropy-statement}, these conditions ensure that the solution to the maximum entropy problem exists and is unique. 
However, the solution to the dual problem need not be unique. 
Suppose, for example, that $\mbh$ is $d$-dimensional but has two identical components $\mbh(\mbx)[i] = \mbh(\mbx)[j]$. 
Then if $\mbalpha^\ast$ is optimal for the dual problem, so is $\mbalpha^\ast - t \mbe[i] + t \mbe[j]$ for all $t \in \bbR$, where $\mbe[i]$ and $\mbe[j]$ denote the $i$ and $j$th standard basis vectors, respectively. 
Specifically, the set of optima for the dual problem is a hyperplane in $\bbR^d$. 
In order to estimate $\mbalpha^\ast$, we want to ensure that the dual problem \eqref{app:eq:dual-objective} also has a unique maximum.

As suggested by our example, in order to ensure that the dual optimum is unique, it suffices to eliminate linear redundancies among the statistics $\mbh(\mbx)$.
\begin{assumption}[Uniqueness of dual optimum]\label{app:assumption:uniqueness}
    No linear combination of the components of $\mbh(\mbx)$ is equal to a constant with $P$ probability one.
\end{assumption}
If \Cref{app:assumption:uniqueness} holds, then the exponential family \eqref{app:eq:exp-family} is said to be \textit{minimal}.
An exponential family for which \Cref{app:assumption:log-normalizer} holds is minimal if and only if the log-normalizer $A_P(\mbalpha)$ is strictly convex on $\Xi$ \citep[][Proposition 3.1]{wainwright2008graphical}. Once we have imposed \Cref{app:assumption:uniqueness}, \Cref{app:assumption:strong-duality} is equivalent to  $\mbh^\ast \in \mathrm{int}(\cM)$, since $\mathrm{relint}(\cM) = \mathrm{int}(\cM)$ as $\cM$ is full-dimensional. 

For non-trivial generative models, solving the dual problem \eqref{app:eq:dual-objective} for $P= p_{\thetabase}$ is intractable since $A_{p_{\thetabase}}(\mbalpha)$ cannot be computed in closed-form. 
The estimator $\widehat{\mbalpha}_N$ that we propose in \eqref{eq:dual-opt} involves first drawing $N$ independent samples $\{\mbx_n\}_{n=1}^N$ from the base model $p_{\thetabase}$ and then solving the dual problem with the integral replaced by the empirical average from our samples. 
This is equivalent to solving the dual problem for $P$ equal to the empirical distribution of our samples $\frac{1}{N} \sum_{n=1}^N \delta_{\mbx_n}$, where $\delta_{\mbx}$ is the delta function at $\mbx$. 

However, in order for $\widehat{\mbalpha}_N$ to be well-defined, the interior point condition and uniqueness of the dual optimum must hold for the maximum entropy problem with $P = \frac{1}{N} \sum_{n=1}^N \delta_{\mbx_n}$. 
For this problem, these two conditions are straightforward to verify: (i) $\mbh^\ast$ lies in the interior of the convex hull of $\{\mbh(\mbx_n)\}_{n=1}^N$ and (ii) the empirical covariance matrix of $\{\mbh(\mbx_n)\}_{n=1}^N$ has full rank. 
For the example we provided at the beginning of the section,  conditions (i) and (ii) are satisfied if and only if $\{ \mbh(\mbx_n)\} = \{0, 1\}$ and $\mbh^\ast \in (0, 1)$.

It is possible for Assumptions \ref{app:assumption:strong-duality} and \ref{app:assumption:uniqueness} to hold for $p_{\thetabase}$ but not for $\frac{1}{N} \sum_{n=1}^N \delta_{\mbx_n}$. 
For our example, if $\{\mbh(\mbx_n)\} = \{0\}$ and $\mbh^\ast = 0$ (or $\{\mbh(\mbx_n)\} = \{1\}$ and $\mbh^\ast = 1$), then the maximum entropy solution exists and is equal to $Q^\ast = \frac{1}{N} \sum_{n=1}^N \delta_{\mbx_n}$, but every vector $\mbalpha \in \bbR$ is optimal for the dual problem \eqref{app:eq:dual-objective}. 
We demonstrate in \Cref{app:subsec:consistency-normality} that the probability of this event approaches zero as the number of samples $N$ approaches infinity. 
However, we observe (e.g., \Cref{fig:synthetic-exp-2}B) that when the base model $p_{\thetabase}$ lies far from the maximum entropy solution $p_{\mbalpha^\ast}$, estimating $\mbalpha^\ast$ with small variance requires many samples, and may even be computationally intractable.

\subsection{Connection Between the Relax and Reward Losses}\label{app:subsec:relationship-relaxed-reward}
In this section, we elucidate the connection between the relax and reward losses. 
We first introduce a problem corresponding to the relax loss that, similar to the maximum entropy problem \eqref{eq:prob-max-entropy}, is defined on the space $\mathcal{P}(p_{\thetabase})$ of probability distributions that have a density with respect to $p_{\thetabase}$. 
When the generative model class $p_\theta$ is sufficiently expressive, the solution to this problem well approximates the minimizer of the relax loss. 
We then show that, under conditions, the solution to this related problem approaches the solution to the maximum entropy problem as $\lambda \to 0$. This confirms our intuition that when $\lambda \to 0$, minimizing the relax loss is equivalent to solving the calibration problem.

As in \Cref{app:subsec:max-entropy-statement}, we let $X := (X, \cX)$ be a measurable space, $P$ be a probability measure defined on $X$, and $\mbh: X \to \bbR^d$ be a $X$-measurable function, and $\mbh^\ast$ be a target moment. 
We consider the problem
\begin{align}\label{app:eq:relaxed-problem-exact}
 \inf_{Q \in \mathcal{P}(P)}  \| \bbE_{Q}[\mbh] - \mbh^\ast\|^2 + \lambda \KL{Q}{P}, \ \text{s.t.} \ \bbE_{Q}[\|\mbh\|] < \infty
\end{align}
In convex analysis \citep[e.g.,][]{hestenes1969multiplier, powell1969method, boyd2004convex, tal2023convex}, \eqref{app:eq:relaxed-problem-exact} is known as a penalty problem. 

When $P = p_{\thetabase}$, then \eqref{app:eq:relaxed-problem-exact} agrees with the problem of minimizing the relax loss \eqref{eq:loss-relaxed}, except the domain of the problem is $\cP(p_{\thetabase})$ rather than the class of generative models $p_{\theta}$. 
Suppose momentarily that the infimum of \eqref{app:eq:relaxed-problem-exact}, denoted by $Q_{\lambda}$, is attained. 
The minimizer of the relax loss \eqref{eq:loss-relaxed} will not in general be equal to $Q_{\lambda}$ since $Q_{\lambda}$ does not lie in the class of generative models. 
However, as we argued when we proposed the reward loss, we would expect $Q_{\lambda}$ and the minimizer of the relax loss to be close in KL distance when the class of generative models $p_{\theta}$ is sufficiently expressive. 

Introducing the problem \eqref{app:eq:relaxed-problem-exact} is helpful insofar as, similar to the maximum entropy problem, we can obtain a closed-form expression for the solution $Q_{\lambda}$.
\begin{proposition}\label{app:prop:relax-solution}
    Suppose \Cref{app:assumption:log-normalizer} holds. 
    Then there exists a unique solution $\mbalpha_{\lambda}$ to the fixed point equation
    \begin{align*}
        \mbalpha = - \frac{2}{\lambda}(\nabla_{\mbalpha}A_P(\mbalpha) - \mbh^\ast), \quad \mbalpha \in \Xi.
    \end{align*}
    Moreover, $Q_{\lambda}$ defined by $dQ_{\lambda} / dP \propto \exp\{\mbalpha_{\lambda}^\top \mbh(\mbx)\}$ is the unique solution to \eqref{app:eq:relaxed-problem-exact}.
\end{proposition}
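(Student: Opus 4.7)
The plan is to reduce existence and uniqueness of $\mbalpha_\lambda$ to an unconstrained strongly-convex minimization, then certify that the induced exponential tilt $Q_\lambda$ solves the penalty problem via a Pythagorean-style identity for KL divergence.

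\textbf{Step 1 (Existence/uniqueness of $\mbalpha_\lambda$).} I would introduce the auxiliary objective
\begin{equation*}
G(\mbalpha) := A_P(\mbalpha) - \mbalpha^\top \mbh^\ast + \tfrac{\lambda}{4}\|\mbalpha\|^2,
\end{equation*}
extended to $\bbR^d$ by $+\infty$ outside $\Xi$, and note that the stated fixed-point equation is exactly $\nabla G(\mbalpha)=0$. As a cumulant generating function, $A_P$ is convex and lower semicontinuous on $\bbR^d$, so $G$ is $(\lambda/2)$-strongly convex and lsc. Since $0 \in \Xi$ with $A_P(0)=0$, the subgradient inequality yields $A_P(\mbalpha) \geq \mbalpha^\top \bbE_P[\mbh]$, whence $G(\mbalpha) \geq \mbalpha^\top(\bbE_P[\mbh] - \mbh^\ast) + (\lambda/4)\|\mbalpha\|^2 \to \infty$ as $\|\mbalpha\|\to\infty$. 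Thus $G$ attains a unique minimizer $\mbalpha_\lambda$, which must lie in the \emph{open} set $\Xi$ (the effective domain of $G$). The first-order condition $\nabla A_P(\mbalpha_\lambda) - \mbh^\ast + (\lambda/2)\mbalpha_\lambda = 0$ is the fixed-point equation.

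\textbf{Step 2 (Pythagorean identity).} Define $dQ_\lambda/dP := \exp\{\mbalpha_\lambda^\top \mbh(\mbx) - A_P(\mbalpha_\lambda)\}$, and write $\mbmu_\lambda := \bbE_{Q_\lambda}[\mbh] = \nabla A_P(\mbalpha_\lambda)$. For any admissible $Q \in \cP(P)$ with $\bbE_Q[\|\mbh\|]<\infty$, substituting $\log(dQ_\lambda/dP) = \mbalpha_\lambda^\top \mbh(\mbx) - A_P(\mbalpha_\lambda)$ into the decomposition $\log(dQ/dP) = \log(dQ/dQ_\lambda) + \log(dQ_\lambda/dP)$ and integrating against $Q$ gives
\begin{equation*}
\KL{Q}{P} = \KL{Q}{Q_\lambda} + \mbalpha_\lambda^\top \bbE_Q[\mbh] - A_P(\mbalpha_\lambda).
\end{equation*}

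\textbf{Step 3 (Optimality of $Q_\lambda$).} Setting $\mbmu := \bbE_Q[\mbh]$, the identity recasts the penalty objective as
\begin{equation*}
F(Q) = \|\mbmu - \mbh^\ast\|^2 + \lambda \mbalpha_\lambda^\top \mbmu - \lambda A_P(\mbalpha_\lambda) + \lambda \KL{Q}{Q_\lambda}.
\end{equation*}
Combining the expansion $\|\mbmu - \mbh^\ast\|^2 - \|\mbmu_\lambda - \mbh^\ast\|^2 = \|\mbmu - \mbmu_\lambda\|^2 + 2(\mbmu_\lambda - \mbh^\ast)^\top(\mbmu - \mbmu_\lambda)$ with the fixed-point relation $\lambda \mbalpha_\lambda = -2(\mbmu_\lambda - \mbh^\ast)$ causes the cross terms to cancel, yielding
\begin{equation*}
F(Q) - F(Q_\lambda) = \|\mbmu - \mbmu_\lambda\|^2 + \lambda \KL{Q}{Q_\lambda} \geq 0,
\end{equation*}
with equality forcing $\KL{Q}{Q_\lambda}=0$, i.e.\ $Q = Q_\lambda$ ($P$-a.s.) by strict convexity of the KL.

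\textbf{Main obstacle.} The chief subtlety is confirming that the unconstrained minimizer of $G$ lies in the interior $\Xi$ so that first-order conditions apply; this follows from combining coercivity (via the subgradient lower bound using $0 \in \Xi$) with lower semicontinuity of the extended cumulant. A secondary care point is integrability: the Pythagorean identity in Step 2 requires $\bbE_Q[\|\mbh\|]<\infty$, which is precisely the side condition built into \eqref{app:eq:relaxed-problem-exact}, so any $Q$ violating it is excluded from the infimum.
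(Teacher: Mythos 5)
Your proof is correct, and while Step~1 is essentially the paper's dual argument in disguise (your $G$ is $-F_{\lambda}/\lambda$ for the dual objective $F_{\lambda}$ of \eqref{app:eq:dual-relaxed}, and both arguments rest on $(\lambda/2)$-strong convexity plus openness of $\Xi$ to get a unique interior stationary point), your Steps~2--3 take a genuinely different route to optimality. The paper invokes Fenchel--Rockafellar weak duality and then verifies that the primal value at $Q_{\mbalpha_{\lambda}}$ equals the dual value $F_{\lambda}(\mbalpha_{\lambda})$; you instead derive the exact excess-objective identity $F(Q)-F(Q_{\lambda})=\|\mbmu-\mbmu_{\lambda}\|^2+\lambda\KL{Q}{Q_{\lambda}}$ by combining the KL chain-rule decomposition through $Q_{\lambda}$ with the fixed-point relation $\lambda\mbalpha_{\lambda}=-2(\mbmu_{\lambda}-\mbh^\ast)$. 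Your identity is self-contained (no appeal to infinite-dimensional duality machinery), delivers optimality and uniqueness in a single stroke, and is quantitatively stronger: it bounds the suboptimality of any competitor $Q$ from below by $\lambda\KL{Q}{Q_{\lambda}}$, whereas the paper's uniqueness claim rests on a one-line appeal to strict convexity of $\psi$. Your explicit coercivity argument for $G$ (via the subgradient inequality at $0\in\Xi$) also makes the existence of the dual optimizer more airtight than the paper's brief remark that $F_{\lambda}=-\infty$ on $\partial\Xi$. The only care points---that $Q\ll Q_{\lambda}$ because $dQ_{\lambda}/dP>0$ everywhere, and that competitors with $\KL{Q}{P}=+\infty$ or $\bbE_Q[\|\mbh\|]=\infty$ are excluded or dominated---are ones you correctly flag.
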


Our proof mirrors that for the maximum entropy principle (\Cref{app:thm:max-ent}). 
Namely, we invoke Fenchel-Rockafellar duality \citep{rockafellar1997convex} to relate the convex problem \eqref{app:eq:relaxed-problem-exact}, defined on the space of probability densities with respect to $P$ with finite $\mbh$ moment, to its concave dual problem
\begin{align}\label{app:eq:dual-relaxed}
    \sup_{\mbalpha \in \bbR^p} F_{\lambda}(\mbalpha),  \quad F_{\lambda}(\mbalpha) = \lambda \left( -\frac{\lambda}{4} \|\mbalpha\|^2 - A_P(\mbalpha) + \mbalpha^\top \mbh^\ast \right)
\end{align}
defined on Euclidean space. 
We then show that $\mbalpha_{\lambda}$ is the unique solution to the dual problem, and we use this solution to construct a solution to the primal problem. 
Interestingly, $\mbalpha_{\lambda}$ is the unique solution to the dual problem even when there is redundancy among the constraints $\mbh$ (i.e., \Cref{app:assumption:uniqueness}
does not hold). 

\begin{proof}[Proof of \Cref{app:prop:relax-solution}]
    We rewrite the primal problem \eqref{app:eq:relaxed-problem-exact} in the form 
    \begin{align*}
        &\inf_q \psi(q) + g(\cA q),\\
        &\psi(q) = \begin{cases}
            \lambda  \int q(\mbx) \log(q(\mbx)) P(d\mbx) & \text{if $q \geq 0$}\\
            +\infty & \text{else}
        \end{cases}, \quad g(\mby_0, \mby_1) = \begin{cases}
            \|\mby_1 - \mbh^\ast\|^2 & \text{if $\mby_0 = 1$}\\
            + \infty & \text{else}
        \end{cases},\\  &\cA(q) = \left( \int q(\mbx)P(d\mbx),\int \mbh(\mbx) q(\mbx)P(d\mbx)  \right)
    \end{align*}
    defined on the space of $X$-measurable functions $q$ for which $\bbE_P[|q(\mbx)|] = \int |q(\mbx)| P(d\mbx) < \infty$ and $\bbE_P[\| \mbh(\mbx) \||q(\mbx)|] = \int \| \mbh(\mbx) \| |q(\mbx)| P(d\mbx) < \infty$. 
    $q$ represents the density of measure $Q$ with respect to $P$. $g(\cA q)$ is equal to $\| \bbE_{Q}[\mbh(\mbx)] - \mbh^\ast \|^2$ if $Q$ is a probability measure and is infinite otherwise. 
    $\psi(q)$ is equal to the KL divergence between $Q$ and $P$, scaled by $\lambda$.
    
    As in the proof of \Cref{app:thm:max-ent}, $\cA$ is a bounded, linear map defined on this space, and $\psi$ and $g$ are convex. 
    By Fenchel-Rockafellar duality \citep[][Theorem 4.4.2]{borwein2005techniques}, weak duality holds for the problem \eqref{app:eq:relaxed-problem-exact} and its dual \eqref{app:eq:dual-relaxed}.
    
    By the remark following \Cref{app:lm:mgf-dominate}, $F_{\lambda}$ is infinitely differentiable on $\Xi$, and taking two derivatives of $F_{\lambda}(\mbalpha)$ yields
    \begin{align*}
        \nabla_{\mbalpha} F_{\lambda}(\mbalpha) = \lambda \left(-\frac{\lambda}{2} \mbalpha - \nabla_{\mbalpha}A_P(\mbalpha) + \mbh^\ast\right), \quad \nabla_{\mbalpha}^2 F_{\lambda}(\mbalpha) = \lambda \left(-\frac{\lambda}{2} \mathbb{I} - \nabla_{\mbalpha}^2A_P(\mbalpha) \right).
    \end{align*}
    Since $\nabla_{\mbalpha}^2A_P(\mbalpha)$ is positive semi-definite, then the problem \eqref{app:eq:dual-relaxed} is strongly concave. 
    And by our assumption that $\Xi$ is open, $F_{\lambda}(\mbalpha)$ is equal to $-\infty$ for $\mbalpha$ belonging to the boundary of $\Xi$. 
    Together with strong concavity, this implies a unique maximizer $\mbalpha_{\lambda}$ of $F_{\lambda}$ exists.

    In particular, $\mbalpha_{\lambda}$ is the unique $\mbalpha \in \bbR^d$ that satisfies the fixed-point equation
    \begin{align*}
         \nabla_{\mbalpha} F_{\lambda}(\mbalpha) = \lambda \left(-\frac{\lambda}{2} \mbalpha - \nabla_{\mbalpha}A(\mbalpha) + \mbh^\ast \right)= \mb{0} \implies \mbalpha = - \frac{2}{\lambda}(\nabla_{\mbalpha}A_P(\mbalpha) - \mbh^\ast).
    \end{align*}    
    And the probability measure $Q_{\mbalpha_{\lambda}} \propto \exp\{r_{\mbalpha_{\lambda}}(\mbx)\}P(d\mbx)$ satisfies 
    \begin{align*}
        &\lambda \KL{Q_{\mbalpha_{\lambda}}}{P} + \|\bbE_{Q_{\mbalpha_{\lambda}}}[\mbh(\mbx)] - \mbh^\ast \|\\
        =& \lambda (\mbalpha_{\lambda}^\top \nabla_{\mbalpha} A_P(\mbalpha_{\lambda})   - A_P(\mbalpha_{\lambda})) + \frac{\lambda^2}{4} \|\mbalpha_{\lambda} \|^2\\
        =& \lambda \left(\mbalpha_{\lambda}^\top \left(\mbh^\ast - \frac{\lambda}{2}\mbalpha_{\lambda} \right) - A_P(\mbalpha_{\lambda})\right) + \frac{\lambda^2}{4} \|\mbalpha_{\lambda} \|^2\\
        =& F_{\lambda}(\mbalpha_{\lambda}).
    \end{align*}
    By weak duality, this implies $Q_{\lambda} := Q_{\mbalpha_{\lambda}}$ is optimal for the primal problem. 
    Moreover, strict convexity of $\psi$ implies that the optimum of the primal problem is unique.
\end{proof}

Next, we show that as the regularization parameter $\lambda \to 0$, then $Q_{\lambda}$ achieves the minimum possible Euclidean norm constraint violation i.e., Euclidean norm difference between $\bbE_{Q_{\lambda}}[\mbh]$ and $\mbh^\ast$. 
We also give a finite $\lambda$ bound on the constraint violation.

\begin{proposition}\label{app:prop:relax-convergence}
    The distribution $Q_{\lambda}$ satisfies
    \begin{align*}
        \lim_{\lambda \to 0} \|\bbE_{Q_{\lambda}}[\mbh(\mbx)] - \mbh^\ast\| = \inf_{\substack{Q \in \cP(P)\\ \KL{Q}{P} < \infty}} \|\bbE_{Q}[\mbh(\mbx)] - \mbh^\ast\|.
    \end{align*}
    Moreover, we have the finite-sample bound on the Euclidean norm constraint violation of $Q_{\lambda}$
    \begin{align*}
    \|\bbE_{Q_{\lambda}}[\mbh(\mbx)] - \mbh^\ast\| \leq \inf_{ Q \in \cP(P)} \left\{\sqrt{\lambda\KL{Q}{P}} + \| \bbE_{Q}[\mbh(\mbx)] - \mbh^\ast \| \right\}.
    \end{align*}
\end{proposition}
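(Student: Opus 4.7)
I will treat the two claims in reverse order: the finite-$\lambda$ bound is the main technical step, and the limit statement (which, given the surrounding discussion and the fact that $Q_\lambda \to P$ as $\lambda \to \infty$, I read as $\lambda \to 0^+$) will fall out of it. The plan is to first establish the finite-$\lambda$ bound from the optimality of $Q_\lambda$, and then deduce the limit by a standard $\limsup/\liminf$ argument.

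For the finite-$\lambda$ bound, fix any $Q \in \cP(P)$ with $\KL{Q}{P} < \infty$ (otherwise the bound is vacuous). Optimality of $Q_\lambda$ for \eqref{app:eq:relaxed-problem-exact} gives
\begin{equation*}
    \|\bbE_{Q_\lambda}[\mbh(\mbx)] - \mbh^\ast\|^2 + \lambda \KL{Q_\lambda}{P} \;\leq\; \|\bbE_{Q}[\mbh(\mbx)] - \mbh^\ast\|^2 + \lambda \KL{Q}{P}.
\end{equation*}
Dropping the nonnegative term $\lambda \KL{Q_\lambda}{P}$ on the left, taking square roots, and applying the elementary inequality $\sqrt{a^2 + b^2} \leq a + b$ for $a, b \geq 0$ yields
\begin{equation*}
    \|\bbE_{Q_\lambda}[\mbh(\mbx)] - \mbh^\ast\| \;\leq\; \|\bbE_{Q}[\mbh(\mbx)] - \mbh^\ast\| + \sqrt{\lambda \KL{Q}{P}},
\end{equation*}
and taking the infimum over $Q \in \cP(P)$ on the right gives the advertised inequality.

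For the limit, set $v^\ast := \inf_{Q \in \cP(P),\, \KL{Q}{P} < \infty} \|\bbE_{Q}[\mbh(\mbx)] - \mbh^\ast\|$. To bound $\limsup_{\lambda \to 0}$, I pick, for each $\varepsilon > 0$, a feasible $Q_\varepsilon$ with constraint violation at most $v^\ast + \varepsilon$; plugging this into the finite-$\lambda$ bound and sending $\lambda \to 0$ gives $\limsup \leq v^\ast + \varepsilon$, and then $\varepsilon \to 0$ finishes the upper direction. For the matching $\liminf$, I will invoke \Cref{app:prop:relax-solution} to note that $dQ_\lambda/dP \propto \exp(\mbalpha_\lambda^\top \mbh)$ with $\mbalpha_\lambda \in \Xi$, so $Q_\lambda$ itself has $\KL{Q_\lambda}{P} = \mbalpha_\lambda^\top \bbE_{Q_\lambda}[\mbh(\mbx)] - A_P(\mbalpha_\lambda) < \infty$; hence $Q_\lambda$ is feasible in the infimum defining $v^\ast$, giving $\|\bbE_{Q_\lambda}[\mbh(\mbx)] - \mbh^\ast\| \geq v^\ast$ for every $\lambda > 0$.

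The argument is a standard penalty-method calculation, so there is no serious obstacle. The one bit of care required is verifying that $Q_\lambda$ is itself feasible for the infimum defining $v^\ast$ --- this is what the exponential-family structure provided by \Cref{app:prop:relax-solution}, together with \Cref{app:assumption:log-normalizer}, buys us.
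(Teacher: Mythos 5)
Your proof is correct and follows essentially the same penalty-method argument as the paper: both derive the finite-$\lambda$ bound from the optimality of $Q_\lambda$ against an arbitrary competitor $Q$ (the paper drops the $\lambda\KL{Q_\lambda}{P}$ term and compares against a near-optimal $Q_\varepsilon$, exactly as you do), and both read the limit as $\lambda \to 0^+$, the stated $\lambda \to \infty$ being a typo. If anything you are slightly more careful than the paper, which leaves the $\liminf$ direction implicit, whereas you justify it by observing via \Cref{app:prop:relax-solution} that $Q_\lambda$ itself has finite KL to $P$ and is therefore feasible in the infimum defining $v^\ast$.
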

\begin{proof}
    Fix $\varepsilon > 0$ and let $Q_{\varepsilon}$ be such that $\|\bbE_{Q_{\varepsilon}}[\mbh(\mbx)] - \mbh^\ast\| \leq \inf_{Q \in \cP(P)} \|\bbE_{Q}[\mbh(\mbx)] - \mbh^\ast\| + \varepsilon$. 
    Then by the optimality of $Q_{\lambda}$ for the objective \eqref{app:eq:relaxed-problem-exact},
    \begin{align}\label{app:eq:simple-bound}
    \|\bbE_{Q_{\lambda}}[\mbh(\mbx)] - \mbh^\ast\|^2 &\leq \lambda \KL{Q_{\lambda}}{P} +\|\bbE_{Q_{\lambda}}[\mbh(\mbx)] - \mbh^\ast\|^2\nonumber \\ &\leq \lambda \KL{Q_{\varepsilon}}{P} + \|\bbE_{Q_{\varepsilon}}[\mbh(\mbx)] - \mbh^\ast\|^2.
    \end{align}
    Our choice of $Q_{\varepsilon}$ yields
    \begin{align*}
    \|\bbE_{Q_{\lambda}}[\mbh(\mbx)] - \mbh^\ast\|^2 \leq \lambda \KL{Q_{\varepsilon}}{P} + \inf_{Q \in \cP(P)} \|\bbE_{Q}[\mbh(\mbx)] - \mbh^\ast\| + \varepsilon.
    \end{align*}
    Taking $\lambda \to 0$ and then $\varepsilon \to 0$ yields the first result. 
    Replacing $Q_{\varepsilon}$ with $Q \in \cP(P)$ in \eqref{app:eq:simple-bound} and taking the infimum over $Q$ yields the second result.
\end{proof}
In the setting of \Cref{app:prop:consistency} where a solution to the maximum entropy problem exists, then a bound on the Euclidean norm constraint violation of $Q_{\lambda}$ is simply $\sqrt{\lambda \KL{Q^\ast}{P}}$. 
This implies that $\|\bbE_{Q_{\lambda}}[\mbh(\mbx)] - \mbh^\ast\| = \cO( \sqrt{\lambda} )$. 

When Assumptions \ref{app:assumption:strong-duality} and \ref{app:assumption:uniqueness} hold, we can obtain a faster rate of convergence of $\bbE_{Q_{\lambda}}[\mbh(\mbx)]$ to $\mbh^\ast$, and we can show that $\mbalpha_{\lambda}$ converges to the parameters $\mbalpha^\ast$ of the maximum entropy distribution. 

This result can seem counterintuitive, since as $\lambda \to 0$, the objective function of the relax problem \Cref{app:eq:relaxed-problem-exact} approaches $\| \bbE_{Q}[\mbh] - \mbh^\ast\|^2$. 
This is minimized whenever $\bbE_{Q}[\mbh] = \mbh^\ast$ i.e., there are infinitely many minimizers.
The key insight is that the optimization problem does \emph{not} approach the degenerate problem
\begin{equation*}
 \inf_{Q \in \mathcal{P}(P)}  \| \bbE_{Q}[\mbh] - \mbh^\ast\|^2  \ \text{s.t.} \ \bbE_{Q}[\|\mbh\|] < \infty.
\end{equation*}
Rather, it approaches
\begin{equation*}
 \inf_{Q \in \mathcal{P}(P)} \KL{Q}{P}  \ \text{s.t.} \  \bbE_{Q}[\mbh] = \mbh^\ast, \ \bbE_{Q}[\|\mbh\|] < \infty,
\end{equation*}
which is the maximum entropy problem \eqref{app:eq:prob-max-ent}.

\begin{proposition}\label{app:prop:fast-convergence}
    Suppose Assumptions \ref{app:assumption:strong-duality}, \ref{app:assumption:log-normalizer}, and \ref{app:assumption:uniqueness} hold, which imply that the maximum entropy solution $dQ^\ast/dP \propto  \exp\{ r_{\mbalpha^\ast}(\mbx) \}$ exists. 
    Then $\mbalpha_{\lambda} \to \mbalpha^\ast$ as $\lambda \to 0$. 
    In particular,
    \begin{enumerate}[label=(\roman*)]
        \item $  \|\mbalpha_{\lambda} - \mbalpha^\ast \| = \cO(\lambda)$
        \item $\|\bbE_{Q_{\lambda}}[\mbh(\mbx)] - \mbh^\ast\| = \cO(\lambda )$
        \item $|\KL{Q_{\lambda}}{P} - \KL{Q^*}{P}| = \cO(\lambda)$.
    \end{enumerate}
\end{proposition}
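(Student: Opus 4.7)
The plan is to leverage the first-order optimality condition from \Cref{app:prop:relax-solution}, which characterizes $\mbalpha_\lambda$ as the unique solution in $\Xi$ to
\[
\nabla_{\mbalpha}A_P(\mbalpha_\lambda) \;=\; \mbh^\ast - \tfrac{\lambda}{2}\mbalpha_\lambda,
\]
and compare it against the maximum-entropy identity $\nabla_{\mbalpha}A_P(\mbalpha^\ast)=\mbh^\ast$ from the proof of \Cref{app:thm:max-ent}. Subtracting, the key relation
\[
\nabla_{\mbalpha}A_P(\mbalpha_\lambda)-\nabla_{\mbalpha}A_P(\mbalpha^\ast)\;=\;-\tfrac{\lambda}{2}\mbalpha_\lambda
\]
drives all three conclusions.

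First I would establish qualitative convergence $\mbalpha_\lambda\to\mbalpha^\ast$, \emph{without} a rate. Under \Cref{app:assumption:log-normalizer} and \Cref{app:assumption:uniqueness} the log-normalizer $A_P$ is strictly convex and $C^\infty$ on the open set $\Xi$, so $\nabla_\mbalpha A_P:\Xi\to \text{relint}(\cM)$ is a smooth bijection whose Jacobian $\nabla^2_\mbalpha A_P$ is positive definite (see \eqref{app:eq:var-exp-family}); by the inverse function theorem it is a diffeomorphism. By \Cref{app:prop:relax-convergence} the mean $\bbE_{Q_\lambda}[\mbh]=\nabla_\mbalpha A_P(\mbalpha_\lambda)\to\mbh^\ast$, so continuity of the inverse forces $\mbalpha_\lambda\to\mbalpha^\ast$. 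This step is essential because, a priori, nothing prevents $\mbalpha_\lambda$ from drifting toward the boundary of $\Xi$.

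Next, to obtain the $\cO(\lambda)$ rate in (i), I would Taylor-expand $\nabla_\mbalpha A_P$ at $\mbalpha^\ast$: writing $H=\nabla^2_\mbalpha A_P(\mbalpha^\ast)\succ 0$,
\[
H(\mbalpha_\lambda-\mbalpha^\ast)+o(\|\mbalpha_\lambda-\mbalpha^\ast\|)\;=\;-\tfrac{\lambda}{2}\mbalpha_\lambda,
\]
and inverting $H$ yields $\|\mbalpha_\lambda-\mbalpha^\ast\|\le \tfrac{\lambda}{2}\|H^{-1}\|\,\|\mbalpha_\lambda\|+o(\|\mbalpha_\lambda-\mbalpha^\ast\|)$, which, combined with the boundedness of $\mbalpha_\lambda$ from the previous step, gives $\|\mbalpha_\lambda-\mbalpha^\ast\|=\cO(\lambda)$. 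Claim (ii) is then immediate:
\[
\bbE_{Q_\lambda}[\mbh(\mbx)]-\mbh^\ast\;=\;\nabla_\mbalpha A_P(\mbalpha_\lambda)-\mbh^\ast\;=\;-\tfrac{\lambda}{2}\mbalpha_\lambda,
\]
so $\|\bbE_{Q_\lambda}[\mbh]-\mbh^\ast\|=\tfrac{\lambda}{2}\|\mbalpha_\lambda\|=\cO(\lambda)$.

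For (iii) I would use the standard exponential-family identity $\KL{Q_\mbalpha}{P}=\mbalpha^\top \nabla_\mbalpha A_P(\mbalpha)-A_P(\mbalpha)$, which holds for both $\mbalpha_\lambda$ and $\mbalpha^\ast$. Subtracting and using $\nabla_\mbalpha A_P(\mbalpha_\lambda)=\mbh^\ast-\tfrac{\lambda}{2}\mbalpha_\lambda$ and $\nabla_\mbalpha A_P(\mbalpha^\ast)=\mbh^\ast$,
\[
\KL{Q_\lambda}{P}-\KL{Q^\ast}{P}\;=\;(\mbalpha_\lambda-\mbalpha^\ast)^\top\mbh^\ast-\tfrac{\lambda}{2}\|\mbalpha_\lambda\|^2-\bigl(A_P(\mbalpha_\lambda)-A_P(\mbalpha^\ast)\bigr).
\]
A second-order Taylor expansion $A_P(\mbalpha_\lambda)-A_P(\mbalpha^\ast)=(\mbalpha_\lambda-\mbalpha^\ast)^\top\mbh^\ast+\cO(\|\mbalpha_\lambda-\mbalpha^\ast\|^2)$ cancels the linear term, leaving $\cO(\lambda)+\cO(\lambda^2)=\cO(\lambda)$ by (i).

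The main obstacle is the first step: ruling out that $\mbalpha_\lambda$ escapes to the boundary of $\Xi$ as $\lambda\to 0$, since the local Taylor argument only makes sense on a neighborhood of $\mbalpha^\ast$ contained in $\Xi$. The diffeomorphism-plus-\Cref{app:prop:relax-convergence} route I sketched resolves this, but the ordering matters: qualitative convergence must be secured before quantitative rates are extracted.
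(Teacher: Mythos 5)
Your proof is correct. The quantitative steps (the $\cO(\lambda)$ rates in (i)--(iii)) are the same Taylor-expansion argument the paper uses; you even gain a small simplification on (ii) by reading $\bbE_{Q_\lambda}[\mbh]-\mbh^\ast=-\tfrac{\lambda}{2}\mbalpha_\lambda$ directly off the fixed-point equation rather than splitting by the triangle inequality. Where you take a genuinely different route is the preliminary step $\|\mbalpha_\lambda-\mbalpha^\ast\|=o(1)$. You argue through the moment-map structure of a minimal regular exponential family: \Cref{app:assumption:uniqueness} makes $\nabla^2_\mbalpha A_P$ positive definite so that $\nabla_\mbalpha A_P\colon\Xi\to\text{relint}(\cM)$ is a diffeomorphism, \Cref{app:prop:relax-convergence} (plus existence of $Q^\ast$) gives $\nabla_\mbalpha A_P(\mbalpha_\lambda)=\bbE_{Q_\lambda}[\mbh]\to\mbh^\ast$, and continuity of the inverse does the rest. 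The paper instead argues variationally on the dual objectives: the $\lambda^{-1}$-scaled duals converge pointwise to $F_0(\mbalpha)=\mbalpha^\top\mbh^\ast-A_P(\mbalpha)$, \citet[Theorem 10.8]{rockafellar1997convex} upgrades this to uniform convergence on compact subsets of $\Xi$, and strict concavity of $F_0$ then traps the maximizer $\mbalpha_\lambda$ in a small ball around $\mbalpha^\ast$. Both routes are sound; yours is shorter and leans on \Cref{app:prop:relax-convergence}, while the paper's is self-contained and reuses the same convex-analytic machinery that it also deploys to prove consistency of $\widehat{\mbalpha}_N$ in \Cref{app:prop:consistency}, so the argument pays for itself twice.
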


\begin{proof}

Prior to proving (i)-(iii), we first establish $\|\mbalpha_{\lambda} - \mbalpha^\ast \| = o(1)$.
From the proof of  \Cref{app:prop:relax-solution}, we know that $\mbalpha_{\lambda}$ maximizes $\lambda^{-1} F_{\lambda}(\mbalpha) = -\frac{\lambda}{4} \|\mbalpha\|^2 - A_P(\mbalpha) + \mbalpha^\top \mbh^\ast$ for each $\lambda > 0$. 
And from  \eqref{app:eq:dual-objective}, we know that $\mbalpha^\ast$ maximizes $F_0(\mbalpha) = - A_P(\mbalpha) + \mbalpha^\top \mbh^\ast$. 
Clearly, $F_{\lambda}(\mbalpha) \to F_0(\mbalpha)$ pointwise as $\lambda \to 0$. 
Since each of $F_{\lambda}$ and $F_0$ is concave on $\Xi$, a classical result in convex analysis \citet[][Theorem, 10.8]{rockafellar1997convex} implies that the convergence $F_{\lambda}(\mbalpha) \to F_0(\mbalpha)$ is uniform on closed, bounded subsets of $\Xi$ containing $\mbalpha^\ast$. 

Fix $\epsilon > 0$ such that the Euclidean ball of radius $\epsilon$ centered at $\mbalpha^\ast$ is contained in $\Xi$. 
By \Cref{app:assumption:uniqueness} $\nabla_{\mbalpha}^2 A_P(\mbalpha)$ positive definite for every $\mbalpha \in \Xi$, which implies $F_0$ is strictly concave. 
Hence, there exists a $\kappa$ such that for all $\|\mbalpha - \mbalpha^\ast\| = \epsilon$,
\begin{align*}
    F_0(\mbalpha) < \kappa < F_0(\mbalpha^\ast).
\end{align*}
This is because the left-hand side of the above inequality attains its maximum on the compact set $\|\mbalpha - \mbalpha^\ast\| = \epsilon$ and (ii) by strict concavity this maximum must be strictly less than the right-hand side.
Moreover, by uniform convergence of $F_{\lambda}$ to $F_0$, there exists $\lambda_{\epsilon} > 0$ such that for all $\lambda < \lambda_{\epsilon}$ and all $\|\mbalpha - \mbalpha^\ast\| = \epsilon$
\begin{align}\label{app:eq:concave-converge}
    F_{\lambda}(\mbalpha) < \kappa < F_{\lambda}(\mbalpha^\ast).
\end{align}
Since $F_{\lambda}$ is also concave, \eqref{app:eq:concave-converge} implies that the maximizer of $F_{\lambda}$, $\mbalpha_{\lambda}$, must lie in the Euclidean ball of radius $\epsilon$ centered at $\mbalpha^\ast$. 
This establishes $\|\mbalpha_{\lambda}- \mbalpha^\ast\| = o(1)$.

We are now prepared to prove (i). By Taylor expanding $\nabla_{\mbalpha}A_P(\mbalpha)$ at $\mbalpha_{\lambda}$ about $\mbalpha^\ast$, we obtain
\begin{align}\label{app:eq:taylor}
\nabla_{\mbalpha}A_P(\mbalpha_{\lambda}) = \mbh^\ast +  \nabla_{\mbalpha}^2A_P(\mbalpha^\ast)(\mbalpha_{\lambda} - \mbalpha^\ast) + \mbr_{\lambda}, \quad \|\mbr_{\lambda}\| = o(\|\mbalpha_{\lambda} - \mbalpha^\ast \|).
\end{align}
By \Cref{app:prop:relax-solution},  $\mbalpha_{\lambda}$ satisfies $ \mbalpha_{\lambda} = - \frac{2}{\lambda}(\nabla_{\mbalpha}A(\mbalpha_{\mblambda}) - \mbh^\ast)$. 
Multiplying \eqref{app:eq:taylor} by $-2/\lambda$ and substituting in this expression for $\mbalpha_{\lambda}$ yields
\begin{align*}
    \mbalpha_{\lambda} =  -\frac{2}{\lambda }\nabla_{\mbalpha}^2A_P(\mbalpha^\ast)(\mbalpha_{\lambda} - \mbalpha^\ast) + \frac{1}{\lambda} \mbr_{\lambda}.
\end{align*}
Solving for $\mbalpha_{\lambda} - \mbalpha^\ast$ yields
\begin{align}
    \mbalpha_{\lambda} - \mbalpha^\ast &= -\left( \bbI + \frac{2}{\lambda }\nabla_{\mbalpha}^2A_P(\mbalpha^\ast) \right)^{-1}\left(\mbalpha^\ast + \frac{1}{\lambda} \mbr_{\lambda} \right)\nonumber\\
    &=-\lambda \left( \lambda \bbI + 2 \nabla_{\mbalpha}^2A_P(\mbalpha^\ast) \right)^{-1}\mbalpha^\ast +  \tilde{\mbr}_{\lambda} \label{app:eq:taylor(2)}
\end{align}
for $\tilde{\mbr}_{\lambda} = o(\|\mbalpha_{\lambda} - \mbalpha^\ast \|)$. 
And because $\|\mbalpha_{\lambda} - \mbalpha^\ast \| = o(1)$, then for all $\lambda$ sufficiently small, $\|\tilde{\mbr}_{\lambda}\| \leq \frac{1}{2} \|\mbalpha_{\lambda} - \mbalpha^\ast\|$. 
Taking the norm of both sides of \eqref{app:eq:taylor(2)} and rearranging yields 
\begin{align*}
     \|\mbalpha_{\lambda} - \mbalpha^\ast\| \leq  2\lambda \|\left( \lambda \bbI + 2 \nabla_{\mbalpha}^2A_P(\mbalpha^\ast) \right)^{-1}\mbalpha^\ast\|
\end{align*}
for all $\lambda$ sufficiently small. 
This proves (i).

For (ii), the relationship $\mbalpha_{\lambda} = - \frac{2}{\lambda}(\nabla_{\mbalpha}A(\mbalpha_{\lambda}) - \mbh^\ast)$ yields
\begin{align*}
     \|\bbE_{Q_{\lambda}}[\mbh(\mbx)] - \mbh^\ast\| &= \|\nabla_{\mbalpha}A(\mbalpha_{\lambda}) - \mbh^\ast\| \leq  
     \frac{\lambda}{2} \|\mbalpha_{\lambda} - \mbalpha^\ast\| + \frac{\lambda}{2} \|\mbalpha^\ast\| = \cO(\lambda).
\end{align*}

Lastly for (iii),
\begin{align*}
    \KL{Q_{\lambda}}{P} &= \mbalpha_{\lambda}^\top \bbE_{Q_{\lambda}}[\mbh(\mbx)] - A_P(\mbalpha_{\lambda})\\
    &= (\mbalpha_{\lambda}^\top \mbh^\ast + \cO(\lambda)) - \{A_P(\mbalpha^\ast) + \nabla_{\mbalpha}A_P(\mbalpha^\ast)^\top (\mbalpha_{\lambda} - \mbalpha^\ast) + o(\|\mbalpha_{\lambda} - \mbalpha^\ast\|)\}\\
    &=\mbalpha_{\lambda}^\top \mbh^\ast - A_P(\mbalpha^\ast) + \cO(\lambda)\\
    &=  \KL{Q^\ast}{P} + \cO(\lambda).
\end{align*}
\end{proof}
The convergence rate $\|\bbE_{Q_{\lambda}}[\mbh(\mbx)] - \mbh^\ast\| = \cO(\lambda)$ is standard for penalty methods \citep[see e.g.,][]{hestenes1969multiplier}. In our argument, we additionally show that up to first order, 
\begin{align*}
   \|\bbE_{Q_{\lambda}}[\mbh(\mbx)] - \mbh^\ast\|  \leq c\lambda \{1 + \| \left( \lambda \bbI + 2 \nabla_{\mbalpha}^2A_P(\mbalpha^\ast) \right)^{-1} \| \}, \quad c \geq 0.
\end{align*}
Hence, for finite $0 < \lambda \ll 1$, we would expect $\|\bbE_{Q_{\lambda}}[\mbh(\mbx)] - \mbh^\ast\|$ to be small whenever the Fisher information matrix of the exponential family $\exp\{r_{\mbalpha}(\mbx) - A_P(\mbx)\}P(d\mbx)$ at $\mbalpha = \mbalpha^*$ has a large minimum eigenvalue.

In \Cref{subsec:loss-reward} we derived the reward loss as the KL divergence of the model $p_{\theta}$ to the maximum entropy solution $p_{\mbalpha^\ast}$. 
The relax loss can also be viewed as a divergence to a tilt of the base model $p_{\thetabase}$, except that the tilt depends on the current model $p_{\theta}$. 
In particular, the stationary points of the relaxed loss are exactly the stationary points of the objective
\begin{align}\label{app:eq:relax-divergence}
    \KL{p_{\theta}}{p_{\thetabase}} + \frac{2}{\lambda}(\bbE_{p_{\texttt{sg}(\theta)}}[\mbh(\mbx)] - \mbh^\ast)^\top \bbE_{p_{\theta}}[\mbh(\mbx)].
\end{align}
This can be seen by taking the gradient of \eqref{app:eq:relax-divergence}. 
By identifying $\mbalpha = -\frac{2}{\lambda}(\bbE_{p_{\texttt{sg}(\theta)}}[\mbh(\mbx)] - \mbh^\ast)$ and $q_{\mbalpha} \propto p_{\thetabase}(\mbx)\exp\{ r_{\mbalpha}(\mbx)\}$, we observe that \eqref{app:eq:relax-divergence} is exactly equal to $\KL{p_{\theta}}{q_{\mbalpha}}$.  
\textit{$q_{\mbalpha}$ can be understood as our current best approximation to the solution of \eqref{app:eq:relaxed-problem-exact}}. 
Unlike the solution of \eqref{app:eq:relaxed-problem-exact}, though, $\bbE_{q_{\mbalpha}}[\mbh(\mbx)]$ is not equal to $\bbE_{p_{\texttt{sg}(\theta)}}[\mbh(\mbx)]$. 
For sufficiently expressive class of generative models $p_{\theta}$, we would expect $\bbE_{q_{\mbalpha}}[\mbh(\mbx)]$ and $\bbE_{p_{\texttt{sg}(\theta)}}[\mbh(\mbx)]$ to be approximately equal at the optimum.

\subsection{Consistency and Asymptotic Normality}\label{app:subsec:consistency-normality}
In this section, we discuss the large sample behavior of the estimator $\widehat{\mbalpha}_N$ for the parameters $\mbalpha^\ast$ of the reward loss. 
Under Assumptions \ref{app:assumption:strong-duality}, \ref{app:assumption:log-normalizer}, and \ref{app:assumption:uniqueness}, we show that as $N \to \infty$ and $d$ remains fixed, then $\widehat{\mbalpha}_N$ is close to $\mbalpha^\ast$ with high probability. 
And under stronger conditions, we demonstrate that  $\widehat{\mbalpha}_N$ has a limiting normal distribution. 
The asymptotics of $\widehat{\mbalpha}_N$ have previously been studied in the subject of empirical likelihood \citep{qin1994empirical, kitamura1997information, owen2001empirical}.

We first aim to establish that $\widehat{\mbalpha}_N$ is close to $\mbalpha^\ast$ with high probability as $N \to \infty$ i.e., $\widehat{\mbalpha}_N$ is \textit{consistent} for $\mbalpha^\ast$. 
Define the functions 
\begin{align*}
    A(\mbalpha) := A_{p_{\thetabase}}(\mbalpha), \quad  A_N(\mbalpha) := \log\bigg( \frac{1}{N}\sum_{n=1}^N \exp\{ r_{\mbalpha}(\mbx_n) \}  \bigg),
\end{align*}
where $A_P$ is defined in \Cref{app:subsec:max-entropy-statement}.
Observe that $A_N$ is random and depends on the independent samples $\{\mbx_n\}_{n=1}^N$ drawn from $p_{\thetabase}$. 
The dual problem corresponding to $p_{\thetabase}$ maximizes $\mbalpha^\top \mbh(\mbx) - A(\mbalpha)$, whereas the dual problem corresponding to the distribution of samples $\{\mbx_n\}_{n=1}^N$ maximizes $\mbalpha^\top \mbh(\mbx) - A_N(\mbalpha)$. 
By the Strong Law of Large Numbers (SLLN), for any $\mbalpha \in \Xi$, $A_N(\mbalpha) \to A(\mbalpha)$ with $p_{\thetabase}$ probability one. 
In order for our estimator $\widehat{\mbalpha}_N$ to approach $\mbalpha^\ast$, though, we need to argue that the dual objective corresponding to $\{\mbx_n\}_{n=1}^N$ \textit{uniformly} approaches the dual objective corresponding to $p_{\thetabase}$ on some neighborhood containing $\mbalpha^\ast$.
\begin{lemma}\label{app:lm:uniform-convergence}
For any closed, bounded subset $K$ of $\Xi$, 
\begin{align*}
    \sup_{\mbalpha \in K} \left|A_N(\mbalpha) - A(\mbalpha) \right| \to 0
\end{align*}
with $p_{\thetabase}$ probability one.
\end{lemma}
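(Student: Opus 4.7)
The approach is to combine the Strong Law of Large Numbers (SLLN), which yields pointwise $p_{\thetabase}$-a.s.\ convergence $A_N(\mbalpha)\to A(\mbalpha)$, with the classical fact from convex analysis \citep[][Theorem 10.8]{rockafellar1997convex} that a sequence of finite convex functions on an open convex set that converges pointwise on a dense subset must converge uniformly on every compact subset. By \Cref{app:assumption:log-normalizer}, $\Xi$ is open, and it is convex by the log-convexity of moment generating functions. Both $A$ and $A_N$ are finite and convex on $\Xi$ (indeed, $A_N$ is finite on all of $\bbR^d$), placing us squarely in the setting where this convex-analytic result applies.

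First I would verify convexity explicitly. For each realization of $\{\mbx_n\}_{n=1}^N$, the map $\mbalpha \mapsto A_N(\mbalpha) = \log\bigl(N^{-1}\sum_{n=1}^N \exp\{\mbalpha^\top \mbh(\mbx_n)\}\bigr)$ is convex on $\bbR^d$ by the standard log-sum-exp argument, and convexity of $A$ on $\Xi$ follows from the Hessian expression \eqref{app:eq:var-exp-family}, which is positive semi-definite.

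Next I would establish pointwise convergence on a countable dense set. Fix $\mbalpha \in \Xi$; by definition of $\Xi$, $\bbE_{p_{\thetabase}}[\exp\{\mbalpha^\top \mbh(\mbx)\}] = \exp\{A(\mbalpha)\} < \infty$, so SLLN applied to the i.i.d.\ sequence $\exp\{\mbalpha^\top \mbh(\mbx_n)\}$ gives $N^{-1}\sum_n \exp\{\mbalpha^\top \mbh(\mbx_n)\} \to \exp\{A(\mbalpha)\}$ with $p_{\thetabase}$-probability one, and taking logarithms yields $A_N(\mbalpha) \to A(\mbalpha)$ on the corresponding event $E_{\mbalpha}$. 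Choosing a countable dense subset $D \subset \Xi$ (which exists by separability of $\bbR^d$) and setting $E := \bigcap_{\mbalpha \in D} E_{\mbalpha}$, the countable intersection $E$ still has $p_{\thetabase}$-probability one, and on $E$ we have pointwise convergence at every point of $D$ simultaneously.

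Finally, restricted to the probability-one event $E$, the convex-analytic result immediately upgrades pointwise convergence on the dense subset $D \subset \Xi$ to uniform convergence of $A_N$ to $A$ on every closed bounded subset $K \subset \Xi$, which is the desired conclusion. The main obstacle is handling the $\mbalpha$-dependence of the exceptional null sets produced by SLLN: invoking it pointwise at uncountably many $\mbalpha$ would not preserve the probability-one property, so the countable-dense-set reduction is essential. All other steps are routine invocations of standard facts.
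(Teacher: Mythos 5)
Your proof is correct and follows essentially the same route as the paper's: invoke the SLLN to obtain pointwise almost-sure convergence at each point of a countable dense subset of $\Xi$ (the paper uses $\Xi \cap \bbQ^d$), collect the null sets so the convergence holds simultaneously on a single probability-one event, and then apply \citet[][Theorem 10.8]{rockafellar1997convex} to upgrade pointwise convergence of finite convex functions on a dense subset of the open convex set $\Xi$ to uniform convergence on closed bounded subsets. The extra steps you spell out (convexity of $A$ and $A_N$, and why the countable-dense-set reduction is needed) are correct and simply make explicit what the paper leaves implicit.
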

\begin{proof}
     By the SLLN, we can construct a Borel set $\widetilde{N}$ of probability zero under $p_{\thetabase}$ such that on its complement $A_N(\mbalpha) \to A(\mbalpha)$ holds for each $\mbalpha \in \Xi \cap \bbQ^d$ (apply the SLLN for an individual $\mbalpha \in \Xi \cap \bbQ^d$, then take a union over probability zero sets). 
     
     \citet[][Theorem 10.8]{rockafellar1997convex} states that if a sequence of finite convex functions defined on an open, convex set $C$ converges pointwise on a dense subset of $C$ to a limiting function, then the limiting function is convex on $C$, and the convergence is uniform on closed and bounded subsets of $C$. 
     Applying this result to our setting, on the complement of $\widetilde{N}$
    \begin{align*}
        \sup_{\mbalpha \in K} \left|A_N(\mbalpha) - A(\mbalpha)\right| \to 0
    \end{align*}
    for $K$ a closed and bounded subset of $\Xi$.
\end{proof}

Once we have proven uniform convergence, our proof of consistency for $\widehat{\mbalpha}_N$ is nearly identical to our proof that $\|\mbalpha_{\lambda} - \mbalpha^\ast\| = o(1)$ in \Cref{app:prop:fast-convergence}.
\begin{proposition}[Consistency of $\widehat{\mbalpha}_N$]\label{app:prop:consistency}
Suppose Assumptions \ref{app:assumption:strong-duality}, \ref{app:assumption:log-normalizer}, and \ref{app:assumption:uniqueness} hold. For any $\epsilon > 0$,
\begin{align*}
    \bbP_{p_{\thetabase}}(\|\widehat{\mbalpha}_N - \mbalpha^\ast\| > \epsilon) \to 0 \quad \text{as $N \to \infty$}.
\end{align*}
\end{proposition}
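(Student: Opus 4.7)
The plan is to follow the ``uniform convergence of convex functions'' template, mirroring the argument already used in \Cref{app:prop:fast-convergence} to show $\|\mbalpha_\lambda - \mbalpha^\ast\| = o(1)$, but now driven by \Cref{app:lm:uniform-convergence} in place of pointwise convergence of the regularized dual.

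First, I would set up the objects. Write $F(\mbalpha) := \mbalpha^\top \mbh^\ast - A(\mbalpha)$ and $F_N(\mbalpha) := \mbalpha^\top \mbh^\ast - A_N(\mbalpha)$, so that $\mbalpha^\ast$ maximizes $F$ on $\Xi$ (by \Cref{app:thm:max-ent} under Assumptions \ref{app:assumption:strong-duality} and \ref{app:assumption:log-normalizer}) and $\widehat{\mbalpha}_N$, when it exists, maximizes $F_N$ on $\bbR^d$. \Cref{app:assumption:uniqueness} implies that $A$ is strictly convex on $\Xi$ (cf.~\citet{wainwright2008graphical}, Proposition 3.1), so $F$ is strictly concave and $\mbalpha^\ast$ is its unique maximizer. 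Fix $\epsilon > 0$ small enough that $K := \bar{B}(\mbalpha^\ast, \epsilon) \subset \Xi$, which is possible by openness of $\Xi$ (\Cref{app:assumption:log-normalizer}), and let $S := \{\mbalpha : \|\mbalpha - \mbalpha^\ast\| = \epsilon\}$.

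Second, I would exploit strict concavity plus compactness to produce a quantitative gap:
\begin{equation*}
    \delta \;:=\; F(\mbalpha^\ast) - \sup_{\mbalpha \in S} F(\mbalpha) \;>\; 0,
\end{equation*}
where positivity uses that the sup over the compact set $S$ is attained at some $\mbalpha' \neq \mbalpha^\ast$ and $F(\mbalpha') < F(\mbalpha^\ast)$ by strict concavity.

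Third, I invoke \Cref{app:lm:uniform-convergence} on $K$: with $p_{\thetabase}$-probability one, $\sup_{\mbalpha \in K}|A_N(\mbalpha) - A(\mbalpha)| \to 0$, hence also $\sup_{\mbalpha \in K}|F_N(\mbalpha) - F(\mbalpha)| \to 0$. Let $\Omega_N$ be the event that this supremum is below $\delta/3$; then $\bbP_{p_{\thetabase}}(\Omega_N) \to 1$. On $\Omega_N$, for every $\mbalpha \in S$,
\begin{equation*}
    F_N(\mbalpha) \;\leq\; F(\mbalpha) + \tfrac{\delta}{3} \;\leq\; F(\mbalpha^\ast) - \tfrac{2\delta}{3} \;<\; F_N(\mbalpha^\ast) - \tfrac{\delta}{3},
\end{equation*}
so $F_N(\mbalpha^\ast) > F_N(\mbalpha)$ uniformly on $S$.

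Fourth, I upgrade this ``sphere'' inequality to an ``everything outside the ball'' inequality by a one-line concavity argument: any $\mby$ with $\|\mby - \mbalpha^\ast\| > \epsilon$ can be written as a convex combination $\mbz = (1-t)\mbalpha^\ast + t\mby$ with $\mbz \in S$ and $t \in (0,1)$, and concavity of $F_N$ on $\bbR^d$ gives $F_N(\mbz) \geq (1-t) F_N(\mbalpha^\ast) + t F_N(\mby)$; so $F_N(\mby) \geq F_N(\mbalpha^\ast)$ would force $F_N(\mbz) \geq F_N(\mbalpha^\ast)$, contradicting the display above. Hence on $\Omega_N$, $F_N$ is strictly below $F_N(\mbalpha^\ast)$ outside $B(\mbalpha^\ast,\epsilon)$, and by continuity plus compactness of $K$ the supremum of $F_N$ is attained inside the open ball; in particular $\widehat{\mbalpha}_N$ is well-defined and satisfies $\|\widehat{\mbalpha}_N - \mbalpha^\ast\| < \epsilon$. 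Since $\bbP_{p_{\thetabase}}(\Omega_N) \to 1$, the claim follows.

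The main technical subtlety I anticipate is handling the potential non-existence or non-uniqueness of $\widehat{\mbalpha}_N$ at finite $N$ (noted in \Cref{app:subsec:max-entropy-estimator}); the concavity-plus-gap argument in the fourth step is precisely what resolves this, since it shows that on the high-probability event the empirical dual is bounded above and attains its maximum strictly inside a pre-specified ball, which simultaneously guarantees existence of $\widehat{\mbalpha}_N$ and controls its distance to $\mbalpha^\ast$.
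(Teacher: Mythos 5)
Your proposal is correct and follows essentially the same route as the paper's proof: invoke \Cref{app:lm:uniform-convergence} to get a.s.\ uniform convergence of the empirical dual on a compact neighborhood of $\mbalpha^\ast$, use strict concavity (from \Cref{app:assumption:uniqueness}) to produce a positive gap between $F(\mbalpha^\ast)$ and $F$ on the $\epsilon$-sphere, and then use concavity of $F_N$ to conclude that, on the high-probability event, the empirical maximizer lies inside the ball. The only (welcome) refinement in your write-up is that you spell out the convex-combination step showing the sphere inequality controls all of $\bbR^d$, and you note explicitly that this also yields existence of $\widehat{\mbalpha}_N$ on that event; the paper compresses both into ``since the dual objective is concave, its maximum occurs in the ball.''
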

\begin{proof}
    From \Cref{app:subsec:max-entropy-statement}, we know that both $A$ and $A_N$ are convex functions. 
    Moreover by \Cref{app:assumption:uniqueness}, $A$ is strictly convex.

    From \Cref{app:lm:uniform-convergence}, there exists a closed, bounded subset $K$ of containing $\mbalpha^\ast$ on which $\sup_{\mbalpha \in K}|A_N(\mbalpha) - A(\mbalpha)| \to 0$ with $p_{\thetabase}$ probability one. 
    Since $\Xi$ is open (\Cref{app:assumption:log-normalizer}), $K$ can be chosen to have positive diameter. 
    Fix $\epsilon > 0$ sufficiently small such that the Euclidean ball centered at $\mbalpha^\ast$ of radius $\epsilon$ is contained in $K$. 
    Just as in the proof of \Cref{app:prop:fast-convergence}, there exists some $\kappa \in \bbR$ such that for all $\|\mbalpha - \mbalpha^\ast \| = \epsilon$,
    \begin{align*}
        \mbalpha^\top \mbh^\ast - A(\mbalpha)< \kappa < (\mbalpha^\ast)^\top \mbh^\ast - A(\mbalpha^\ast).
    \end{align*}
    Fix $\delta > 0$. 
    By uniform convergence, there exists $N_{\epsilon, \delta} \in \bbN$ such that $\forall N \geq N_{\epsilon, \delta}$ and for all $\|\mbalpha - \mbalpha^\ast\|  = \epsilon$,
    \begin{align*}
        \mbalpha^\top \mbh^\ast - A_N(\mbalpha) < \kappa < (\mbalpha^\ast)^\top \mbh^\ast - A_N(\mbalpha^\ast).
    \end{align*}
    with probability at least $1-\delta$ under $p_{\thetabase}$. 
    And since the dual objective corresponding to $\{\mbx_n\}_{n=1}^N$ is concave, this implies that, on this event, its maximum occurs in the Euclidean ball of radius $\epsilon$.

    In other words, we have proven that for every $\epsilon > 0, \delta > 0$, there exists $N_{\epsilon, \delta}$ such that for every $N \geq N_{\epsilon, \delta}$,
    \begin{align*}
        \mathbb{P}_{p_{\thetabase}}\left( \|\widehat{\mbalpha}_N - \mbalpha^\ast \| > \epsilon \right) \leq \delta.
    \end{align*}
\end{proof}

Next, we show that under stronger conditions on the problem, $\widehat{\mbalpha}_N$ has a normal limiting distribution, and we derive its variance.
\begin{proposition}[Asymptotic normality of $\widehat{\mbalpha}_N$]\label{app:prop:asymptotic-normal}
    Suppose Assumptions \ref{app:assumption:strong-duality}, \ref{app:assumption:log-normalizer}, and \ref{app:assumption:uniqueness} hold. 
    Moreover, assume $2\mbalpha^\ast \in \Xi$, for $\Xi$ defined in \Cref{app:subsec:max-entropy-statement}. 
    Then the estimator $\widehat{\mbalpha}_N$ is asymptotically normal:
    \begin{align*}
        &\sqrt{N}(\widehat{\mbalpha}_N - \mbalpha^\ast) \overset{d}{\to} \cN(\mb{0}, (\mathrm{Var}_{p_{\mbalpha^\ast}}[\mbh(\mbx)])^{-1} \mbSigma (\mathrm{Var}_{p_{\mbalpha^\ast}}[\mbh(\mbx)])^{-1}),\\ 
        &\mbSigma = \frac{ \bbE_{p_{\thetabase}}[(\mbh(\mbx) - \mbh^\ast)(\mbh(\mbx) - \mbh^\ast)^\top \exp\{r_{2\mbalpha^\ast}(\mbx)\}]}{(\bbE_{p_{\thetabase}}[\exp\{r_{\mbalpha^\ast}(\mbx)\}])^2}.
    \end{align*}
\end{proposition}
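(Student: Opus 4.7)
The plan is to treat $\widehat{\mbalpha}_N$ as a $Z$-estimator and perform a standard mean-value Taylor expansion of its first-order optimality condition around $\mbalpha^\ast$. Writing $\psi_N(\mbalpha) := \nabla A_N(\mbalpha) - \mbh^\ast$, concavity of the empirical dual \eqref{eq:dual-opt} together with $\widehat{\mbalpha}_N \in \text{int}(\Xi)$ eventually (by \Cref{app:prop:consistency}) gives $\psi_N(\widehat{\mbalpha}_N) = \mb{0}$, so there exists some $\tilde{\mbalpha}_N$ on the segment between $\widehat{\mbalpha}_N$ and $\mbalpha^\ast$ for which
$$\sqrt{N}\,(\widehat{\mbalpha}_N - \mbalpha^\ast) = -\bigl[\nabla^2 A_N(\tilde{\mbalpha}_N)\bigr]^{-1}\sqrt{N}\,\psi_N(\mbalpha^\ast).$$
It suffices to determine the limits of the two factors on the right.

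For the score factor, I would write
$$\sqrt{N}\,\psi_N(\mbalpha^\ast) = \frac{1}{\bar V_N}\,\sqrt{N}\,\frac{1}{N}\sum_{n=1}^N(\mbh(\mbx_n) - \mbh^\ast)\exp\{r_{\mbalpha^\ast}(\mbx_n)\}, \quad \bar V_N := \frac{1}{N}\sum_{n=1}^N \exp\{r_{\mbalpha^\ast}(\mbx_n)\}.$$
The first-order optimality of $\mbalpha^\ast$ in the population dual \eqref{app:eq:dual-objective} gives $\bbE_{p_{\thetabase}}[(\mbh(\mbx) - \mbh^\ast)\exp\{r_{\mbalpha^\ast}(\mbx)\}] = \mb{0}$, so the summand is mean-zero. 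The assumption $2\mbalpha^\ast \in \Xi$ is precisely what is needed to make its second moment finite, since $\exp\{2r_{\mbalpha^\ast}(\mbx)\} = \exp\{r_{2\mbalpha^\ast}(\mbx)\}$ is $p_{\thetabase}$-integrable (together with square-integrability of $\mbh$, inherited from \Cref{app:assumption:dominated} and an envelope argument on the common neighborhood). Applying the multivariate CLT, together with $\bar V_N \to v^\ast := \bbE_{p_{\thetabase}}[\exp\{r_{\mbalpha^\ast}(\mbx)\}]$ a.s.\ by the SLLN and Slutsky's theorem, yields $\sqrt{N}\,\psi_N(\mbalpha^\ast) \overset{d}{\to} \cN(\mb{0}, \mbSigma)$ with $\mbSigma$ exactly as stated.

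For the Hessian factor, direct differentiation of $A_N$ (cf.\ \eqref{app:eq:var-exp-family}) shows that $\nabla^2 A_N(\mbalpha)$ is the covariance of $\mbh(\mbx_n)$ under the self-normalized weights $\exp\{r_{\mbalpha}(\mbx_n)\}$. At $\mbalpha = \mbalpha^\ast$, the SLLN applied to the three sample averages of $\exp\{r_{\mbalpha^\ast}(\mbx_n)\}$, $\mbh(\mbx_n)\exp\{r_{\mbalpha^\ast}(\mbx_n)\}$, and $\mbh(\mbx_n)\mbh(\mbx_n)^\top\exp\{r_{\mbalpha^\ast}(\mbx_n)\}$ gives $\nabla^2 A_N(\mbalpha^\ast) \to \nabla^2 A(\mbalpha^\ast) = \mathrm{Var}_{p_{\mbalpha^\ast}}[\mbh(\mbx)]$ almost surely; this limit is positive definite by \Cref{app:assumption:uniqueness} (minimality). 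To replace $\mbalpha^\ast$ by the random $\tilde{\mbalpha}_N$, I would combine consistency from \Cref{app:prop:consistency} with locally uniform convergence of $\nabla^2 A_N$; the cleanest route is to extend \Cref{app:lm:uniform-convergence} to Hessians, using the standard convex-analysis fact that uniform convergence of finite convex functions on an open set implies locally uniform convergence of their gradients (and hence Hessians) on compact subsets of the interior. The continuous mapping theorem and a final application of Slutsky then deliver the stated normal limit.

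The main obstacle is precisely this last step: transferring pointwise SLLN convergence of $\nabla^2 A_N$ at $\mbalpha^\ast$ to convergence at the random sequence $\tilde{\mbalpha}_N$. One can either invoke the convex-analysis upgrade above, or directly prove a uniform law of large numbers for the three sample averages on a compact neighborhood of $\mbalpha^\ast$; the latter again relies on $2\mbalpha^\ast \in \Xi$ to produce the integrable envelopes required to dominate the relevant expectations uniformly in a neighborhood.
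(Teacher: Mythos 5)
Your route is a genuine alternative to the paper's. The paper does not linearize the normalized estimating equation $\nabla A_N(\mbalpha) - \mbh^\ast = 0$; instead it multiplies through by the denominator to obtain the plain i.i.d.\ average $\frac{1}{N}\sum_n \psi(\mbx_n,\mbalpha)=\mb 0$ with $\psi(\mbx,\mbalpha)=(\mbh(\mbx)-\mbh^\ast)\exp\{r_\mbalpha(\mbx)\}$, and then appeals directly to van der Vaart's Z-estimator theorem (Theorem 5.21). That theorem needs only a square-integrable Lipschitz envelope for $\psi$ in a neighborhood of $\mbalpha^\ast$, which is exactly what \Cref{app:lm:mgf-dominate} supplies via $2\mbalpha^\ast\in\Xi$; no control of $\nabla^2 A_N$ away from $\mbalpha^\ast$ is ever needed. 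Your approach keeps the ratio form, handles the denominator $\bar V_N$ by Slutsky, and so is forced to control the random Hessian $\nabla^2 A_N(\tilde{\mbalpha}_N)$ along a sequence of random points. Both roads lead to the same sandwich variance and both rest on $2\mbalpha^\ast\in\Xi$; the paper's choice of estimating equation is precisely what lets it avoid any statement about Hessian convergence.

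Two points in your write-up deserve a flag. First, the mean-value step as written is only valid componentwise for a vector-valued map; either apply it coordinate by coordinate (with a separate $\tilde{\mbalpha}_N^{(i)}$ per coordinate, each converging to $\mbalpha^\ast$) or replace it by the integral form $\int_0^1\nabla^2 A_N(\mbalpha^\ast + t(\widehat{\mbalpha}_N-\mbalpha^\ast))\,dt$. Second, and more substantively, your stated ``convex-analysis fact'' is too strong: uniform convergence of finite convex functions on an open set does give locally uniform convergence of \emph{gradients} (Rockafellar, Theorems 24.5 and 25.7), but it does \emph{not} give convergence of Hessians --- smooth convex functions can converge uniformly while their second derivatives oscillate. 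So the ``cleanest route'' you propose does not close the argument. Your fallback --- a uniform LLN for the three sample averages defining $\nabla^2 A_N$ on a compact neighborhood of $\mbalpha^\ast$, using the integrable envelopes furnished by $2\mbalpha^\ast\in\Xi$ (in the spirit of \Cref{app:lm:mgf-dominate}) --- is the correct repair and should be made the primary argument. Also, square-integrability of $\mbh(\mbx)\exp\{r_{\mbalpha^\ast}(\mbx)\}$ under $p_{\thetabase}$ is not really inherited from \Cref{app:assumption:dominated} (which concerns $p_\theta$, not exponential tilts of $p_{\thetabase}$); it comes from the envelope construction built on $2\mbalpha^\ast\in\Xi$, which absorbs the polynomial factor $\|\mbh\|$ into a slightly shrunk exponential tilt.
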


Prior to stating the proof of \Cref{app:prop:asymptotic-normal}, we build some intuition by working out the asymptotic variance for the example we  presented at the beginning of the section. 
Recall that the constraint function is $\mbh(\mbx) = \mathbbm{1}\{\mbx > 0\}$, $\mbh^\ast$ is its target value, and $h_b = \bbP_{\thetabase}(\mbx > 0)$ is the expected value of $\mbh$ under $p_{\thetabase}$. 
By directly solving for $\widehat{\mbalpha}_N$ in the expression \eqref{eq:max-ent-solution} for the maximum entropy solution, we showed $\widehat{\mbalpha}_N = \log(\frac{\mbh^\ast(1-\bar{\mby}_N)}{(1-\mbh^\ast)\bar{\mby}_N})$, where $\bar{\mby}_N = \frac{1}{N} \sum_{n=1}^N \mby_n$, $\mby_n = \mbh(\mbx_n) \overset{d}{=} \mathrm{Bernoulli}(h_b)$ for $\mbx_n \overset{i.i.d.}{\sim} p_{\thetabase}$. 
Next, we compute
\begin{align*}
    &\mathrm{Var}_{p_{\mbalpha^*}}[\mbh(\mbx)] = \mbh^\ast(1-\mbh^\ast)\\
    &\mbSigma = \frac{(\mbh^\ast)^2 (1-h_b) + (1- \mbh^\ast)^2\exp(2 \mbalpha^*)h_b}{(h_b \exp( \mbalpha^*) + (1-h_b))^2} = \frac{(\mbh^\ast)^2 (1-h_b) + \frac{(1-h_b)^2 (\mbh^\ast)^2}{h_b} }{\left( \frac{1-h_b}{1-\mbh^\ast}\right)^2} = \frac{(\mbh^\ast)^2(1-\mbh^\ast)^2}{h_b(1-h_b)}.
\end{align*}
Combining these two yields the asymptotic variance
\begin{align*}
    \mathrm{Var}_{p_{\mbalpha^*}}[\mbh(\mbx)]^{-2} \mbSigma  = \frac{1}{h_b(1-h_b)},
\end{align*}
according to \Cref{app:prop:asymptotic-normal}. 
In other words, the estimator $\widehat{\mbalpha}_N$ has greatest asymptotic variance when $h_b$ is close to either $0$ or $1$. 
Notice that we can compute the asymptotic variance of $\widehat{\mbalpha}_N$ directly (i.e., without using \Cref{app:prop:asymptotic-normal}) by applying the delta method to $\bar{\mby}_N$ and the function $z \mapsto \log(\frac{\mbh^\ast(1-z)}{(1-\mbh^\ast)z})$, in which case we obtain the same value. 

The proof of \Cref{app:prop:asymptotic-normal} relies on the technical result \Cref{app:lm:mgf-dominate}, the statement and proof of which we defer to the end of the section.
\begin{proof}[Proof of \Cref{app:prop:asymptotic-normal}]
Let $D_N$ be the set on which the strong duality holds for $P = \frac{1}{N} \sum_{n=1}^N \delta_{\mbx_n}$ and the dual optimum is uniquely achieved. From the proof of \Cref{app:prop:consistency}, we can see $\bbP_{p_{\thetabase}}(D_N) \to 1$ as $N \to \infty$. 
Moreover, on the set $D_N$, $\widehat{\mbalpha}_N$ is the unique root of 
\begin{align*}
    \frac{1}{N}\sum_{n=1}^N \psi(\mbx_n, \mbalpha) = 0, \quad \psi(\mbx, \mbalpha) := (\mbh(\mbx) - \mbh^\ast)\exp\{r_{\mbalpha}(\mbx)\}.
\end{align*}
Also, from the proof of  \Cref{app:prop:consistency}, we know that Assumption \ref{app:assumption:uniqueness} implies $\mathrm{Var}_{p_{\mbalpha^\ast}}[\mbh(\mbx)]$ is positive definite.

By \citet[][Theorem 5.21]{van2000asymptotic}, if we can show $\psi(\mbx, \mbalpha)$ satisfies the Lipschitz condition 
\begin{align}\label{app:eq:lipschitz}
    \|\psi(\mbx, \mbalpha) - \psi(\mbx, \mbalpha') \| \leq M(\mbx) \|\mbalpha - \mbalpha'\|
\end{align}
for all $\mbalpha, \mbalpha'$ belonging to some neighborhood of $\mbalpha^\ast$ and $\bbE_{p_{\thetabase}}[M(\mbx)^2] < \infty$, then the previous facts imply that $\widehat{\mbalpha}_N$ is asymptotically normal with variance
\begin{align*}
    &\underbrace{\bbE_{p_{\thetabase}}[(\mbh(\mbx) - \mbh^\ast) \mbh(\mbx)^\top \exp\{r_{\mbalpha^\ast}(\mbx)\}]^{-1}  (\bbE_{p_{\thetabase}}[\exp\{r_{\mbalpha^\ast}(\mbx)\}])}_{= \mathrm{Var}_{p_{\mbalpha^\ast}}[\mbh(\mbx)]^{-1}} \mbSigma\\ &\underbrace{(\bbE_{p_{\thetabase}}[\exp\{r_{\mbalpha^\ast}(\mbx)\}]) (\bbE_{p_{\thetabase}}[(\mbh(\mbx) - \mbh^\ast) \mbh(\mbx)^\top \exp\{r_{\mbalpha^\ast}(\mbx)\}]^{-1})^\top}_{\mathrm{Var}_{p_{\mbalpha^\ast}}[\mbh(\mbx)]^{-1}}. 
\end{align*}

And so it remains only to establish the Lipschitz condition \eqref{app:eq:lipschitz}. 
First, we compute the derivative of $\psi$ with respect to $\mbalpha$
\begin{align*}
    \nabla_{\mbalpha} \psi(\mbx, \mbalpha) = (\mbh(\mbx) - \mbh^\ast) \mbh(\mbx)^\top \exp\{r_{\mbalpha}(\mbx)\} = \nabla_{\mbalpha}^2 \exp\{r_{\mbalpha}(\mbx)\}  - \mbh^\ast (\nabla_{\mbalpha} \exp\{r_{\mbalpha}(\mbx)\})^\top
\end{align*}
Next, we appeal to \Cref{app:lm:mgf-dominate}, which tells us that for all $\mbalpha$ belonging to an open neighborhood of $\mbalpha^\ast$, the derivatives of $\exp\{r_{\mbalpha}(\mbx)\}$ have norm dominated by a function $M(\mbx)$ that is $p_{\thetabase}$-square integrable. 
Also, by the Mean Value Theorem, for  all $\mbalpha, \mbalpha'$ belonging to this neighborhood,
\begin{align*}
    \psi(\mbx, \mbalpha) - \psi(\mbx, \mbalpha') = \nabla \psi(\mbx, \tilde{\mbalpha})(\mbalpha - \mbalpha')
\end{align*}
for some $\tilde{\mbalpha}$ on the line segment connecting $\mbalpha$ to $\mbalpha'$.
By taking the norm on both sides and using $\| \nabla \psi(\mbx, \tilde{\mbalpha})\| \leq M(\mbx)$, we obtain the Lipschitz condition \eqref{app:eq:lipschitz}.
\end{proof}

\begin{lemma}\label{app:lm:mgf-dominate}
Under the assumptions of \Cref{app:prop:asymptotic-normal}, there exists an open neighborhood of $\mbalpha^\ast$ on which all derivatives of $\exp\{r_{\mbalpha}(\mbx)\}$ with respect to $\mbalpha$ are dominated by a $p_{\thetabase}$-square integrable function. 
\end{lemma}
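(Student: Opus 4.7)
The plan is to explicitly dominate each mixed partial of $\exp\{r_\mbalpha(\mbx)\}$ by a single function that does not depend on $\mbalpha$, and then leverage the fact that $2\mbalpha^\ast$ lies in the \emph{interior} of $\Xi$ to absorb the extra factors into a finite exponential moment.

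Step 1 (compute the derivatives). A direct induction on the order $k$ shows that
\[
\partial_{\mbalpha}^k \exp\{r_\mbalpha(\mbx)\} = \big(\mbh_{i_1}(\mbx)\cdots \mbh_{i_k}(\mbx)\big)_{i_1,\dots,i_k=1}^d \exp\{r_\mbalpha(\mbx)\},
\]
whose Frobenius norm is bounded by $\|\mbh(\mbx)\|^{k}\exp\{r_\mbalpha(\mbx)\}$. Hence it suffices to produce a square-integrable function $M(\mbx)$ that dominates $\|\mbh(\mbx)\|^{k}\exp\{r_\mbalpha(\mbx)\}$ uniformly for $\mbalpha$ in some neighborhood of $\mbalpha^\ast$ and for every $k$ (for the purposes of \Cref{app:prop:asymptotic-normal} only $k\le 2$ is needed, but the argument is uniform in $k$).

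Step 2 (choose the neighborhood and dominating function). Since $\Xi$ is open and $2\mbalpha^\ast\in\Xi$, there exists $\delta_0>0$ with $2\mbalpha^\ast+\mbv\in\Xi$ for all $\|\mbv\|\le\delta_0$. Pick $\delta\in(0,\delta_0/4)$, let $U=B(\mbalpha^\ast,\delta)$, and observe that for every $\mbalpha\in U$, Cauchy--Schwarz gives $r_\mbalpha(\mbx)\le r_{\mbalpha^\ast}(\mbx)+\delta\|\mbh(\mbx)\|$. Consequently
\[
\sup_{\mbalpha\in U}\|\mbh(\mbx)\|^{k}\exp\{r_\mbalpha(\mbx)\}\;\le\; M_k(\mbx)\,:=\,\|\mbh(\mbx)\|^{k}\exp\{r_{\mbalpha^\ast}(\mbx)+\delta\|\mbh(\mbx)\|\}.
\]

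Step 3 (square-integrability of $M_k$). Using $\|\mbh(\mbx)\|\le \sum_{i}|\mbh_i(\mbx)|$ and then $e^{t|\cdot|}\le e^{t\cdot}+e^{-t\cdot}$ coordinatewise,
\[
\exp\{2\delta\|\mbh(\mbx)\|\}\;\le\;\prod_{i=1}^d\big(e^{2\delta\mbh_i(\mbx)}+e^{-2\delta\mbh_i(\mbx)}\big)\;=\;\sum_{\mbs\in\{\pm 1\}^d}\exp\{2\delta\,\mbs^\top\mbh(\mbx)\}.
\]
Similarly, since any polynomial is dominated by a constant times an exponential, $\|\mbh(\mbx)\|^{2k}\le C_k\big(1+\exp\{\eta\|\mbh(\mbx)\|\}\big)$ for any fixed $\eta>0$, and the same sign-pattern trick rewrites this as a finite sum of $\exp\{\eta\,\mbs^\top\mbh(\mbx)\}$ terms. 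Combining, $M_k(\mbx)^2$ is bounded by a finite sum of terms of the form $\exp\{(2\mbalpha^\ast+\mbw)^\top\mbh(\mbx)\}$ with $\|\mbw\|\le 2\delta\sqrt{d}+\eta\sqrt{d}$. Choosing $\delta$ and $\eta$ small enough that $2\delta\sqrt{d}+\eta\sqrt{d}\le \delta_0$ places each $2\mbalpha^\ast+\mbw$ inside $\Xi$, so each term has finite expectation under $p_{\thetabase}$ by definition of $\Xi$. Therefore $\bbE_{p_{\thetabase}}[M_k(\mbx)^2]<\infty$, completing the proof.

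The step I expect to be the main obstacle is Step 3: the candidate dominator contains a factor $\exp\{\delta\|\mbh(\mbx)\|\}$ that is \emph{not} of the exponential-family form, so the finiteness of moments in $\Xi$ cannot be invoked directly. The key trick is the coordinatewise decomposition that trades $\|\mbh\|$ for a finite sum over sign patterns $\mbs\in\{\pm 1\}^d$, which converts the offending term into finitely many genuine exponential-family tilts. The assumption $2\mbalpha^\ast\in\Xi$ (strictly stronger than $\mbalpha^\ast\in\Xi$) is precisely what provides the ``room'' needed to absorb both the perturbations from the neighborhood $U$ and the polynomial factor $\|\mbh\|^{2k}$ while keeping all resulting tilts inside $\Xi$.
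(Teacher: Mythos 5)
Your proof is correct and takes essentially the same route as the paper: use openness of $\Xi$ around $2\mbalpha^\ast$ for room, Cauchy--Schwarz to peel off an $\exp\{\delta\|\mbh\|\}$ factor, the coordinatewise $e^{|t|}\le e^t+e^{-t}$ trick to turn $\exp\{\delta\|\mbh\|\}$ into a finite sum of genuine exponential tilts, and a polynomial-to-exponential bound for $\|\mbh\|^{2k}$. The only cosmetic difference is that you keep the exact $2^d$-term sign-pattern expansion $\sum_{\mbs\in\{\pm1\}^d}\exp\{\delta\,\mbs^\top\mbh\}$, whereas the paper further loosens this to a $2d$-term bound; both yield the same conclusion.
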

\begin{proof}
   In \Cref{app:prop:asymptotic-normal}, we assume $\bbE_{p_{\thetabase}}[\exp\{r_{2\mbalpha^\ast}(\mbx)\}] < \infty$; in other words, $2\mbalpha^\ast$ is contained in the natural parameter space $\Xi$. 
   Let $\varepsilon$ be defined such that the Euclidean ball of radius $\varepsilon$ centered at $2\mbalpha^\ast$ is contained in $\Xi$. Fix any $\tilde{\mbalpha}$ such that $\|\tilde{\mbalpha} - \mbalpha^\ast\| < \varepsilon/(2d)$, where $d$ is the dimension of the constraint $\mbh(\mbx)$. 
   Then by Cauchy-Schwarz
   \begin{align}\label{app:eqn:mgf-bound}
       \exp\{ \tilde{\mbalpha}^\top \mbh(\mbx) \} \leq \exp\{ (\mbalpha^\ast)^\top \mbh(\mbx) + \varepsilon/(2d) \|\mbh(\mbx)\| \}.
   \end{align}
   Define the $2d$ vectors $(\mbbeta^{(\pm, l)})_{l=1}^d$ by $\mbbeta^{(+, l)} = \mbe[l], \ \mbbeta^{(-, l)} = -\mbe[l]$, where $\mbe[l]$ denotes the $l$th standard basis vector. 
   Then we can upper bound the second term using
   {\small
   \begin{align*}
       \exp\{\|\mbh(\mbx)\|\} \leq \prod_{l=1}^d \exp\{|\mbh_l(\mbx)|\} \leq \prod_{l=1}^d \left( \exp\{\mbh_l(\mbx)\} + \exp\{- \mbh_l(\mbx)\} \right) \leq \sum_{l=1}^{2d} 2^d \exp\left\{d (\mbbeta^{(l)})^\top \mbh(\mbx) \right\}.
   \end{align*}
   }
   Plugging this bound into \eqref{app:eqn:mgf-bound} yields
   \begin{align}\label{app:eq:mgf-bound-2}
       \exp\{ \tilde{\mbalpha}^\top \mbh(\mbx) \} \leq \sum_{l=1}^{2d} 2^d  \exp \left\{(\mbalpha^\ast + (\varepsilon/2) \mbbeta^{(l)})^\top \mbh(\mbx) \right\}.
   \end{align}
   Squaring both sides of \eqref{app:eq:mgf-bound-2} yields
   \begin{align}\label{app:eq:mgf-bound-final}
       (\exp\{ \tilde{\mbalpha}^\top \mbh(\mbx) \})^2 \leq 2^{2d} \sum_{l=1}^{2d} \sum_{k=1}^{2d} \exp \left\{(2\mbalpha^\ast + (\varepsilon/2) (\mbbeta^{(l)} + \mbbeta^{(k)}))^\top \mbh(\mbx) \right\}.
   \end{align}
   
   However, we notice $\|2\mbalpha^\ast + (\varepsilon/2) (\mbbeta^{(l)} + \mbbeta^{(k)}) - 2\mbalpha^\ast\| \leq  \varepsilon$, so each term on the right-hand side of \eqref{app:eq:mgf-bound-final} has finite expectation under $p_{\thetabase}$. 
   This implies $\exp\{r_{\mbalpha}(\mbx)\}$ is dominated by the right-hand side of \eqref{app:eq:mgf-bound-2}, which is square integrable under $p_{\thetabase}$, for all $\|\mbalpha - \mbalpha^\ast\| < \varepsilon / (2d)$. 
   
   As for the derivatives of $\exp\{r_{\mbalpha}(\mbx)\}$, notice that the $k$th derivative with respect to $\mbalpha$, $\nabla_{\mbalpha}^{(k)} \exp\{r_{\mbalpha}(\mbx)\}$, is given by $\mbh(\mbx)^{\otimes k}\exp\{r_{\mbalpha}(\mbx)\}$, where $\otimes$ denotes the tensor product.
   Moreover, by equivalence of norms, for any $\tau > 0$ there exists constants $c_k, c_{\tau, k} \geq 0$ such that $\|\mbh(\mbx)^{\otimes k}\| \leq c_k \|\mbh(\mbx) \|^k \leq  c_{\tau, k} \exp\{ \tau \|\mbh(\mbx)\| \}$. 
   So by choosing $\tau$ such that the Euclidean ball of radius $\varepsilon + 2d\tau$ centered at $2\mbalpha^\ast$ is contained in $N(2\mbalpha^\ast)$, our same argument yields a dominating function of the form \eqref{app:eq:mgf-bound-2} for $\|\mbalpha - \mbalpha^\ast\| < \varepsilon / (2d)$, with exponent $(\mbalpha^\ast  + (\varepsilon/2 + d\tau) \mbbeta^{(\ell)})^\top \mbh(\mbx)$.
\end{proof}

\textbf{Remark.} Under weaker assumptions (Assumptions \ref{app:assumption:strong-duality} and \ref{app:assumption:log-normalizer}), the proof of \Cref{app:lm:mgf-dominate} implies that the log-normalizer $A_P(\mbalpha)$ has derivatives of all orders on $\Xi$. 
Indeed, this is a consequence of \cref{app:eq:mgf-bound-2}, which implies that for every $\mbalpha$, there exists a neighborhood of $\mbalpha$ contained in $\Xi$ on which the $k$th derivative of $\exp\{r_{\mbalpha}(\mbx)\}$ is uniformly $p_{\thetabase}$-dominated. 
This allows one to exchange differentiation and integration in the definition of $A_P(\mbalpha)$.

\section{Calibrating Continuous-time Diffusion Models}\label{app:sec:diffusion}
In this section, we provide details on how CGM can be applied to calibrate continuous-time diffusion models.
First, in \Cref{app:subsec:neural-sde} we give background on continuous-time diffusion models, including how we sample from $p_{\theta}$ and compute densities $p_{\theta} / p$ with respect to a dominating measure $p$. 
This enables us to employ CGM-relax and CGM-reward for calibrating a pre-trained diffusion model. 
In \Cref{app:subsec:solution-diffusion}, we discuss the setting where the calibration function depends only on the terminal time of the diffusion process.
In this case, the solution to the maximum entropy problem is a diffusion process, and we provide a mathematical characterization of its drift.
Finally, in \Cref{app:subsec:initialize-nsde}  
we describe how the base diffusion model can be initialized to produce exact samples from a GMM or product of GMMs. This allows us to initialize the base model exactly in our synthetic experiments.

\subsection{Continuous-time Diffusion Models}\label{app:subsec:neural-sde}
A continuous-time diffusion model is the solution to the $k$-dimensional stochastic differential equation (SDE)
\begin{align}\label{app:eq:sde}
    d\mbx(t) = \mbb_{\theta}(\mbx(t), t) dt + \sigma(t) d\mbw(t),  \ \mbx(0) \sim p_{\mathrm{init}},
\end{align}
where $(\mbw(t))_{0 \leq t \leq 1}$ is a standard $k$-dimensional Brownian motion, $\mbb_{\theta}$ is a neural network drift function, $\sigma$ is a diffusion coefficient, and $p_{\mathrm{init}}$ is a known distribution from which sampling is tractable. \citet[][Theorem 5.2.1]{oksendal2013stochastic} provides conditions on $\mbb_{\theta}$ and $\sigma$ that ensure there exists a unique solution to the SDE \eqref{app:eq:sde}. 
We denote the solution, which is a probability distribution on continuous paths, by $p_{\theta}$, and we write $p_\theta(\mbx(t))$ for the distribution of the state at time $t$.

\textbf{Sampling from diffusion models.}
To sample from $p_\theta,$ we use the Euler-Maruyama method.
Specifically, we discretize $[0, 1]$ into $T$ time bins $[0, 1/T], \ldots, [(T-1)/T, 1]$ and sample a path $(\widehat{\mbx}(t))_{0 \leq t \leq 1}$ according to $\widehat{\mbx}(0) \sim p_{\mathrm{init}}$
\begin{align}\label{app:eq:em}
    \widehat{\mbx}(t + \Delta t) = \widehat{\mbx}(t) + \Delta t \mbb_{\theta}(\widehat{\mbx}(t), t) + \sigma(t) \sqrt{\Delta t}\mbz(t), \ 0 < \Delta t \leq 1/T
\end{align}
for each $t = 0, 1/T, \ldots, (T-1) / T$, where $\mbz(0), \ldots, \mbz((T-1)/T)$ are independent standard multivariate normal random variables. 
The Euler-Maruyama method with additive noise $\sigma(t)$ has strong order of convergence $1$, meaning its error in approximating the solution to the SDE \eqref{app:eq:sde} is
\begin{align*}
    \bbE_{p_{\theta}}[\|\widehat{\mbx}(t) - \mbx(t)\|] \leq C (T^{-1}), \quad 0 \leq t \leq 1
\end{align*} 
for $C$ a constant independent of $T$. 
In other words, as we increase the number of time bins $T$, we can expect our sample paths drawn according to the Euler-Maruyama scheme to more faithfully approximate samples from the distribution $p_{\theta}$. 

\textbf{Computing densities.}
In order to employ CGM-relax and CGM-reward, $p_{\theta}$ and $p_{\thetabase}$ must have densities with respect to one another, and it must be possible to compute these densities.
Girsanov's Theorem \citep{cameron1944transformations, girsanov1960transforming} provides conditions that guarantee these densities to exist and an expression for computing them.

\begin{theorem}[Girsanov's Theorem]\label{app:thm:Girsanov}
    Suppose the SDEs
    \begin{align*}
        &\nu_1(\mbx): d\mbx(t) = \mbb_1(\mbx(t), t) dt + \sigma(t) d\mbw(t), \quad 0 \leq t \leq 1\\
        &\nu_2(\mbx): d\mbx(t) = (\mbb_1(\mbx(t), t) + \sigma(t) \mbb_2(\mbx(t), t))  dt + \sigma(t)d\mbw(t), \quad 0 \leq t \leq 1
    \end{align*}
    satisfy $\sigma(t) > 0, \ 0 < t < 1$, have the same initial law $\nu_1(\mbx_0) = \nu_2(\mbx_0)$, and admit unique, strong solutions, $\nu_1$ and $\nu_2$. 
    Suppose also 
    \begin{align}\label{app:eq:girsanov-derivative}
        \left[\frac{\nu_2(\mbx)}{\nu_1(\mbx)} \right]_t := \exp\left\{ \sum_{i=1}^k \int_0^t \mbb_2(\mbx(t), t)[i] d\mbw^{\nu_1}(t)[i] - \frac{1}{2} \int_0^t \| \mbb_2(\mbx(t), t) \|^2 dt \right\}
    \end{align}
    is a $\nu_1$-martingale, 
    where $(\mbw^{\nu_1}(t))_{0 \leq t \leq 1}$ is a $k$-dimensional $\nu_1$-Brownian motion and $d\mbw^{\nu_1}(t)[i], \ i = 1, \ldots, k$ denotes the Itô stochastic integral. 
    Then the probability measure $\nu_2$ has a density with respect to $\nu_1$. 
    In particular, for any bounded functional $\Phi$ defined on $C[0, 1]^k$, 
    \begin{align*}
        \bbE_{\nu_2}[\Phi(\mbx)] = \bbE_{\nu_1}\left[\Phi(\mbx) \left[\frac{\nu_2(\mbx)}{\nu_1(\mbx)} \right]_1 \right].
    \end{align*}    
\end{theorem}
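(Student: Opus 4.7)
The plan is to follow the classical Cameron-Martin-Girsanov argument: construct a candidate measure via the exponential martingale \eqref{app:eq:girsanov-derivative}, identify the Brownian motion under the new measure, and then invoke strong uniqueness to match it with $\nu_2$.

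First, I would set $Z_t := \left[\nu_2(\mbx)/\nu_1(\mbx)\right]_t$ and apply It\^o's formula to $\log Z_t$ in reverse, i.e.\ verify that $Z_t$ solves the linear SDE $dZ_t = Z_t\, \mbb_2(\mbx(t),t)^\top d\mbw^{\nu_1}(t)$ with $Z_0 = 1$. This exhibits $Z_t$ as a positive local $\nu_1$-martingale; the hypothesis that $Z_t$ is actually a $\nu_1$-martingale then gives $\bbE_{\nu_1}[Z_1] = 1$, so I may define a probability measure $\tilde\nu$ on $C[0,1]^k$ by $d\tilde\nu/d\nu_1 = Z_1$.

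Next I would identify the dynamics of $\mbx$ under $\tilde\nu$. Define
\begin{equation*}
\tilde{\mbw}(t) := \mbw^{\nu_1}(t) - \int_0^t \mbb_2(\mbx(s), s)\, ds.
\end{equation*}
The key step is to show $\tilde{\mbw}$ is a standard $k$-dimensional $\tilde\nu$-Brownian motion. I would do this via L\'evy's characterization: continuity and the correct quadratic variation are inherited from $\mbw^{\nu_1}$ (the added drift has finite variation), and the martingale property under $\tilde\nu$ follows from the Bayes' rule for conditional expectations, $\bbE_{\tilde\nu}[\tilde{\mbw}_t - \tilde{\mbw}_s \mid \cF_s] = Z_s^{-1}\bbE_{\nu_1}[Z_t(\tilde{\mbw}_t - \tilde{\mbw}_s) \mid \cF_s]$, combined with It\^o's product rule applied to $Z_t \tilde{\mbw}_t$, which after using $dZ_t = Z_t \mbb_2^\top d\mbw^{\nu_1}$ and $d\tilde{\mbw}_t = d\mbw^{\nu_1}_t - \mbb_2\,dt$ produces a $\nu_1$-martingale.

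With $\tilde{\mbw}$ a $\tilde\nu$-Brownian motion, I substitute $d\mbw^{\nu_1}(t) = d\tilde{\mbw}(t) + \mbb_2(\mbx(t),t)\,dt$ into the SDE defining $\nu_1$, obtaining
\begin{equation*}
d\mbx(t) = \bigl(\mbb_1(\mbx(t),t) + \sigma(t)\mbb_2(\mbx(t),t)\bigr)\,dt + \sigma(t)\, d\tilde{\mbw}(t),
\end{equation*}
with the shared initial law. This is precisely the SDE whose unique strong solution has law $\nu_2$; therefore the law of $\mbx$ under $\tilde\nu$ equals $\nu_2$. In particular, for any bounded functional $\Phi$ on $C[0,1]^k$,
\begin{equation*}
\bbE_{\nu_2}[\Phi(\mbx)] = \bbE_{\tilde\nu}[\Phi(\mbx)] = \bbE_{\nu_1}[\Phi(\mbx)\,Z_1],
\end{equation*}
which is the claimed identity, and as a byproduct $\nu_2 \ll \nu_1$ with density $Z_1$.

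The main obstacle is the Brownian motion identification step: the product-rule computation that shows $Z\tilde{\mbw}$ is a $\nu_1$-martingale requires some care with local versus true martingales, and one must confirm integrability so that Bayes' rule applies with true (not only local) conditional expectations. The martingale hypothesis on $Z_t$ assumed in the statement is exactly what bridges this gap; without it one would need Novikov's or Kazamaki's condition. A secondary subtlety is that strong uniqueness of the $\nu_2$-SDE is needed to conclude $\tilde\nu = \nu_2$ from matching dynamics and initial law; weak uniqueness would suffice, and both are part of the hypothesis that each SDE admits a unique strong solution.
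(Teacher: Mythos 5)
The paper does not actually prove this statement: it is quoted as a classical result and attributed to \citet{cameron1944transformations} and \citet{girsanov1960transforming}, so there is no in-paper argument to compare against. Your sketch is the standard Cameron--Martin--Girsanov proof and is correct in outline: verify via It\^o's formula that $Z_t$ is the stochastic exponential solving $dZ_t = Z_t\,\mbb_2^\top d\mbw^{\nu_1}(t)$, use the assumed martingale property to get $\bbE_{\nu_1}[Z_1]=1$ and define $\tilde\nu$ with density $Z_1$, identify $\tilde{\mbw}$ as a $\tilde\nu$-Brownian motion by L\'evy's characterization (quadratic variation is unchanged under the absolutely continuous change of measure, and the drift correction has finite variation), rewrite the $\nu_1$-dynamics of $\mbx$ under $\tilde\nu$, and invoke uniqueness to conclude that the law of $\mbx$ under $\tilde\nu$ is $\nu_2$. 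The two points you flag are indeed the ones that need care, and you resolve them correctly: the Bayes'-rule/product-rule step only gives a local $\nu_1$-martingale a priori, so one localizes and uses the true-martingale hypothesis on $Z$ to pass to the limit; and what is really used at the end is uniqueness in law of the $\nu_2$-SDE, which follows from the hypothesized strong existence and pathwise uniqueness (Yamada--Watanabe). One further detail worth stating explicitly if you write this out: since $\nu_1,\nu_2$ are laws on path space, $\mbw^{\nu_1}$ must be recoverable from the path, i.e.\ $d\mbw^{\nu_1}(t)=\sigma(t)^{-1}\left(d\mbx(t)-\mbb_1(\mbx(t),t)\,dt\right)$, which is where the hypothesis $\sigma(t)>0$ enters; and the martingale hypothesis also guarantees $\int_0^1\|\mbb_2(\mbx(t),t)\|^2 dt<\infty$ almost surely, so the drift correction and the substitution are well defined. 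With those remarks your argument matches the textbook proof the paper is implicitly relying on.
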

If $\|\sigma(t)^{-1}(\mbb_{\theta}(\mbx(t), t) - \mbb_{\thetabase}(\mbx(t), t))\|$ is bounded, $([p_{\theta}(\mbx)/p_{\thetabase}(\mbx)]_t)_{0 \leq t \leq 1}$ is a martingale with respect to $p_{\thetabase}$. 
Consequently, Girsanov's Theorem tells us that the probability density of $p_{\theta}$ with respect to $p_{\thetabase}$ is given by
\begin{equation}\label{app:eq:girsanov}
    \begin{aligned}
    &\frac{p_{\theta}(\mbx)}{p_{\thetabase}(\mbx)} := \exp \left\{ \sum_{i=1}^k \int_0^1 u_{\theta}(\mbx(t), t)[i] d\mbw^{p_{\thetabase}}(t)[i] - \frac{1}{2}\int_0^1 \|u_{\theta}(\mbx(t), t)\|^2 dt \right\},\\
    &u_{\theta}(\mbx(t), t) := \sigma(t)^{-1} (\mbb_{\theta}(\mbx(t), t) - \mbb_{\thetabase}(\mbx(t), t))
    \end{aligned}
\end{equation}
This expression for the density of $p_{\theta}$ with respect to $p_{\thetabase}$ allows us to compute the KL divergence between the probability measures $p_{\theta}$ and $p_{\thetabase}$ according to
\begin{align*}
    \KL{p_{\theta}}{p_{\thetabase}} = \frac{1}{2} \int_0^1 \bbE_{p_{\theta}}\| u_{\theta}(\mbx(t), t)\|^2dt. 
\end{align*}
The stochastic integral term vanishes since it has expectation zero.

When $(\widehat{\mbx}(t))_{0 \leq t \leq 1}$ is sampled from the Euler-Maruyama approximation to $p_{\thetabase}$, we approximate \eqref{app:eq:girsanov} by replacing the integrals with 
\begin{align*}
    &\int_0^1 u_{\theta}(\widehat{\mbx}(t), t)[i] d\mbw^{p_{\thetabase}}(t)[i] \approx T^{-1/2} \sum_{t=0}^{T-1} u_{\theta}(\widehat{\mbx}(t/T), t/T)[i]  (\mbz((t+1)/T) - \mbz(t/T))\\
    &\int_0^1 u_{\theta}(\widehat{\mbx}(t), t)^2[i] dt \approx T^{-1} \sum_{t=0}^{T-1} u_{\theta}(\widehat{\mbx}(t/T), t/T)^2[i]
\end{align*}
where $\mbz(0), \ldots, \mbz((T-1)/T)$ are the same random variables from \eqref{app:eq:em}. 
This same approximation to the density ratio \eqref{app:eq:girsanov} can be derived by writing out the density ratio of $\widehat{p}_{\theta}(\widehat{\mbx}(0), \widehat{\mbx}(1/T), \ldots, \widehat{\mbx}(1))$ and $\widehat{p}_{\thetabase}(\widehat{\mbx}(0), \widehat{\mbx}(1/T), \ldots, \widehat{\mbx}(1))$, where $\widehat{p}_{\theta}$ is the probability distribution defined by the Euler-Maruyama discretization of $\widehat{p}_{\theta}$.

\textbf{Efficient gradient computation.}
CGM-relax and CGM-reward require computing gradients of the density ratio $\frac{p_{\theta}(\mbx)}{p_{\texttt{stop-grad}(\theta)}(\mbx)}$.
By applying Girsanov's Theorem to compute the density ratio, differentiating the result, and substituting in our approximations to the integrals, we obtain
\begin{align}
    &\nabla_{\theta} \frac{p_{\theta}(\mbx)}{p_{\texttt{stop-grad}(\theta)}(\mbx)} = \sum_{i=1}^k \int_0^1 \nabla_{\theta} \sigma(t)^{-1}\mbb_{\theta}(\widehat{\mbx}(t), t)[i] d\mbw^{p_{\thetabase}}(t)[i]\nonumber\\
    &\approx T^{-1/2} \sum_{i=1}^k \sum_{t=0}^{T-1} \nabla_{\theta} \sigma(t/T)^{-1}\mbb_{\theta}(\widehat{\mbx}(t/T), t/Y)[i]   (\mbz((t+1)/T)[i] - \mbz(t/T)[i])\nonumber\\
    &= T^{-1/2} \sum_{t=0}^{T-1} \sigma(t/T)^{-1} \sum_{i=1}^k\nabla_{\theta} \mbb_{\theta}(\widehat{\mbx}(t/T), t/Y)[i] (\mbz((t+1)/T)[i] - \mbz(t/T)[i]) .\label{app:eq:nsde-gradient-est} 
\end{align}
For high-dimensional diffusion models (e.g.\ Genie2 in our \Cref{subsec:protein-design} experiments) memory constraints preclude the naive approach to computing \cref{app:eq:nsde-gradient-est} by instantiating each term in memory and simultaneously back-propagating gradients through all terms at once.  However, because the gradient is a sum across time, it can be computed in chunks. In practice, we divide $\{0, \ldots, T\}$ into $\lceil T / \texttt{chunk\_size} \rceil$ blocks of approximately equal size, where $\texttt{chunk\_size}$ is the largest chunk size that can fit into memory.

\subsection{Solution to the Maximum Entropy Problem}\label{app:subsec:solution-diffusion}
When the base model $p_{\thetabase}(\mbx)$ constitutes a continuous-time diffusion model \eqref{app:eq:sde} satisfying certain regularity properties, and the constraint function $\mbh$ depends only on the path at time $t=1$, there exists a closed-form solution to the maximum entropy problem \eqref{eq:prob-max-entropy}. 

Let $p$ be the law of an SDE having diffusion coefficient $\sigma$ and initial distribution $p_{\mathrm{init}}'(\mbx(0))$; this is necessary for $p \ll p_{\thetabase}$ by Girsanov's Theorem (\Cref{app:thm:Girsanov}). By the chain rule for the KL divergence, the objective for the maximum entropy problem defined on the full path measures is
\begin{align*}
    &\KL{p(\mbx)}{p_{\thetabase}(\mbx)}\\
    =&\KL{p(\mbx(1))}{p_{\thetabase}(\mbx(1))} + \bbE_{p(\mbx(0))}[\KL{p(\cdot | \mbx(0))}{p_{\thetabase}(\cdot | \mbx(0))}].
\end{align*}
The KL divergence is computed according to 
Girsanov's Theorem.

From here, by the maximum entropy principle applied to the marginal at time $t=1$, the first term in the objective is lower bounded by 
\begin{align*}
    \KL{p(\mbx(1))}{p_{\thetabase}(\mbx(1))} \geq \KL{p_{\mbalpha_0^\ast}(\mbx(1))}{p_{\thetabase}(\mbx(1))}
\end{align*}
where $p_{\mbalpha_0^\ast}(\mbx(1))$ is the solution to the maximum entropy problem in $k$-dimensional Euclidean space. Consequently, if we can show that there exists an SDE $p(\mbx)$ satifying $p(\mbx(1)) = p_{\mbalpha_0^\ast}(\mbx(1))$ and $p(\cdot | \mbx(1)) = p_{\thetabase}(\cdot | \mbx(1))$, then $p$ is the solution to the maximum entropy problem. This is the subject of the following result:
\begin{proposition}[Maximum entropy solution for a diffusion model]\label{app:prop:diff-drift}
    Suppose that the constraint function $\mbh$ depends only on the value of the path at time $t=1$ and is bounded and continuous. Moreover, assume that $\mbx(0) \perp \mbx(1)$ under the base model $p_{\thetabase}(\mbx)$. 
    
    Then the solution to the maximum entropy problem is a diffusion process 
    \begin{align*}
    p^\ast: d\mbx(t) = \{ \mbb_{\thetabase}(\mbx(t), t) + \sigma(t) \mbu^\ast(\mbx(t), t)\}dt + \sigma(t) d\mbw(t)
    \end{align*}
    satisfying $p^\ast(\mbx(0)) = p_{\mathrm{init}}$.
    The drift $\mbu^\ast(\mbx(t), t)$ admits the Feynman-Kac characterization
    \begin{align*}
        u^\ast(\mbx, t) = \sigma(t) \nabla_{\mbx}\log \bbE_{p_{\thetabase}}[\exp
        \{r_{\mbalpha_0^\ast}(\mbx(1))\} | \mbx(t) = \mbx],
    \end{align*}
    where $\mbalpha_0^\ast$ are the parameters corresponding to the  maximum entropy solution in $k$-dimensional Euclidean space \eqref{eq:prob-max-entropy} with base distribution $p_{\thetabase}(\mbx(1))$ and constraint function $\mbh(\mbx)$.

    Finally, $p^\ast(\mbx)$ satisfies $p^\ast(\cdot|\mbx(1)) = p_{\thetabase}(\cdot|\mbx(1))$ and $p^\ast(\mbx(1)) = p_{\mbalpha_0^\ast}(\mbx(1))$.
\end{proposition}
We refer the reader to \citet[][Theorem 1]{domingo2025adjoint} for a proof. The result is a consequence of standard results in the theory of diffusion processes, specifically the Doob $h$-transform \citep[][Chapter 7]{oksendal2013stochastic}.

The assumption that, under $p_{\thetabase}(\mbx)$, the path at time $t=0$ is independent of the path at time $t=1$ is necessary to ensure that $p^\ast(\mbx(0))$ can be chosen to be equal to $p_{\mathrm{init}}(\mbx(0))$. This is desirable because, by design, $p_{\mathrm{init}}$ is a distribution from which sampling is tractable. However, when the independence assumption does not hold, $p^{\ast}(\mbx(0))$ cannot be chosen to be equal to $p_{\thetabase}(\mbx(0))$ \citep[][Appendix G.2]{denker2024deft}. 

Although, at first glance, this independence assumption may appear strong, \citet[][Theorem 1]{domingo2025adjoint} prove that diffusion models whose initial distribution is Gaussian noise and whose terminal distribution is the data distribution satisfies this property (i.e., variance-preserving SDEs). One example of a commonly used noise schedule satisfying this property is $\sigma(t) = t^{-1}$, which is singular at time $t=0$.

\Cref{app:prop:diff-drift} tells us that when we fine-tune a diffusion model with CGM to satisfy a constraint  on its terminal distribution $\bbE_{p_{\thetabase}(\mbx(1))}[\mbh(\mbx)] = \mbh^\ast$, we expect the terminal distribution of the base model to change to satisfy the constraint, while the conditional path distribution given the endpoint should be preserved. In other words, seeking the distribution over paths that is closest in KL distance to the base model amounts to shifting the terminal distribution while leaving the conditional distributions unchanged. 

\subsection{Exact Sampling from a Gaussian Mixture Model}\label{app:subsec:initialize-nsde}

In each of our synthetic data experiments, we initialize our base diffusion model $p_{\thetabase}$ such that $p_{\thetabase}(\mbx(1))$ is equal to a GMM. 
We achieve this by representing $p_{\thetabase}$ as the reversal of a forward diffusion process. 
A forward diffusion process draws samples from the target GMM density $\mbx(1) \sim p_{\mathrm{target}}$ and then noises them according to the linear SDE
\begin{align}\label{app:eq:forward-process}
    \overset{\rightarrow}{p}: d\mbx(t) = \frac{1}{2}\kappa(t)\mbx(t)dt + \sigma(t)d\mbw(t), \ 0 \leq t \leq 1.
\end{align}
When the diffusion coefficient is chosen such that $\sigma(t) = \sqrt{\kappa(t)}$ and the linear coefficient $(\kappa(t))_{0 \leq t \leq 1}$ satisfies $\kappa(t) \geq 0, \ \int_0^1 \kappa(t) dt = + \infty$, then $ \overset{\rightarrow}{p}(\mbx(0)) \overset{d}{=} \cN(\mb{0}, \bbI)$. Such a noise schedule satisfies the independence assumption in \Cref{app:prop:diff-drift}. 
We choose $\kappa(t) = t^{-1}$. 
Simply, \eqref{app:eq:forward-process} turns samples from $p_{\mathrm{target}}$ into Gaussian noise. 
In practice, since the drift and diffusion coefficients defined by $\kappa(t)$ are unbounded (which violates the assumptions for existence and uniqueness of the solution to the SDE from \Cref{app:subsec:neural-sde}), we cap $\kappa(t)$ at some large $M$.

A foundational result in diffusion processes \citep{anderson1982reverse} states that the reversal of \eqref{app:eq:forward-process} is another diffusion process that is given by
\begin{align}\label{app:eq:backward-process}
    \overset{\leftarrow}{p}: d\mbx(t) = \left\{ \sigma(t)^2 \nabla_{\mbx} \log \overset{\rightarrow}{p}(\mbx(t)) + \frac{1}{2}\kappa(t)\mbx(t) \right\}dt + \sigma(t)d\mbw(t), \ 0 \leq t \leq 1
\end{align}
with $\overset{\leftarrow}{p}(\mbx(0)) \overset{d}{=} \overset{\rightarrow}{p}(\mbx(0))$. 
The probability distributions defined by \eqref{app:eq:forward-process} and \eqref{app:eq:backward-process} are equal in law. 
$\nabla_{\mbx} \log \overset{\rightarrow}{p}(\mbx(t))$ is called the \textit{score} of the forward process \eqref{app:eq:forward-process}.

\Cref{app:eq:backward-process} is useful since it tells how to generate samples from $p_{\mathrm{target}}$: first draw samples from $\overset{\rightarrow}{p}(\mbx(0)) \approx \cN(\mbx(0) \ | \ \mb{0}, \bbI)$, then solve the SDE \eqref{app:eq:backward-process} numerically using Euler-Maruyama, for example. 
However, for general target distributions $p_{\mathrm{target}}$, the score of the forward process is intractable, which yields the backward diffusion process \eqref{app:eq:backward-process} also intractable.

In the case of a GMM, though, the score of the forward process is tractable. Indeed, for $p_{\mathrm{target}}(\mbx(1)) = \sum \pi_i \cN(\mbx(1) \ | \ \mbmu_i, \mbSigma_i)$, we compute
\begin{align*}
    \overset{\rightarrow}{p}(\mbx(t)) &= \int \overset{\rightarrow}{p}(\mbx(t) | \mbx(1)) p_{\mathrm{target}}(\mbx(1))  d\mbx(1)\\
    &= \sum \pi_i \int \cN(\mbx(t) | m(t)\mbx(1), s(t)^2 \bbI) \cN(\mbx(1) \ | \ \mbmu_i, \mbSigma_i)  d\mbx(1)\\
    &=  \sum \pi_i \cN(\mbx(t) | m(t)\mbmu_i, s(t)^2 \bbI + m(t)^2 \mbSigma_i),
\end{align*}
where $m(t)$ and $s(t)$ are defined by the forward diffusion process $(\kappa(t))_{0 \leq t \leq 1}$. For $\kappa(t) = t^{-1}$, we have $m(t) = t^{1/2}$ and $s(t) = (1-t)^{1/2}$. Using this expression for $\overset{\rightarrow}{p}(\mbx(t))$, we initialize $p_{\thetabase}(\mbx)$ to the exact reversal of the forward process \eqref{app:eq:forward-process} according to \eqref{app:eq:backward-process}.

\section{Comparison to Augmented Lagrangian Method}\label{app:sec:augmented-lagrangian}

In this section, we compare CGM-relax and CGM-reward to a modified version of the augmented Lagrangian algorithm \citep{hestenes1969multiplier}, which solves a general constrained optimization problem. 
This algorithm can be viewed as a generalization of the primal-dual algorithm proposed by \citet{khalafi2026composition} for fine-tuning diffusion models. 
We demonstrate that on our synthetic data experiments, the AL algorithm performs no better than CGM-relax, and it introduces another hyperparameter (the dual parameter update frequency) to be chosen by the practitioner.

In particular, suppose one would like to solve 
\begin{align*}
    \min_{\mbz \in \bbR^p} f(\mbz), \quad \text{subject to} \ \mbc(\mbz) = \mbc^\ast,
\end{align*}
where $f: \bbR^p \to \bbR$ is the objective function and $\mbc: \bbR^p \to \bbR^d$ is the constraint function. The AL method forms the ``augmented Lagrangian'' $\cL_{\lambda}: \bbR^p \times \bbR^d \to \bbR$ with penalty parameter $\lambda \geq 0$ and dual variables $\mbu \in \bbR^d$, defined
\begin{align}\label{app:eq:al-objective}
    \cL_{\lambda}(\mbz, \mbu) = f(\mbz) + \mbu^\top (\mbc(\mbz) - \mbc^\ast) + \frac{\lambda}{2} \|\mbc(\mbz) - \mbc^\ast\|^2.
\end{align}
$\cL_{\lambda}$ differs from the ordinary Lagrangian due the presence of the penalty term $\frac{\lambda}{2} \|\mbc(\mbz) - \mbc^\ast\|^2$. The AL algorithm alternates between minimizing $\cL_{\lambda}$ with respect to $\mbz$ and updating the dual variables:
\begin{equation}\label{app:eq:aug-lagr-update}
    \begin{aligned}
    &\mbz^{(k)} \leftarrow \argmin_{\mbz \in \bbR^p} \cL_{\lambda}(\mbz, \mbu^{(k)})\\
     &\mbu^{(k+1)} \leftarrow \mbu^{(k)} + \lambda (\mbc(\mbz^{(k)}) - \mbc^\ast).
    \end{aligned}
\end{equation}
Some variants of the AL method also update the penalty $\lambda$ according to some schedule. The augmented Lagrangian algorithm can be viewed as a proximal-point algorithm applied to the dual function \citep{rockafellar1976augmented}
\begin{align*}
   \mbu^{(k+1)} \leftarrow \argmax_{\mbu \in \bbR^d} \left\{ d(\mbu) -\frac{1}{2\lambda} \|\mbu - \mbu^{(k)} \|\right\}, \  d(\mbu) = \min_{\mbz \in \bbR^p} f(\mbz) + \mbu^\top (\mbc(\mbz) - \mbc^\ast).
\end{align*}

In the setting of the calibration problem \eqref{eq:prob-calibration}, we identify $\mbz = \theta$, $f(\theta) = \KL{p_{\theta}}{p_{\thetabase}}$, $c(\theta) = \bbE_{p_{\theta}}[\mbh(\mbx)]$, $\mbc^\ast = \mbh^\ast$. However, the augmented Lagrangian does not admit a closed-form minimizer with respect to the model parameters $\theta$. Consequently, the AL algorithm as stated in \cref{app:eq:aug-lagr-update} cannot be applied. Instead, we propose alternating between performing a stochastic gradient update to the model parameters $\theta$ and updating the Lagrange multipliers $\mbu$. This introduces an additional algorithmic hyperparameter $\ell$ representing how frequently (i.e., after how many iterations) the dual variables are updated. We provide pseudocode for our implementation in \Cref{alg:cgm-al}, which we refer to as `CGM-AL'.

\begin{figure*}[!ht]
  \centering
    \includegraphics[width=\textwidth]{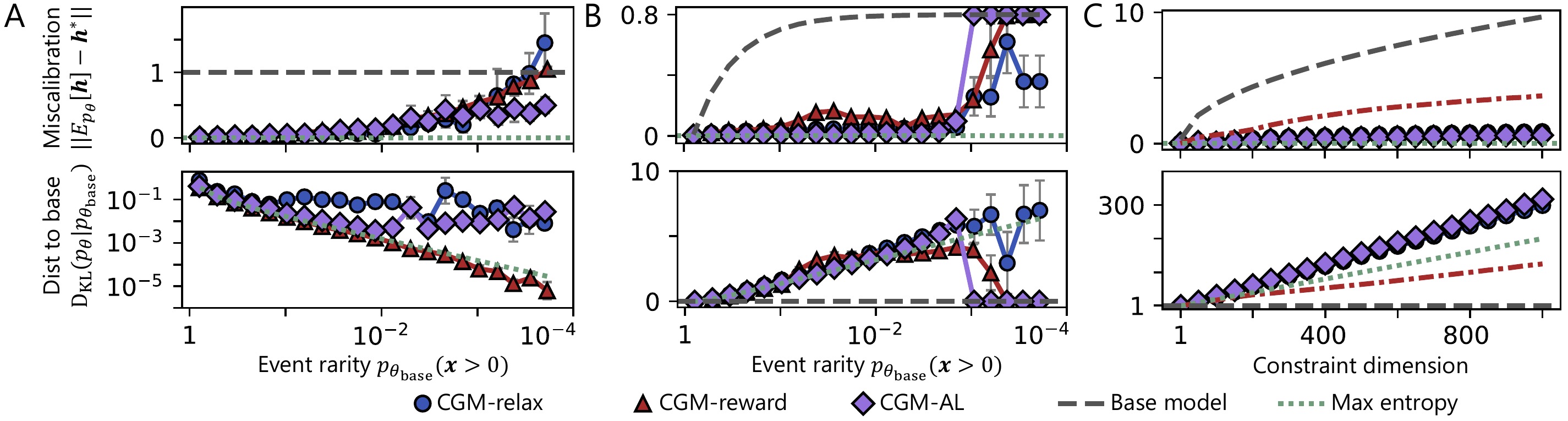}
  \caption{Comparison of CGM to the augmented Lagrangian (AL) algorithm on three synthetic data examples. 
  \textbf{A}: Reweighting the probability of a rare mode in a 1D Gaussian mixture by a constant (multiplicative) factor.
  \textbf{B}: Reweighting the probability of a rare mode in a 1D Gaussian mixture to a fixed value.
  \textbf{C}: Reweighting the modes of a $k$-dimensional distribution, whose components are independent Gaussian mixtures. The number of constraints and KL to the base model increase as $k$ increases.} 
  \label{app:fig:synthetic-full}
\end{figure*}

We compare CGM-AL against CGM-relax and CGM-reward on the rare event and increasing constraint dimension experiments described in \Cref{sec:synthetic-exp}. 
We also perform comparisons on the second set of rare event experiments that we later describe in \Cref{app:subsec:synthetic-experiments-details}.

For CGM-AL, we select $\log(\lambda)$ by performing a grid search with the same number of points as for CGM-relax: for the rare event experiments, we use the same grid of values; for the increasing constraint dimension setting, we use $[0, 2]$.
In addition to $\lambda$, we consider two potential values for the Lagrange multiplier update frequency, $\ell \in \{20, 100\}$. 

From \Cref{app:fig:synthetic-full}, we observe that CGM-relax performs comparably to CGM-AL.
In the first rare event experiment (A), CGM-AL better reduces the calibration error for small $p_{\thetabase}(\mbx(0) > 0)$ over CGM-relax, whereas the opposite is true for the second rare event experiment (B).
For the increasing constraint dimension experiments, both algorithms perform comparably.
We attribute the success of CGM-relax to our choice of $\lambda$ via grid search. 
Although the optimization landscape becomes ill-conditioned as $\lambda$ becomes is small, we select $\lambda$ to balance between constraint violation and KL distance to the base model $p_{\thetabase}$. 
Moreover, unlike the AL algorithm, CGM-relax does not require tuning a second hyperparameter, the Lagrange multiplier update frequency $\ell$.
For these practical reasons, we prefer CGM-relax to CGM-AL, although we still view the latter as a viable option for generative model calibration.

\clearpage
\vspace*{\fill}
\begin{center}
\begin{minipage}{0.65\textwidth}
\begin{algorithm}[H]
  \caption{CGM-AL fine-tuning (augmented Lagrangian)}
  \label{alg:cgm-al}
  \begin{algorithmic}
    \STATE \textbf{Input}: $p_{\theta_{\mathrm{base}}}, \mbh(\cdot), \mbh^\ast, M, \lambda,$ and $\ell$
    \vspace{0.2cm}
    \LeftComment{Initialize model and dual variables}
    \STATE $p_\theta \leftarrow p_{\theta_{\mathrm{base}}}$, $\mbu \leftarrow \mb{0}$, $t \leftarrow 0$
    \WHILE{not converged}
        
        \LeftComment{Sample and compute weights}
        \STATE $\mbx_{1}, \dots, \mbx_{M}\overset{i.i.d.}{\sim} p_{\texttt{stop-grad}(\theta)}$
        \STATE $w_m \leftarrow p_\theta(\mbx_{m})/p_{\texttt{stop-grad}(\theta)}(\mbx_{m})$
        
        \vspace{0.2cm}
        \LeftComment{KL loss with LOO baseline}
        \STATE {\small $l_{m} \leftarrow \log p_{\texttt{stop-grad}(\theta)}(\mbx_{m}) / p_{\theta_{\mathrm{base}}}(\mbx_{m})$}
        \STATE $l_m^{\mathrm{LOO}} \leftarrow l_m - \frac{1}{M-1} \sum_{m^\prime\neq m} l_{m^\prime}$
        \STATE $\widehat{\cL}^{\mathrm{KL}} \leftarrow \frac{1}{M}\sum w_m\,l_m^{\mathrm{LOO}}$
      
        \vspace{0.2cm}
        \LeftComment{Constraint violation loss}
        \STATE $\mbh_m \leftarrow w_m(\mbh(\mbx_m) - \mbh^\ast)$
        \STATE $\widehat{\Delta\mbh} \leftarrow \frac{1}{M}\sum \mbh_m$
        \STATE {\small $\widehat{\cL}^{\mathrm{viol}} \leftarrow \| \widehat{\Delta\mbh} \|^2 - \frac{1}{M} \widehat{\mathrm{Var}}[\mbh_{1:M}],$}
        \STATE \quad {\small$\widehat{\mathrm{Var}}[\mbh_{1:M}]=\frac{1}{M-1}\sum \|\mbh_m - \widehat{\Delta\mbh}\|^2$}
      
        \vspace{0.2cm}
        \LeftComment{Augmented Lagrangian loss}
        \STATE $\widehat{\cL}^{\mathrm{AL}} \leftarrow \widehat{\cL}^{\mathrm{KL}} + \mbu^\top \widehat{\Delta\mbh} + \frac{\lambda}{2}\widehat{\cL}^{\mathrm{viol}}$
        
        \vspace{0.2cm}
        \LeftComment{Primal update}
        \STATE $\theta \leftarrow \textrm{gradient-step}(\theta, \nabla_\theta \widehat{\cL}^{\mathrm{AL}})$
        
        \vspace{0.2cm}
        \LeftComment{Dual update every $\ell$ iterations}
        \IF{$t \bmod \ell = 0$}
            \STATE $\mbu \leftarrow \mbu + \lambda\,\texttt{stop-grad}(\widehat{\Delta\mbh})$
        \ENDIF
        \STATE $t \leftarrow t + 1$
    \ENDWHILE
  \end{algorithmic}
\end{algorithm}
\end{minipage}
\end{center}
\vspace*{\fill}
\clearpage

\begin{table}[!th]
\caption{
    Training configurations for experiments.
    Batch (sub-batch) indicates the number of samples per batch and the sub-batch size used to fit gradient computations into memory.
    $\mathcal{S}^L_{V}$ denotes the set of all sequences with vocabulary size $V$ and length $L$.
    }
\label{tab:training-configs}

\begin{center}
\begin{tabular}{lcccc}
\multicolumn{1}{c}{\bf Hyperparameter} & \multicolumn{1}{c}{\bf Genie2} & \multicolumn{1}{c}{\bf ESM3-open} & \multicolumn{1}{c}{\bf TarFlow} & \multicolumn{1}{c}{\bf \gemmaName} \\
\hline \\
{\bf Initial learning rate} & $10^{-5}$ & $10^{-4}$ & $10^{-6}$ & $2\times 10^{-6}$ \\
{\bf Batch (sub-batch)} & $64\,(16)$ & $256\,(64)$ & $256\,(16)$ & $512\,(64)$ \\
{\bf Training steps} & $100$ & $100$ & $50$ & $200$ \\
{\bf $\mbx$ Space} & $(\mathbb{R}^{100 \times 3})^{100}$ & $(\mathcal{S}^{100}_{4096})^{50}$ & $\mathbb{R}^{256 \times 256 \times 3}$ & $\mathcal{S}^{200}_{256{,}000}$ \\
{\bf Constraint dims ($k$)} & $99$ & $99$ & $5$ & $8$ \\
{\bf Model parameters} & $15$M & 1$.4$B & $463$M & $9$B (110M trainable) \\
{\bf Training time (hrs)} & $48$ & $2.3$ & $3$ & $6$ \\
\end{tabular}
\end{center}
\end{table}

\section{Additional Experimental Details}\label{app:sec:experimental-details}
In this section, we provide further details regarding our experiments with CGM-relax and CGM-reward, both on synthetic data  (\Cref{sec:synthetic-exp}) and in the real-data case studies (\Cref{sec:case-studies}). 
We provide explanations regarding the generative model classes $p_{\theta}$, CGM constraint functions $\mbh$ and targets $\mbh^\ast$, choice of CGM hyperparameters $\lambda$ and $N$, model architectures, and training procedures. 
We also include additional samples from our models before and after calibration.   

We perform all experiments using Adam with default momentum hyperparameters $\beta = (0.9, 0.999)$ and a cosine decay learning rate schedule.
Additional common training details are shown in \Cref{tab:training-configs}.
We train all models on a single H100 GPU.

\subsection{Synthetic Data Experiments}\label{app:subsec:synthetic-experiments-details}
In this subsection, we provide additional details regarding the synthetic data experiments described in \Cref{sec:synthetic-exp}.

\paragraph{Experimental setup.}
In the diffusion generative model, we parameterize the drift function $\mbb_{\theta}$ as 
\begin{align*}
    \mbb_{\theta}(\mbx(t), t) = \sigma(t)^2 \{ \nabla_{\mbx} \log \overset{\rightarrow}{p}(\mbx(t)) -  u_{\theta}(\mbx, t)\} + \frac{1}{2}\kappa(t)\mbx(t).
\end{align*}
$u_{\theta}$ is a neural network with two hidden layers of dimension $256$ and SiLU activations. $\log \overset{\rightarrow}{p}(\mbx(t))$ is the analytical score of the forward process that we described in \Cref{app:subsec:initialize-nsde}. 
In addition to $\mbx(t)$, we feed as input to $u_{\theta}$ a sinusoidal time embedding of dimension $32$. 
By initializing the weights of the output layer of $u_{\theta}$ to zero, we ensure that $p_{\theta}$ is initialized at $p_{\thetabase}$, the reversal of the forward diffusion process $\overset{\rightarrow}{p}$.
We perform sampling using $10^2 + 1$ timesteps on a uniform time grid. 

All results reported in \Cref{fig:synthetic-exp-2,app:fig:synthetic-full} are averages over $10$ iterates, with two standard errors. 
For CGM-reward, we use $N = 10^5$ samples to approximate $\mbalpha^\ast$. 
Next, we discuss the parameters unique to each experiment described in \Cref{sec:synthetic-exp}, namely the choice of $\lambda$ for CGM-relax, the number of training iterations, and the learning rate.

\textbf{Rare event}:
For our rare event experiments, we choose $\log(\lambda)$ according to a $6$-point linear search over $[-3, 2]$, and we report the value that achieves the smallest calibration error.
We perform calibration for $5 \cdot 10^3$ training iterations. 
For CGM-relax we use constant stepsize $10^{-4}$, and for CGM-reward we use constant stepsize $10^{-3}$.

In \Cref{fig:synthetic-exp-2}A, where we plot the KL to the base model on a log scale, we compute the $\pm2$ standard errors using the delta method approximation $\log \hat{\mu} \pm 2 \frac{\mathrm{SE}}{\hat{\mu}}$.
$\hat{\mu}$ is the sample mean and $\mathrm{SE}$ is the standard error estimate, computed over the $10$ replicates.

\textbf{Increasing constraint dimension}:
For our increasing constraint dimension experiments, we choose $\log(\lambda)$ according to a $10$-point linear search over $[-3, 0]$. 
We choose the largest value of $\lambda$ for which the calibration error is reduced by 90\%.
We perform calibration for $2 \cdot 10^3$ training iterations.
For both CGM-relax and CGM-reward we use initial stepsize $10^{-3}$ and final stepsize $10^{-6}$.

\paragraph{Calibrating a rare event to a fixed proportion.}
Lastly, we discuss a second set of rare event experiments, the results of which are reported in \Cref{app:fig:synthetic-full}B.
Again, we consider reweighting a rare mode in a 1D GMM. 
The calibration constraint is 
\begin{equation*}
    \mbh(\mbx) = \mathbbm{1}\{\mbx(1) > 0\}, \,\ \mbh^\ast = 0.8.
\end{equation*}
Dissimilar to the rare event setting described in \Cref{sec:synthetic-exp}, $\mbh^\ast$ does \emph{not} vary with $p_{\thetabase}(\mbx(1) > 0)$. 
Consequently, as $p_{\thetabase}(\mbx(1) > 0)$ decreases, the KL divergence from the maximum entropy distribution to the base model will increase.
We vary $p_{\thetabase}(\mbx(1) > 0)$ from $0.8$ (already calibrated) to approximately $10^{-4}$.

For CGM-relax, we choose $\log(\lambda)$ according to a $10$-point linear search of $[-3, 0]$, and we choose the largest value of $\lambda$ for which the calibration error is reduced by 90\%.
We perform calibration using batch size $M = 10^2$ and $2 \cdot 10^3$ training iterations.
For both CGM-relax and CGM-reward, we use initial stepsize $10^{-3}$ and final stepsize $10^{-6}$.

From \Cref{app:fig:synthetic-full}B, we observe that even for $p_{\thetabase}(\mbx(1) > 0)$ as small as $10^{-3}$, CGM successfully upweights the rare mode to its target value. 
For small $p_{\thetabase}(\mbx(1) > 0)$, this requires deviating far from the pre-trained model, approximately $5$ nats at $p_{\thetabase}(\mbx(1) > 0) = 10^{-3}$.
For $p_{\thetabase}(\mbx(1) > 0)$ smaller than $10^{-3}$, CGM-relax exhibits large variability across random seeds: when enough samples are drawn from the rare mode throughout training, it nearly reaches the maximum entropy distribution; when this is not the case, it does not move from the base model. 
CGM-reward does not move from the base.

\subsection{Calibrating Genie2}\label{app:subsec:genie2}

For our experiments with Genie2, we represent $p_{\theta}$ as a continuous-time diffusion model defined over three-dimensional protein backbone coordinates with drift function defined by the $\mathrm{SE}(3)$-equivariant encoder-decoder architecture from \citet{lin2024out}.

Since Genie2 is trained as the reversal of a discrete-time noising process \citep[a DDPM, see][]{ho2020denoising}, we first convert the discrete-time denoising diffusion model to a (continuous-time) diffusion model. 
We achieve this by redefining the final timestep $T$ of the original denoising process to be time $1$ of the continuous-time process. 
To define the drift function, we take the DDPM transition mean defined at each time $t$ in the discrete-time process, divide it by $1/T = T$, and define the drift function to be equal to the resulting value in between times $t/T$ and $(t+1)/T$. 
The diffusion coefficient is similarly defined by the DDPM transition standard deviation at each time $t$ in the discrete-time process, but is instead scaled by $T^{1/2}$. 
This approach of converting the DDPM into a continuous-time diffusion model ensures that when the SDE is solved under the Euler-Maruyama scheme using a grid of $T$ timesteps (i.e., the original time grid used to define the DDPM), one samples from the original DDPM.

We perform sampling using $10^2$ timesteps and a non-uniform time grid: we sample the first $50$ steps on the interval $[0, 0.05]$ and the remaining steps on the interval $[0.05, 1]$. 
We point out that the original Genie2 model was trained with $10^3$ denoising steps; we find that reducing the number of sampling steps dramatically decreases the runtime of CGM calibration. 
Our sampling scheme is possible since we redefined the base generative model to be a continuous-time diffusion process. 
We computed self-consistency metrics for the base Genie2 model sampled on the original time grid (with $10^3$ steps) and on our proposed grid (with $10^2$ steps), we did not observe any difference in sample quality.  

For CGM-relax, we calibrate to $k{=}99$ constraints on the bivariate CDF of alpha helix and beta strand content. 
And for CGM-reward, we calibrate to $k{=}15$ constraints using $N=2.5 \times 10^4$ samples from $p_{\thetabase}$. 
Since sampling from the Genie2 base model is time intensive, the sampling cost to compute $\widehat{\mbalpha}_N$ (with small variance) is a downside to CGM-reward.
Results reported in \Cref{fig:protein-design} are averages over $3$ trials, with two standard errors. We report results for CGM-relax with additional values of $\lambda$ in \Cref{app:fig:genie2-additional-lambda}. 

\textbf{Self-consistency RMSD and design failures.} 
To assess the quality of our generations, we compute the root mean-square deviation (RMSD) between $C_{\alpha}$ atoms resulting from (i) unfolding our generated structures into predicted amino sequences, (ii) refolding each of these predicted sequences into a protein structure, and (iii) aligning the predicted structures to the original structure. 
The self-consistency RMSD (scRMSD) is defined as the smallest RMSD between the given structure and one of the corresponding predictions. 
We use ProteinMPNN \citep{dauparas2022robust} for our inverse folding model and ESMFold \citep{lin2023evolutionary} for our folding model; we compute scRMSD from $8$ sequences. 
The pipeline we employ was developed by \citet{lin2024insilico}. Once we have determined the scRMSD of a generated structure, we classify it as a ``design failure'' if its scRMSD is greater than 2\AA. 
Intuitively, designability is a binary measure of whether or not a structure could have been plausibly produced by folding an amino acid sequence.

\textbf{Secondary structure annotation.} 
As discussed in \Cref{subsec:protein-design}, we measure the diversity of a collection of protein structures by computing the proportion of residues that lie in each of the three protein secondary structure types. 
For the CATH domains and Genie2, we perform annotations using the Biotite package \citep{kunzmann2018biotite}, which considers only $C_{\alpha}$ backbone atoms. 
For the CATH proteins \citep{sillitoe2021cath}, we obtain the secondary structure distribution by annotating the domains collected and published by \citet{ingraham2019generative}.

\vspace{5mm} 

\begin{figure*}[!ht]
  \centering
    \includegraphics[width=0.6\textwidth]{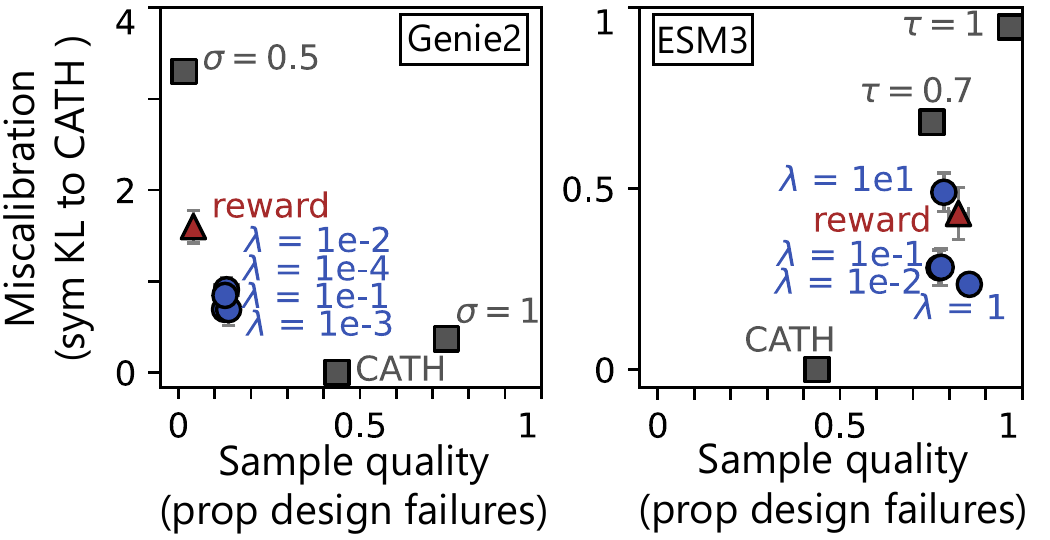}
  \caption{Results from varying the regularization hyperparameter $\lambda$ in CGM-relax for Genie2 and ESM3. We observe that CGM-relax is insensitive to the choice of $\lambda$, with the exception of $\lambda = 10^1$ for ESM3, in which case the fine-tuned model remains close to the base model.} 
  \label{app:fig:genie2-additional-lambda}
\end{figure*}

\newpage
\begin{figure*}[!ht]
  \centering
    \includegraphics[width=0.85\textwidth]{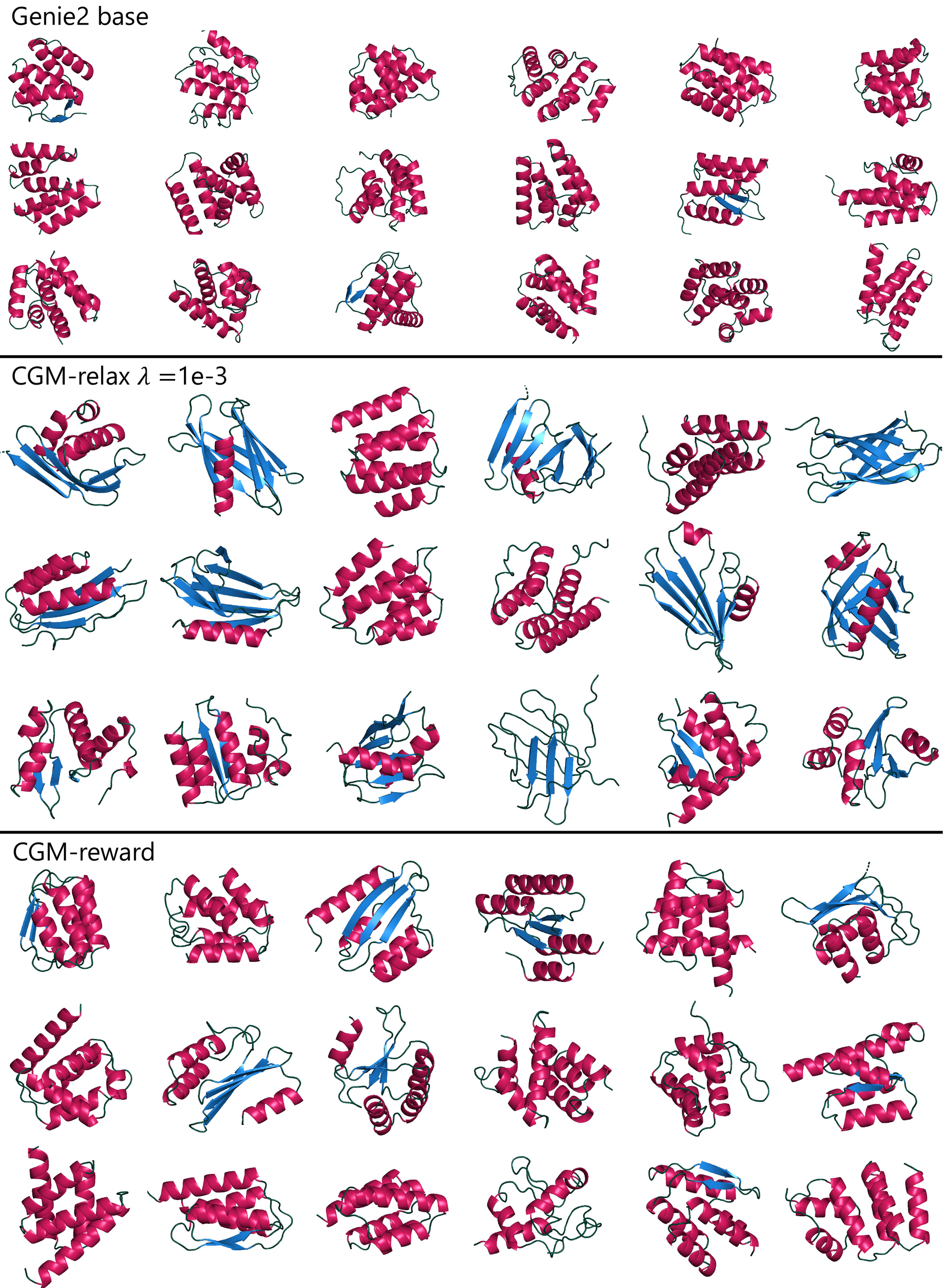}
  \caption{Random samples from the Genie2 model before calibration (top), after calibration using CGM-relax with $99$ bivariate CDF constraints (middle), and after calibration using CGM-reward with $15$ bivariate CDF constraints (bottom).} 
  \label{app:fig:genie2-samples}
\end{figure*}

\newpage
\subsection{Calibrating ESM3-open}\label{app:subsec:ESM3}
In order to apply CGM to ESM3, we need to be able to sample from the model and to compute gradients of sample log-probabilities with respect to the model's parameters.

\textbf{Sampling method.} 
Following the method used by \cite{hayes2025simulating}, sampling is achieved by treating the model as a discrete time Markov chain that starts at a sequence of mask tokens and ends at a sequence of fully unmasked structure tokens.
Each step $i$ of the chain consists of three steps and transitions from state $\mbx(i-1)$ to $\mbx(i)$.
\begin{enumerate}
    \item Pick token indices $U(i)$ to unmask uniformly at random without replacement from the masked tokens of $\mbx(i-1)$.
    \item Use the model $\pi_\theta$ to predict a categorical distribution $\pi_\theta^{(j)}(\cdot \mid \mbx(i-1))$ for $j \in U(i)$ for each of the newly unmasked tokens given the previous partially masked state.
    \item Sample the values of those tokens from the predicted categorical distributions, resulting in $|U(i)|$ more unmasked tokens.
\end{enumerate}

As implemented by \cite{hayes2025simulating}, we use $T = 50$ steps to sample 100-residue sequences and follow a cosine unmasking schedule.
The cosine schedule determines the number of masked positions at each sampling step as
\[
    r(i):= \text{round}\left(100 \times \cos\left(\frac{\pi}{2} \frac{i}{T} \right)\right), \quad i = 0, \ldots, T.
\]
Early sampling steps unmask few tokens per step, while later ones sample many at once.
Intuitively, this let's the model sample more tokens in parallel once it has more information to predict the final sequence.
Note that the number of tokens unmasked at step $i > 0$ is $|U(i)| = r(i-1) - r(i)$.

\textbf{Transition probabilities.}
A Markov chain can be characterized by its initial state distribution, $\mbx(0) \sim \pi_0(\mbx(0))$ and its transition probabilities for going from one state to the next.
The ESM3 sampling method starts fully masked, so has initial distribution $\pi_0(\mbx(0)) = \mb{1}\{\mbx(0) \text{ is fully masked} \}$.
The transition probabilities follow from the sampling procedure and are
\begin{equation}\label{eq:esm-transition-kernel}
    p_\theta(\mbx(i) \mid \mbx(i-1)) = C(i) \prod_{j \in U(i)} \pi_\theta^{(j)}(\mbx(i)[j] \mid \mbx(i-1)),    
\end{equation}
where $C(i)$ is a constant that accounts for randomly choosing which tokens to unmask.
$C(i)$ does not depend on $\theta$ or the sampling trajectory $(\mbx(0), \mbx(1), \ldots, \mbx(T))$, since every unmasking order is equally likely.
Note $U(i)$ can be computed from $\mbx(i-1)$ and $\mbx(i)$ by finding which tokens are masked in $\mbx(i-1)$ and not in $\mbx(i)$.

\textbf{Trajectory log-probability.} 
As in the neural SDE setting (\Cref{app:subsec:neural-sde}), the marginal likelihood of $\mbx_{T}$ is intractable, so we treat samples $\mbx$ as entire trajectories, $\mbx = (\mbx(0), \mbx(1), \mbx(2), \ldots, \mbx(T))$.
Using the Markov property, the log-probability of a trajectory is
\begin{align*}
    \log p_\theta(\mbx) &= \log \left(
        \pi_0(\mbx_0) \prod_{i=1}^T p_\theta(\mbx(i) \mid \mbx({i-1}))
    \right)\\
    &= \log \pi_0(\mbx_0) + \sum_{i=1}^T \log \left( C(i) \prod_{j \in U(i)} \pi_\theta^{(j)}(\mbx(i)[j] \mid \mbx(i-1)) \right) \quad \text{(by \cref{eq:esm-transition-kernel})}\\
    &= \sum_{i=1}^T \log \left(C(i) \prod_{j \in U(i)} \pi_\theta^{(j)}(\mbx(i)[j] \mid \mbx(i-1)) \right) \quad \text{($\pi_0(\mbx(0)) = 1$ by construction)}\\
    &= \sum_{i=1}^T C(i) + \sum_{i=1}^T \sum_{j \in U(i)} \log \pi_\theta^{(j)}(\mbx(i)[j] \mid \mbx(i-1)). 
\end{align*}

\textbf{Parameter gradients.} 
Now that we have defined the log-probability of $\mbx$, we can compute gradients with respect to $\theta$ as 
\begin{align*}
\nabla_\theta \log p_\theta(\mbx) &= \nabla_\theta \left( \sum_{i=1}^T C(i) + \sum_{i=1}^T \sum_{j \in U(i)} \log \pi_\theta^{(j)}(\mbx(i)[j] \mid \mbx(i-1)) \right)\\
&= \sum_{i=1}^T \sum_{j \in U(i)} \nabla_\theta \log \pi_\theta^{(j)}(\mbx(i)[j] \mid \mbx(i-1)),
\end{align*}
which conveniently is a sum over sampling steps.
The decomposition of the gradient into a sum over sampling steps lets us compute parameter gradients using constant memory with respect to the number of sampling steps, which is critical for high-parameter-count models such as ESM3-open.

\textbf{Secondary structure annotation.}
We use the ESM3 structure decoder and the ESM3 function \texttt{ProteinChain.infer\_oxygen} to get heavy atom coordinates from sampled structure tokens.
We then pass the coordinates to the Python package \texttt{PyDSSP} \citep{minami_pydssp_2023} to annotate secondary structure.

\newpage
\begin{figure*}[!ht]
  \centering
  \includegraphics[width=0.85\textwidth]{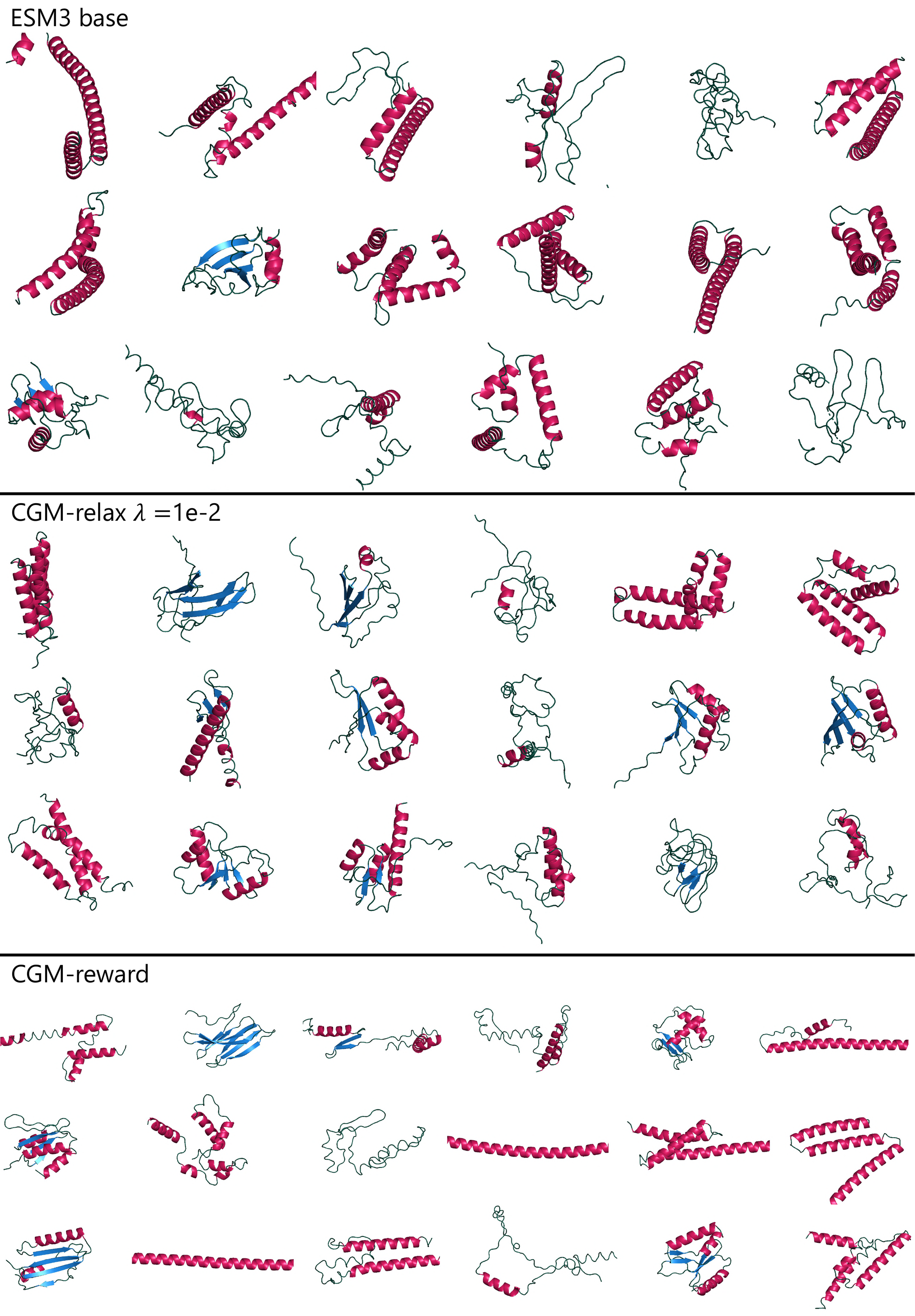}
  \caption{Random samples from the ESM3-open model before calibration (top), after calibration using CGM-relax with $99$ bivariate CDF constraints (middle), and after calibration using CGM-reward with $15$ bivariate CDF constraints (bottom).} 
  \label{app:fig:esm-samples}
\end{figure*}
\newpage

\begin{figure*}[!ht]
  \centering
    \includegraphics[width=\textwidth]{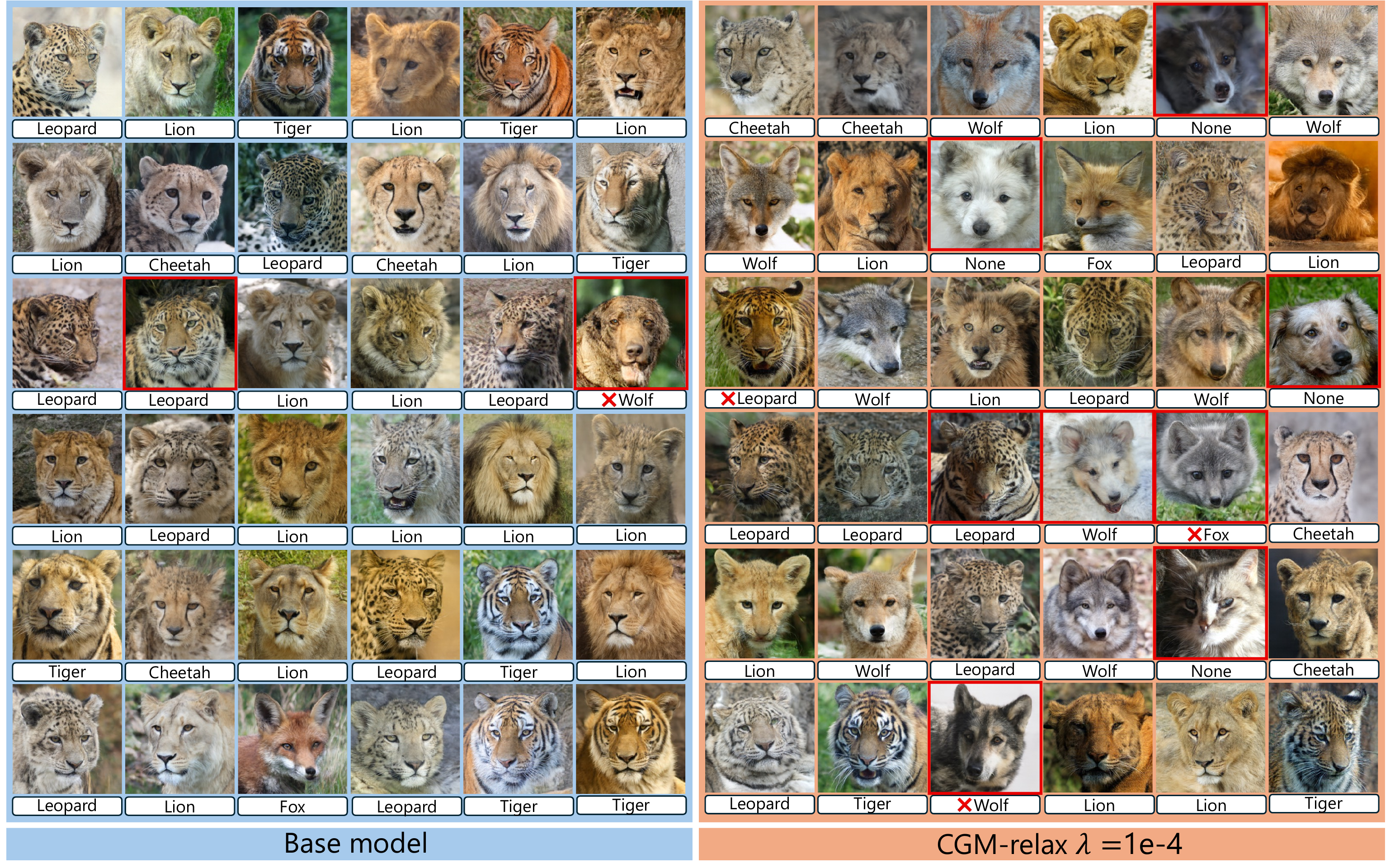}
  \caption{Random samples from the conditional TarFlow model trained on the AFHQ dataset (blue background) and the same model fine-tuned using CGM-relax $(\lambda{=}10^{-4})$ (orange background), with annotations by GPT o5-mini (white box). 
  Red boxes denote poor-quality samples, and red crosses denote incorrect annotations. 
  Although the model calibrated with CGM-relax produces animals with more balanced class proportions, it also produces fewer realistic samples.} 
  \label{app:fig:tarflow-samples}
  \vspace{-5mm}
\end{figure*}

\subsection{Calibrating TarFlow}\label{app:subsec:tarflow}

As we described in \Cref{subsec:tarflow}, our goal when calibrating the TarFlow model \citep{zhainormalizing}, trained conditionally on the Animal Faces HQ (AFHQ) dataset \citep{choi2020starganv2}, is to generate more diverse samples from the wildlife class. 
By directly examining the AFHQ dataset, we identify six animals: $\{\texttt{lion, tiger, wolf, fox, leopard, cheetah} \}$; we do not further distinguish among these animals e.g., leopard versus snow leopard. 
Within the AFHQ training dataset, these animals are represented in the wildlife class with proportions $\{ 0.2615, 0.2254, 0.0897, 0.0933, 0.2003, 0.1290\}$, as annotated by GPT o5-mini. 

Our motivation for choosing this problem was twofold. 
First, the quality of images generated by the base TarFlow model is high, such that a pre-trained classifier could attain high accuracy without fine-tuning. 
Second, we observe that the wildlife images generated by the base TarFlow model contained predominantly lions and leopards (\Cref{app:fig:tarflow-samples}), and rarely contained foxes or wolves. 
From $5 \times 10^3$ samples annotated by GPT o5-mini, we computed animal proportions $\{ 0.3590, 0.1260, 0.0404, 0.0256, 0.2704, 0.1752\}$.

For image classification, we queried GPT o5-mini to classify each image according to the following prompt:
{\footnotesize
\begin{verbatim}
You are labeling animal photos.
Return JSON only: {"label": <one of the options>, "confidence": <0..1>}.
Choose exactly one from: lion, tiger, wolf, fox, leopard, cheetah or none.
\end{verbatim}
}
Although we require the model to state its confidence when labeling the images, we do not use these confidence scores for fine-tuning. The calibration function $\mbh(\mbx) \in \bbR^5$ is a one-hot encoding of the first five classes. 
Since nearly all of the samples from the base TarFlow model are labeled as one of the six animals, we observe that choosing a six-dimensional constraint (i.e., adding \texttt{cheetah}) results in a poorly conditioned dual problem \eqref{eq:dual-opt}, since then the components of $\mbh$ are nearly linearly dependent.
Our target is the uniform distribution over animals $\mbh^\ast = [.167, .167, .167, .167, .167]$.

We perform calibration for $\lambda \in \{ 10^{-2}, 10^{-3}, 10^{-4}, 10^{-5} \}$, and sample size $N = 5 \times 10^{3}$. We report results as the mean over three replicates, with two standard errors.
We assess the success of calibration using two metrics: the total variation (TV) distance of the distribution over animal proportions (using $5 \times 10^{3}$ annotated images) to the uniform distribution; and the FID, computed using only the samples belonging to the wildlife class in the AFHQ training dataset. 
We use $5 \times 10^4$ samples from the generative model to compute FID. It is important to note that the FID is an imperfect metric for assessing the quality of generated images since it will be lower for models whose animal class makeup is similar to that of the training distribution. 
To account for this, we evaluate CGM-relax on the maximum entropy reweighting of the training dataset to the uniform distribution over animal classes. 
In other words, we up or down weighted images belonging to a particular animal class in order to sample the six animals belonging to wildlife class with equal probability. 

Our best model, calibrated using CGM-relax with $\lambda = 10^{-4}$, obtains class proportions $\{0.2248, 0.0854, 0.1750, 0.1566, 0.1668, 0.1086\}$, evaluated using $5 \times 10^3$ samples from the model; $0.0828$ of the samples were labeled as \texttt{None}. 
CGM-relax reduces the calibration error by nearly three times, from a TV distance of $.306$ to $.101$ (\Cref{app:fig:tarflow-pareto}). 
However, the FID score increases from $15.9$ to $21.0$. CGM-reward is unsuccessful at calibrating the base model; both miscalibration and FID is roughly the same as the base model. 
Since CGM-reward remains close to the base model, we evaluate FID on the original AFHQ training dataset. 

In \Cref{app:fig:tarflow-samples}, we provide random generations from both the pre-trained and the model calibrated with CGM-relax with $\lambda = 10^{-4}$. 
By examining samples from the calibrated model, we observe two axes along which sample quality worsens after calibration. 
First, some of the samples (those labeled as \texttt{None}) are dogs or cats, which lie outside the AFHQ wildlife class. 
Second, a greater proportion of samples depict blends of multiple animals.

\begin{figure*}[!ht]
  \centering
    \includegraphics[width=0.4\textwidth]{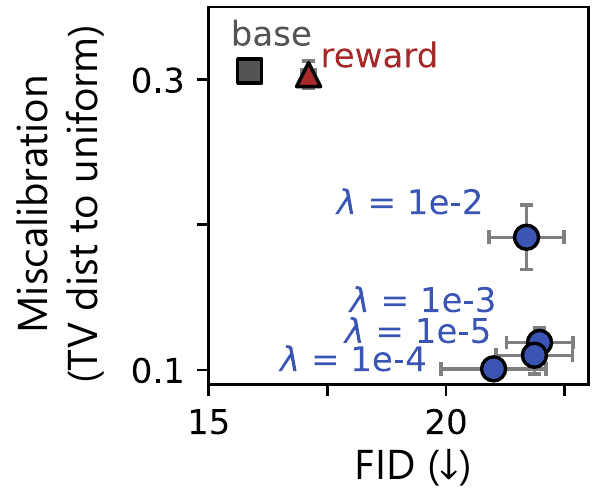}
  \caption{Calibrating TarFlow with CGM-relax reduces the TV distance of animal class labels to the uniform distribution by approximately three times. 
  However, CGM-relax also produces fewer realistic samples, as measured by FID.} 
  \label{app:fig:tarflow-pareto}
  \vspace{-3mm}
\end{figure*}

\newpage

\subsection{Calibrating \gemmaName}\label{app:subsec:stories}

\subsubsection{Autoregressive sampling and log-likelihoods}
We sample in the standard autoregressive fashion with no temperature scaling.
To compute sequence log-likelihoods, we consider the prompt as given and ignore tokens generated after the first end-of-sequence (EOS) token.
Let $m$ be the length of the prompt and $n$ the index of the first EOS token.
Then sequence $\mbx$ has log-probability
\[
    \log p_\theta(\mbx) = \sum_{i=m+1}^n \log p_\theta(\mbx(i) \mid \mbx({<}i)).
\]
For computational efficiency during training and evaluation, we set the maximum length of each story to be 200 tokens.

\subsubsection{Prompt definition}\label{app:subsec:gemma-prompt}
We used the prompt ``\texttt{Write a short story (3-5 paragraphs) which only uses very simple words that a 3 year old child would likely understand. ONLY write the story without any additional text. Try to use characters with roughly EQUAL PROBABILITIES male or female. The story begins: "Once upon a time there was a <profession> named"}'', which is similar to the one used by \citet{eldan2023tinystories} with the addition of the \texttt{EQUAL PROBABILITIES} sentence.
As shown in \Cref{app:fig:prompt_compare}, we found that this addition decreased but did not eliminate gender imbalance compared to the original prompt.  
\begin{figure*}[!h]
  \centering
    \includegraphics[width=0.9\textwidth]{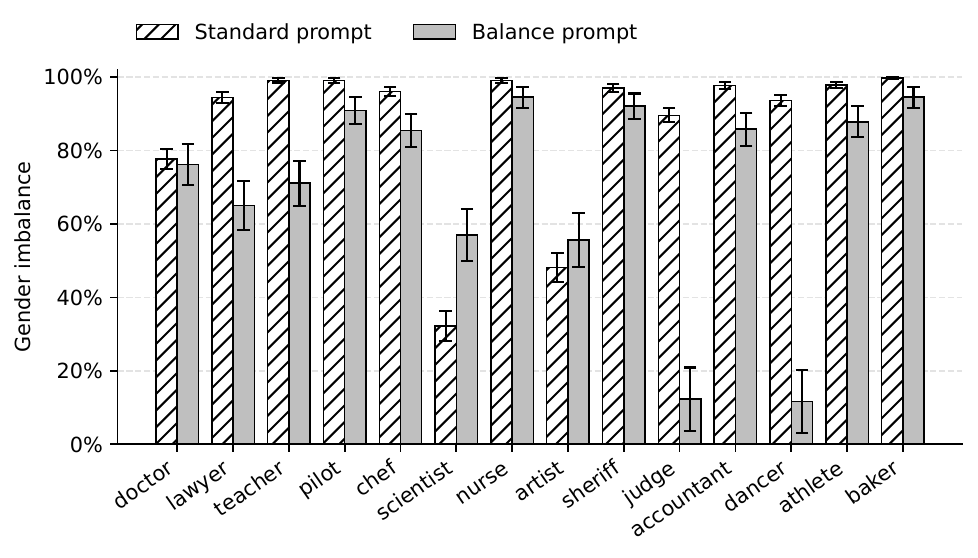}
  \caption{Gender imbalance for profession stories generated by \gemmaName with and without gender balance prompt. Values are computed using $2,048$ samples per profession. Error bars are two times standard error of the mean approximated using the binomial variance.} 
  \label{app:fig:prompt_compare}
\end{figure*}

\subsubsection{\gemmaName constraint definition}\label{app:subsec:stories_h}
To calibrate \gemmaName, we use a simple heuristic procedure to detect the gender of the story's character associated with the profession in the prompt.
The procedure returns $1$ for female, $-1$ for male, and $0$ if the gender cannot be determined.
Given a generated story, our procedure is as follows.

\textbf{(1) Pronoun at sentence two.}
If the second sentence begins with a third-person singular pronoun, we assign gender based on that pronoun.
This is common with our prompt template, e.g., ``Once upon a time there was a doctor named Sam. \textbf{She} was very kind...''.

\textbf{(2) First-sentence scan.}
If step (1) is inconclusive, we iterate through the words in the story's first sentence.
If a title (``Mr.'', ``Mrs.'', ``Miss'', ``Ms.'') appears, we assign the corresponding gender.
Otherwise, we treat each word as a potential first name and query the \texttt{gender-guesser} package \citep{gender-guesser}.
If the package classifies the token as ``male'', ``mostly male'', ``female'', or ``mostly female'', we assign the corresponding gender; otherwise we continue scanning.

\textbf{(3) No evidence.}
If no gender is detected, we assign $0$ (unknown).

We acknowledge the limitations of this simple approach but consider it sufficient for a proof of concept.

\textbf{Conditional constraint via sum-of-losses.}
We wish to satisfy the conditional calibration constraints
\begin{equation*}
    \bbE[\mbh(\mbx) \mid \text{prompt}_i] = 0, \quad \text{for $i = 1, \ldots, k$}
\end{equation*}
which encodes that the male and female labels should be balanced \emph{for each} of the $k$ professions.
We implement this as a sum of CGM losses $\sum_{i=1}^k \widehat{\mathcal{L}}_i$, where $\widehat{\mathcal{L}}_i$ is the reward or relax loss for the conditional generative model $p_\theta(\mbx \mid \text{prompt}_i)$.
During every training batch, we sample $64$ stories for each of the eight professions, resulting in a total batch size of $512$.

\subsection{LoRA details}
We used LoRA with rank 32, scaling factor $\alpha = 64$ and no LoRA dropout.
LoRA was applied to the self-attention key, query, value, and output layers and to the MLP (gate, up, down) projection layers, resulting in $1.1 \times 10^8$ trainable parameters.

\subsubsection{\gemmaName figure details}
Both panels of \Cref{fig:stories} were created using five replicates per model with different Pytorch seeds, with points indicating the mean metric value across replicates.
$2048$ stories were sampled per profession for each replicate, resulting in $14 \times 2048 = 28672$ stories per model.
Due to low variance, two-times-standard-error-of-the-mean error bars are smaller than most markers, so are not shown.

\textbf{\citet{khalifa2020distributional} baseline.}
\citet{khalifa2020distributional} use the same method as CGM-reward to define an approximate target distribution $p_{\widehat\mbalpha_N}$.
Unlike CGM-reward, they minimize the forward KL
\begin{align*}
    &\KL{p_{\widehat{\mbalpha}_N}}{p_\theta} = \bbE_{p_{\widehat{\mbalpha}_N}}\left[\log \frac{
        p_{\widehat{\mbalpha}_N}(\mbx)
    }{
        p_\theta(\mbx)
    }\right]\\
    &= \bbE_{p_\texttt{stop-grad}(\theta)}\left[\frac{
            p_{\widehat{\mbalpha}_N}(\mbx)
        }{
            p_{\texttt{stop-grad}(\theta)}(\mbx)
        }\log \frac{ p_{\widehat{\mbalpha}_N}(\mbx)}{p_\theta(\mbx)}\right] \quad (\text{change of measure})\\
    &= \bbE_{p_\texttt{stop-grad}(\theta)}\left[-\frac{
            p_{\widehat{\mbalpha}_N}(\mbx)
        }{
            p_{\texttt{stop-grad}(\theta)}(\mbx)
        }\log p_\theta(\mbx)\right] + C\\
    &= \bbE_{p_\texttt{stop-grad}(\theta)}\left[
    -\frac{
        p_{\thetabase}(\mbx)
        \exp \left\{\widehat{\mbalpha}^\top_N\mbx - A_{p_{\thetabase}}(\widehat{\mbalpha}_N)\right\}
    }{
        p_{\texttt{stop-grad}(\theta)}(\mbx)
    } \log p_\theta(\mbx)
    \right] + C
    \quad \text{(definition of $p_{\widehat{\mbalpha}_N}$)} \\
    &= K \bbE_{p_\texttt{stop-grad}(\theta)}\left[
    -\frac{
        p_{\thetabase}(\mbx)
        \exp \left\{\widehat{\mbalpha}^\top_N\mbx\right\}
    }{
        p_{\texttt{stop-grad}(\theta)}(\mbx)
    } \log p_\theta(\mbx)
    \right] + C,
\end{align*}
where $C$ and $K$ are constants that do not depend on $\theta$.
$C$ can be ignored as it has no affect on parameter gradients, and $K$ can be absorbed into the learning rate.
Similar to CGM-reward, gradients of this KL-divergence are estimated using Monte Carlo.
For a fair comparison, we use the same $\widehat{\mbalpha}_N$ and batch size to train \citet{khalifa2020distributional}, CGM-reward, and CGM-relax.

\textbf{Distance from base-model (symmetrized KL) definition.}
For each fine-tuned model, we sample $N = 2048$ stories $\{\mbx_i\}_{i=1}^{N}$ per profession, and compute log-probabilities $\log p_\theta (\mbx_i)$ and base-model log-probabilities $\log p_{\thetabase}(\mbx_i)$.
We estimate the per-profession backward KL as 
\begin{equation*}
    \KL{p_\theta}{p_{\thetabase}} \approx \frac{1}{N} \sum_{i=1}^N \log \frac{p_\theta(\mbx_i)}{p_{\thetabase}(\mbx_i)}.    
\end{equation*}
The forward KL uses importance sampling estimate 
\begin{equation*}
    \KL{p_{\thetabase}}{p_\theta} \approx \frac{1}{N}\sum_{i=1}^N \frac{p_{\thetabase}(\mbx_i)}{p_\theta(\mbx_i)} \log \frac{p_{\thetabase}(\mbx_i)}{p_\theta(\mbx_i)}.
\end{equation*}
We add our estimates for the forward and backward KL for each profession to get the symmetrized KL, then report the average symmetrized KL across all eight professions.

\textbf{Gender imbalance definition.}
For each model replicate, we compute the number of male ($\# \text{male}$) and number of female ($\# \text{female}$) characters in $2048$ samples for each profession.
The miscalibration for a single profession is defined as 
\begin{align*}
 \left| \frac{\# \text{male} - \# \text{female}}{\# \text{male} + \# \text{female}} \right|,
\end{align*}
which takes maximum value $1$ if all samples used the same gender and minimum value $0$ if there are an equal number of each gender.
The overall miscalibration values shown on the y-axis of \Cref{fig:stories}A were computed by taking the average miscalibration for the eight professions used for fine-tuning.

\textbf{Estimating KL for the max-entropy solution.}
Using $N=2048$ samples from the base-model for each profession, we compute an estimate $\widehat{\mbalpha}$ of $\mbalpha^\ast$ for each profession using the procedure outlined in \Cref{app:subsec:max-entropy-estimator}. We then compute importance weights for each sample $w_i = \exp(\widehat{\mbalpha}^\top\mbx_i)$, which we use to compute normalized weights $\widetilde{w}_i = w_i  / \sum_{j=1}^{N} w_j$.
To estimate the forward and backward KL divergences, we treat base-model probabilities as $1 / N$ and use 
\begin{equation*}
      \KL{p_{\thetabase}}{p_{\widehat{\mbalpha}}}  \approx \frac{1}{N} \sum_{i=1}^N \log \frac{1/N}{\widetilde{w}_i}, 
      \qquad 
      \KL{p_{\widehat{\mbalpha}}}{p_{\thetabase}} \approx \frac{1}{N}\sum_{i=1}^N \frac{\widetilde{w}_i}{1/N} \log\frac{\widetilde{w}_i}{1/N}.
\end{equation*}

This procedure was repeated five times for different sampling seeds to ensure variance from estimating $\mbalpha$ had little effect on the outcome.
The resulting error bars for the symmetrized KL are smaller than the marker size.

\subsubsection{Example generations}\label{app:subsec:gemma-examples}
We provide example generations for four prompts before and after fine-tuning with CGM-relax with $\lambda = 0.1$.
\lstinputlisting[caption={Samples from \gemmaName}, breaklines=true, basicstyle=\tiny]{listings/pre_samples.txt}
\vspace{-3mm}
\lstinputlisting[caption={Samples from CGM-relax ($\lambda = 0.1$) model}, breaklines=true, basicstyle=\tiny]{listings/post_samples.txt}
\end{document}